\documentclass[9.5pt,journal,compsoc]{IEEEtran}

\ifCLASSOPTIONcompsoc
  \usepackage[nocompress]{cite}
\else
  \usepackage{cite}
\fi

\ifCLASSINFOpdf
\else
\fi
\usepackage{graphicx}
\usepackage{amsmath}
\usepackage{amssymb}
\usepackage{color}
\usepackage{makecell}
\usepackage{multicol,lipsum}
\usepackage[most]{tcolorbox}
\usepackage{quiver}
\usepackage{cuted} 
\usepackage{mathrsfs}
\usepackage[stable]{footmisc}

\usepackage{listings}
\usepackage{xcolor}

\lstdefinestyle{mystyle}{
    language=Matlab,
    basicstyle=\ttfamily\small,
    keywordstyle=\color{blue},
    commentstyle=\color{green!60!black},
    stringstyle=\color{red},
    numbers=left,
    numberstyle=\tiny,
    stepnumber=1,
    numbersep=5pt,
    backgroundcolor=\color{gray!10},
    frame=single,
    breaklines=true,
    captionpos=b
}

\lstset{style=mystyle}

\newtcbtheorem{TODO}{\bfseries TODO}{enhanced,drop shadow={black!50!white},
  coltitle=black,
  top=0.3in,
  attach boxed title to top right=
  {xshift=0em,yshift=-\tcboxedtitleheight/2},
  boxed title style={size=small,colback=pink}
}{TODO}

\newtheorem{prop}{Proposition}
\newtheorem{theorem}[prop]{Theorem}

\newtheorem{lemma}[prop]{Lemma}
\newtheorem{proposition}[prop]{Proposition}
\newtheorem{example}[prop]{Example}
\newtheorem{definition}[prop]{Definition}
\newtheorem{remark}[prop]{Remark}

\newcommand{\eg}{{\textit{e.g. }}}

\newcommand{\ie}{{\textit{i.e. }}}

\newcommand{\R}{\mathbf{R}}

\newcommand{\bigzero}{\mbox{\normalfont\Huge  $0$}}
\newcommand{\bigeye}{\mbox{\normalfont\Huge  $I$}}
\newcommand{\rvline}{\hspace*{-\arraycolsep}\vline\hspace*{-\arraycolsep}}
\usepackage[pagebackref=true,breaklinks=true,colorlinks,bookmarks=false]{hyperref}

\newenvironment{proof}{\textbf{Proof:}}{\hfill$\square$}

\begin{document}

\title{Condition numbers in multiview \\ geometry,  instability in relative pose \\ estimation, and RANSAC}

\author{Hongyi Fan,
        Joe Kileel,
        and~Benjamin~Kimia%
\IEEEcompsocitemizethanks{\IEEEcompsocthanksitem Hongyi Fan and Benjamin Kimia are with the School
of Engineering, Brown University, Providence,
RI, 02912.\protect\\
\IEEEcompsocthanksitem Joe Kileel is with the Department of Mathematics and Oden Institute for Computational Engineering and Sciences at the University of Texas at Austin, Austin, TX, 78705.}%
\thanks{Kimia and Fan were supported in part by NSF awards IIS-1910530 and IIS-2312745. Kileel was supported in part by NSF awards DMS-2309782 and IIS-2312746.  Hongyi Fan and
Joe Kileel contributed equally to this work. Benjamin Kimia is the corresponding author.}}

\markboth{Condition numbers in multiview
geometry, instability in relative pose
estimation, and RANSAC}%
{Shell \MakeLowercase{\textit{et al.}}: Bare Demo of IEEEtran.cls for Computer Society Journals}

\IEEEtitleabstractindextext{%
\begin{abstract}
In this paper, we introduce a general framework for analyzing the numerical conditioning of minimal problems in multiple view geometry, using tools from computational algebra and Riemannian geometry. 
Special motivation comes from the fact that relative pose estimation, based on standard 5-point or 7-point Random Sample Consensus (RANSAC) algorithms, can fail even when no outliers are present and there is enough data to support a hypothesis. 
We argue that these cases arise due to the intrinsic instability of the 5- and 7-point minimal problems.
We apply our framework to characterize the instabilities, both in terms of the world scenes that lead to infinite condition number, and directly in terms of ill-conditioned image data. 
The approach produces computational tests for assessing the condition number before solving the minimal problem.
Lastly, synthetic and real data experiments suggest that RANSAC serves not only to remove outliers, but in practice it also selects for well-conditioned image data, which is consistent with our theory.
\end{abstract}

\begin{IEEEkeywords}
multiview geometry, minimal problems, structure-from-motion, robust estimation, condition numbers, discriminants
\end{IEEEkeywords}}

\maketitle

\IEEEdisplaynontitleabstractindextext

\IEEEpeerreviewmaketitle

\IEEEraisesectionheading{\section{Introduction}\label{sec:introduction}}

\IEEEPARstart{T}{he} past two decades have seen an explosive growth of multiview geometry applications such as the reconstruction of 3D object models for use in
video games~\cite{Ablan:3DPhoto:book},
film~\cite{Kitagawa:Mocap:book},
archaeology~\cite{pollefeys2001image},
architecture~\cite{Luhmann:Photogrammetry:book}, and urban modeling (\eg Google
Street View); match-moving in augmented reality and cinematography for mixing virtual content and real video~\cite{Dobbert:Matchmoving:book}; the organization of a collection of photographs with respect to a scene known as Structure-from-Motion \cite{myozyecsil2017survey};
robotic manipulation~\cite{Horn:Robot:Vision}; and meteorology from cameras in
automobile manufacture and autonomous driving~\cite{Luhmann:Photogrammetry:book}. One key building block of a multiview system is the relative pose estimation of two cameras~\cite{hartleyzisserman,szeliski2010computer}. A methodology that is dominant in applications is RANSAC~\cite{raguram2008comparative}.  This forms hypotheses from a few randomly selected correspondences in two views, \eg 5 in calibrated camera pose estimation~\cite{nister:PAMI:2004} and 7 in uncalibrated camera pose estimation~\cite{stewart1999robust, myseitz2006comparison}, and validates these hypotheses using the remaining putative correspondences. 
A commonly understood reason for practitioners to use RANSAC is its robustness against outliers \cite{barath2020magsacpp, mishkin,brachmann2019neural}. 
The pose of multiple cameras can then be recovered either by incremental ~\cite{schoenberger2016sfm} or global ~\cite{kasten2019algebraic} Structure-from-Motion methods. 
This approach has been quite successful in many applications. 
\begin{figure}[t]
    \centering
    (a) \includegraphics[width=0.4\linewidth]{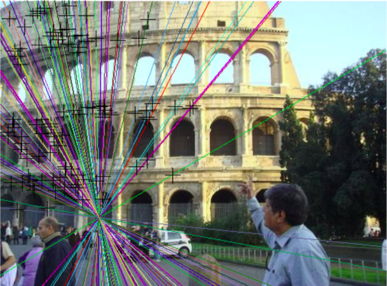}  \includegraphics[width=0.4\linewidth]{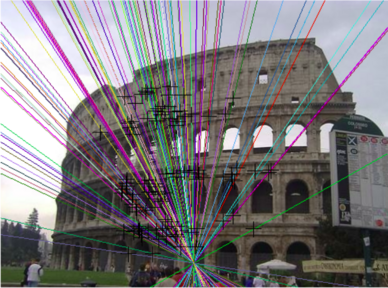} \\
    (b) \includegraphics[width=0.4\linewidth]{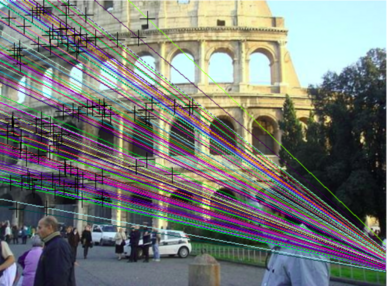}  \includegraphics[width=0.4\linewidth]{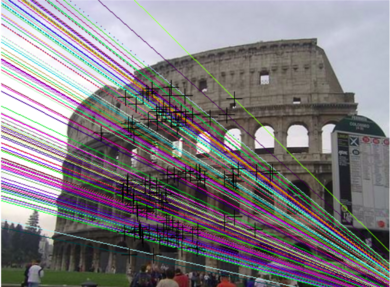}
    \caption{Typical relative pose estimation can fail catastrophically, even with a large number of correspondences (100 correspondences shown in the figure), all of which are inliers. (a) Ground-truth epipolar geometry. (b) Erroneous estimated epipolar geometry, from the 7-point algorithm and LO-RANSAC which also uses local refinement \cite{chum2003locally}. The prime cause of such failure is numerical instability as shown in this paper. }
    \label{fig:Teaser}
\end{figure}

However, there are a non-negligible number of scenarios where the RANSAC-based approach fails, for instance in producing the relative pose between two cameras. 
As an example, when the number of candidate correspondences drops to say $50$ to $100$ correspondences, as is the case for images of homogeneous and low textured surfaces, the pose estimation process  often fails. 
It is curious why the estimation should fail, even if only a modicum of correspondences are available: after all RANSAC can select $5$ from $50$ in $\binom{50}{5} \approx 2.1$ million combinations for the 5-point problem.  Thus there are plenty of veridical correspondences available if the ratio of outliers is low. 
In an experiment with {\em no outliers}, either with synthetic data (see Section~\ref{sec:experimentals}) or manually-cleaned real data (see Figure~\ref{fig:Teaser}), the process can still frequently fail! 
This is mysterious, unless the role of RANSAC goes beyond weeding out the outliers. 
Indeed, this paper argues that a key role for RANSAC is to \textit{stabilize} (or \textit{precondition}) the estimation process, without denying its importance in filtering outliers. 
We will show that the process of estimating pose from a minimal problem is typically unstable to realistic levels of noise, with a gradation of instability depending on the specific choice of $5$ or $7$ points. 
One role of RANSAC then is to find better-conditioned image data by integrating non-selected correspondences: 
if a large number of non-selected correspondences agree with the estimate, then the hypothesis is both free of outliers and -- perhaps as importantly -- it is stable to typical image noise. 

Motivated by relative pose estimation, this paper inspects the general issue of instability in minimal problems in multiview geometry.  
We develop a broadly applicable framework that connects the conditioning of world scene configurations to the condition number of the inverse of the Jacobian matrix of the forward map, and relate this directly to the image data  (Section~\ref{sec:examples}).  
Applying the framework, we give condition number formulas for the $5$-point and $7$-point minimal problems (Propositions~\ref{prop:formula-E} and~\ref{prop:formula-F}). 
Further, we characterize when the condition number is infinite for the $5$- and $7$-point minimal problems,  in terms of the world scene (Theorems~\ref{thm:illposed-world} and~\ref{thm:F-ill-posed-world}) and image data (Theorems~\ref{thm:image-E} and~\ref{thm:F-image}). 
  
Note that the notion of ``critical world scenes" has appeared extensively in the multiview geometry literature already, \eg \cite{krames1941ermittlung,kahl2002critical, maybank2012theory,hartleyzisserman,kahl2001critical, braatelund2024critical,bertolini2007instability,bertolini2022smooth,snapshot} and the corresponding constraint on the images, named as ``degenerate image points" was also investigated~\cite{torr1998robust,bertolini2019critical}. These concepts are defined as follows.

\begin{definition} \label{def:degen} (\eg  \cite[Sec.~2.1]{torr1998robust}) Consider a (not necessarily minimal) multiview geometry estimation problem.  A configuration of noiseless image data is called \textbf{degenerate} if the corresponding epipolar geometry 
is not unique.
\end{definition}
\begin{definition} \label{def:critical} (\eg {\cite{kahl2001critical}})
Consider a (not necessarily minimal) multiview geometry estimation problem. 
A world scene is called \textbf{critical} if it maps to degenerate image data.
\end{definition}

We stress that the ``degenerate" and ``critical" concepts considered previously are fundamentally different from what is studied here. Prior work has mostly on non-uniqueness: that is, configurations of cameras and many world points where the image data does not determine the world scene, despite there being more than a minimal set of measurements. 
By contrast, we consider minimal problems, where usually there are multiple solutions. We quantify the stability of these solutions.  Breakdown of stability occurs at ill-posed world scenes (Definition~\ref{def:ill-W}) and ill-posed image data (Definition~\ref{def:ill-X}), which differ from critical world scenes and degenerate image data.
The results here include characterizations in image space, where the data observations occur, and are not just in terms of the world scene. We remark that prior work has qualitatively shown  instability occurs when the corresponding points are near the critical loci~\cite{bertolini2020critical} or the corresponding points are too close to each other~\cite{zhang1998determining}, but not fully characterized the instability of minimal problems nor provided a general framework. 
See Section~\ref{sec:relationship} for further comparisons with the  literature.

In addition to providing  new theory, this paper proposes a computational method to evaluate the stability of a given minimal image correspondence set.  
The proposed method is general, although we only develop it here for relative pose estimation.  
Our idea is to measure the distance from one image point to a certain  ``ill-posed curve" on the image plane, computed using the other point correspondences (Section~\ref{sec:ImageData}). 
The distance is a means of gauging the stability of the data, before solving the minimal problem.
We present procedures to compute the ill-posed curve for $5$-point and $7$-point minimal problems, using methods from numerical and symbolic algebra (Section~\ref{sec:compute-curve}). 
Experimental results on synthetic and real data show that the measure accurately captures the stability of minimal problems  (Section~\ref{sec:experimentals}).  
Further we find that in practice, poor conditioning presents a  challenge to RANSAC, and RANSAC exhibits a bias for selecting image data well-separated from the ill-posed curve.  

This paper extends our previous conference work~\cite{fan2022instability} with significant improvements. We introduce a new method for computing the ill-posed curve for the $7$-point problem, achieving a $100 \times$ speedup and enabling real-time computations (Section~\ref{sec:symbolic-curve}). The framework is expanded (Section~\ref{sec:framework}) by incorporating a condition number for image data and introducing computational algebra tools (Section~\ref{sec:comp-ag}), making our approach more broadly applicable to other minimal problems. 
We also compare our instability analysis with traditional critical configuration methods (Section~\ref{sec:relationship}) and enhance Section~\ref{sec:examples} with a rigorous quotient manifold construction for world scenes. 
New numerical experiments (Section~\ref{sec:experimentals}) further bridge theory and practice, and indicate that instability also plagues estimation schemes that use local refinement on top of RANSAC. 
Lastly, the overall presentation is substantially refined.

\section{Minimal Problems for Relative Pose Estimation}
\label{sec:examples}
In this section, we review the classic $5$- and $7$-point minimal problems for relative pose estimation, which serve as key motivating examples for our general framework of stability analysis.
We formulate the problems in a way that fits into the general framework presented in Section~\ref{sec:framework}.  Terms appearing in bold will be reused there.

\subsection{Essential matrices and the 5-point problem} \label{example:essential}
This is the minimal problem for computing the essential matrix between two calibrated cameras given $5$ point matches.

Formally, we regard the 5-point problem as described by $3$ spaces and $2$ maps between the spaces.  Firstly, we define world scenes to be tuples of $2$ calibrated cameras with distinct centers whose projection matrices are $P$ and $\bar{P}$, and $5$ world points $\Gamma_1$, ..., $\Gamma_5$, up to a global change of world coordinates.  
Equivalence up to change of world coordinates is made precise using the quotient manifold theorem  \cite[Thm.~21.10]{lee2013smooth}.
Specifically, let $\mathcal{W}$ be the \textbf{world scene space}:
\begin{multline} \label{eq:W-essential}
\!\!\!\!\!\! \mathcal{W} = \big{\{} (P, \bar{P}, \Gamma_1, \ldots, \Gamma_5) \in \mathcal{C}^{\times 2} \times (\mathbb{R}^3)^{\times 5} \! \! : 
\operatorname{ker}(P) \neq \operatorname{ker}(\bar{P}), \\ 
\,\,\,\,\,\,\,\,\,\,\, (P \, (\Gamma_i; \, 1 ))_3 \neq 0 \text{ and } (\bar{P} (\Gamma_i;  1 ))_3 \neq 0  \text{ for all } i  \big{\}}  \big{/} \, G \\[0.2em]
 = \{ (P, \bar{P}, \Gamma_1, \ldots, \Gamma_5) \text{ mod } G \}, \hspace{3.1cm}
\end{multline}
where  
\begin{equation*}
\mathcal{C} = \{P \in \mathbb{P}(\mathbb{R}^{3 \times 4}) : \exists \, \mathbf{R} \in \operatorname{SO}(3), \mathbf{T} \in \mathbb{R}^3 \text{ s.t. } P = (\mathbf{R} \,\,  \mathbf{T}
)\}
\end{equation*}
consists of calibrated cameras, and $\Gamma_i \in \mathbb{R}^3$ $(i = 1, \ldots, 5)$ are the $3$D points, $\operatorname{ker}(P)$ represents the kernel of the projection matrix, $\operatorname{ker}(P) \neq \operatorname{ker}(\bar{P})$ limits the definition to cameras with distinct centers, and $(\cdot)_3$ means taking the third coordinate of the vector so that the conditions $(\Gamma_i; \, 1 ))_3 \neq 0 \text{ and } (\bar{P} (\Gamma_i;  1 ))_3 \neq 0$  indicate that the world points do not project to points at infinity.
The group in \eqref{eq:W-essential} is
{\small
\begin{align*}
G = \{ g \in \mathbb{P}(\mathbb{R}^{4 \times 4}) : \exists \, \mathbf{R} \in \operatorname{SO}(3), \mathbf{T} \in \mathbb{R}^3, \lambda \in \mathbb{R} \setminus \{0\} \\ \text{ such that } g = \begin{pmatrix} \mathbf{R} & \mathbf{T} \\ 0 & \lambda \end{pmatrix} \},
\end{align*}
}
\!\!\!\! i.e., world transformations preserving calibration.  The group $G$ acts on 
$\mathcal{C}^{\times 2} \times (\mathbb{R}^3)^{\times 5}$ via
\begin{equation}\label{eq:ess-action}
g \cdot (P, \bar{P}, \Gamma_1, \ldots) := (Pg^{-1}, \bar{P}g^{-1},  \tfrac{1}{\lambda} (\mathbf{R} \Gamma_1 + \mathbf{T}), \ldots).
\end{equation}
Further, in \eqref{eq:W-essential} ``mod $G$" is shorthand for the orbit under the group action.  
Altogether, the space $\mathcal{W}$ collects together the possible world scenes in the 5-point problem.

\begin{proposition} \label{prop:quotient-ess}
The action of $G$ on the subset of $\mathcal{C}^{\times 2} \times (\mathbb{R}^3)^{\times 5}$ in \eqref{eq:W-essential} is smooth, free and proper.  Therefore the quotient  $\mathcal{W}$ is canonically a smooth manifold of dimension $20$.
\end{proposition}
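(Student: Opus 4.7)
The plan is to invoke the Quotient Manifold Theorem \cite[Thm.~21.10]{lee2013smooth} on the subset $X \subset \mathcal{C}^{\times 2} \times (\mathbb{R}^{3})^{\times 5}$ cut out by the two open conditions in \eqref{eq:W-essential}. As preliminaries, a rescaling $(\mathbf{R}\,\,\mathbf{T}) \mapsto \mu(\mathbf{R}\,\,\mathbf{T})$ by $\mu \in \mathbb{R}\setminus\{0\}$ keeps the left block in $\operatorname{SO}(3)$ only when $\det(\mu\mathbf{R}) = \mu^{3} = 1$, forcing $\mu = 1$. Hence $\mathcal{C}$ is diffeomorphic to $\operatorname{SO}(3) \times \mathbb{R}^{3}$ and, by the same token, $G$ is a Lie group of dimension $7$ parametrized by $(\mathbf{R},\mathbf{T},\lambda) \in \operatorname{SO}(3) \times \mathbb{R}^{3} \times (\mathbb{R}\setminus\{0\})$, so the expected quotient dimension is $(2\cdot 6 + 5\cdot 3) - 7 = 20$. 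Smoothness of the action \eqref{eq:ess-action} is immediate from its polynomial-rational form; invariance of $X$ follows since the third coordinate of $P(\Gamma_{i};1)$ transforms by a factor of $\lambda^{-1}$ and distinctness of camera centers is preserved.

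For freeness, suppose $g \in G$ with parameters $(\mathbf{R},\mathbf{T},\lambda)$ fixes $(P,\bar{P},\Gamma_{1},\ldots,\Gamma_{5}) \in X$. Writing $P = (\mathbf{R}_{P}\,\,\mathbf{T}_{P})$, a direct computation gives $Pg^{-1} = (\mathbf{R}_{P}\mathbf{R}^{T}\,\,\lambda^{-1}(\mathbf{T}_{P} - \mathbf{R}_{P}\mathbf{R}^{T}\mathbf{T}))$. Since both sides of $Pg^{-1} = P$ already lie in $\operatorname{SO}(3) \times \mathbb{R}^{3}$ there is no projective slack, so the rotation block forces $\mathbf{R} = I$, and the translation part then reduces to $\mathbf{T} = -(1-\lambda)C_{P}$, where $C_{P} = -\mathbf{R}_{P}^{T}\mathbf{T}_{P}$ is the camera center of $P$. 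The analogous computation for $\bar{P}$ yields $\mathbf{T} = -(1-\lambda)C_{\bar{P}}$. Since $\operatorname{ker}(P) \neq \operatorname{ker}(\bar{P})$ is equivalent to $C_{P} \neq C_{\bar{P}}$, I conclude $\lambda = 1$ and $\mathbf{T} = 0$, so $g$ is the identity. Notably this argument uses only the two cameras, not the world points.

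For properness, I would use the sequential characterization: given $x_{n} \to x$ and $g_{n} \cdot x_{n} \to y$ in $X$, produce a subsequence of $g_{n}$ converging in $G$. Setting $P_{n}' := P_{n}g_{n}^{-1}$, the rotation part of the camera action gives $\mathbf{R}_{n} = \mathbf{R}_{P_{n}'}^{T}\mathbf{R}_{P_{n}}$, so $\mathbf{R}_{n}$ converges in $\operatorname{SO}(3)$ automatically. Equating the two expressions for $\mathbf{T}_{n}$ obtained from $P$ and from $\bar{P}$ yields the identity $\lambda_{n}(C_{\bar{P}_{n}'} - C_{P_{n}'}) = \mathbf{R}_{\bar{P}_{n}'}^{T}\mathbf{T}_{\bar{P}_{n}} - \mathbf{R}_{P_{n}'}^{T}\mathbf{T}_{P_{n}}$, whose vector coefficient converges to $C_{\bar{P}'} - C_{P'} \neq 0$; hence $\lambda_{n} \to \lambda_{\infty} \in \mathbb{R}$ and $\mathbf{T}_{n} \to \mathbf{T}_{\infty}$.

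The main obstacle is ruling out $\lambda_{\infty} = 0$, which would prevent $g_{n}$ from converging in $G$. I would handle this by symmetry: applying the identical argument to the inverse sequence $g_{n}^{-1}$ acting on $g_{n} \cdot x_{n} \to y$ (producing $x_{n} \to x$) shows that $1/\lambda_{n}$ is likewise bounded, now using $C_{\bar{P}} \neq C_{P}$. This sidesteps a more delicate direct analysis (which would need to cope with potentially coincident world points) by relying only on the camera-center hypothesis. With smoothness, freeness, and properness all verified, \cite[Thm.~21.10]{lee2013smooth} concludes that $\mathcal{W}$ is a smooth manifold of dimension $20$.
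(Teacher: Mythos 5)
Your proof is correct and reaches the same conclusion as the paper, but the route differs in an interesting way, particularly for properness.

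For freeness, you argue directly that every point has trivial stabilizer, using the explicit formula $Pg^{-1} = (\mathbf{R}_P\mathbf{R}^{\top}\,\,\lambda^{-1}(\mathbf{T}_P - \mathbf{R}_P\mathbf{R}^{\top}\mathbf{T}))$ and the observation that a scalar $\mu$ keeping a block in $\operatorname{SO}(3)$ must satisfy $\mu^3 = 1$. The paper instead normalizes to a representative of the orbit with $P = (I\,\,0)$ and $\bar P$ having nonzero fourth column, then reads off $\mathbf{R} = I$, $\mathbf{T} = 0$, $\lambda = 1$ from $P = Pg$ and $\bar P = \bar P g$. These are essentially equivalent; yours avoids the ``each orbit contains a normalized point'' step at the cost of slightly more explicit algebra.

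For properness, the divergence is genuine. The paper works with the Bourbaki characterization (``$\varphi(v,g) = (v, g\cdot v)$ is closed with compact fibers''), proves that $\varphi$ is a homeomorphism onto its image, identifies $\operatorname{Im}(\varphi)$ as the closed set of camera pairs with matching essential matrix, matching relative rotation, and matching image data --- invoking \cite[Result~9.19]{hartleyzisserman} for the reverse containment --- and then uses continuity of the pseudoinverse to solve for $g$ continuously. You instead invoke the sequential characterization of proper Lie group actions (\cite[Prop.~21.5]{lee2013smooth}): given $x_n \to x$ and $g_n \cdot x_n \to y$, you recover $\mathbf{R}_n$ from the camera rotation blocks (convergent since $\operatorname{SO}(3)$ is compact and the formula is continuous), then recover $\lambda_n$ and $\mathbf{T}_n$ by subtracting the two translation equations, using that $C_{\bar P'} - C_{P'} \neq 0$ so the coefficient of $\lambda_n$ is bounded away from zero. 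The symmetry trick --- running the same argument on $g_n^{-1}$ with the roles of $x$ and $y$ swapped, exploiting that $C_{\bar P} - C_P \neq 0$ as well --- cleanly rules out $\lambda_\infty = 0$; indeed $\lambda_n$ and $1/\lambda_n$ both converge, forcing $\lambda_\infty \neq 0$. Your approach is more elementary and self-contained (no appeal to \cite[Result~9.19]{hartleyzisserman}), and it correctly isolates that only the distinct-camera-center hypothesis is needed, not the world points. The paper's approach, while heavier, yields the characterization of $\operatorname{Im}(\varphi)$ as a by-product, which may be of independent interest for the multiview geometry it is embedded in. Both proofs are valid.
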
 
Proposition~\ref{prop:quotient-ess} is proven in the appendices.

The quotient space construction in Proposition~\ref{prop:quotient-ess} is nice because it is intrinsic.  This means we may use any coordinate system and our conclusions should not be affected. 
That said, we  use the following coordinates (here  $\mathbb{S}^2 \subseteq \mathbb{R}^3$ is the unit sphere, consisting of unit-norm translations $\hat{\mathbf{T}}$): 

\begin{lemma}\label{lem:double}
Let $\mathcal{U} = \{(\mathbf{R}, \hat{\mathbf{T}}, \Gamma_1, \ldots, \Gamma_5)\}$ be the open subset of  $\operatorname{SO}(3) \times \,  \mathbb{S}^2 \times (\mathbb{R}^3)^{\times 5}$ where $(\Gamma_i)_3 \neq 0$ and  $(\mathbf{R}\Gamma_i + \hat{\mathbf{T}})_3 \neq 0$ for $i = 1, \ldots, 5$.  
Then the map $\mathcal{U} \rightarrow \mathcal{W}$ sending $(\mathbf{R}, \hat{\mathbf{T}}, \Gamma_1, \ldots, \Gamma_5)$ to $((I \,\, 0), (\mathbf{R} \,\, \hat{\mathbf{T}} ), \Gamma_1, \ldots, \Gamma_5) \, \text{mod } G$ is a smooth double cover of $\mathcal{W}$ (as in \cite[Ch.~4]{lee2013smooth}).
\end{lemma}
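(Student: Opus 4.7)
The plan is to verify the four properties that together define a smooth double cover: smoothness of the map, surjectivity, exactly two preimages over every point, and local diffeomorphism. I work directly with $G$-representatives, relying on Proposition~\ref{prop:quotient-ess} for the manifold structure on $\mathcal{W}$ and for the smoothness of the quotient projection $\pi$.

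For smoothness, note that the map factors as the evident smooth inclusion $(\mathbf{R},\hat{\mathbf{T}},\Gamma_i) \mapsto ((I\,\,0),(\mathbf{R}\,\,\hat{\mathbf{T}}),\Gamma_i)$ of $\mathcal{U}$ into the subset of $\mathcal{C}^{\times 2} \times (\mathbb{R}^3)^{\times 5}$ appearing in \eqref{eq:W-essential}, composed with $\pi$; the open conditions defining $\mathcal{U}$ are exactly what is needed to land in that open subset. For surjectivity, start with any representative $((\mathbf{R}_1\,\,\mathbf{T}_1),(\mathbf{R}_2\,\,\mathbf{T}_2),\Gamma_i)$ of a world scene with $\mathbf{R}_k \in \operatorname{SO}(3)$; acting by $g_0 = \bigl(\begin{smallmatrix} \mathbf{R}_1 & \mathbf{T}_1 \\ 0 & 1 \end{smallmatrix}\bigr)$ normalizes the first camera to $(I\,\,0)$ and sends the second to $(\mathbf{R}'\,\,\mathbf{T}')$ with $\mathbf{T}' \neq 0$ (the assumption $\ker P \neq \ker \bar P$ forces a nonzero translation once $P = (I\,\,0)$), and then acting by $g_1 = \bigl(\begin{smallmatrix} I & 0 \\ 0 & \|\mathbf{T}'\| \end{smallmatrix}\bigr)$ rescales $\mathbf{T}'$ to unit length and places the tuple in the image of $\mathcal{U}$.

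For the fiber count, suppose $(\mathbf{R},\hat{\mathbf{T}},\Gamma_i)$ and $(\mathbf{R}',\hat{\mathbf{T}}',\Gamma_i')$ in $\mathcal{U}$ determine the same world scene, so that some $g \in G$ carries one triple to the other. The equation $(I\,\,0)g^{-1} \sim (I\,\,0)$, combined with the canonical form $\mathbf{R}_g \in \operatorname{SO}(3)$, forces $g = \bigl(\begin{smallmatrix} I & 0 \\ 0 & \lambda \end{smallmatrix}\bigr)$ for some $\lambda \in \mathbb{R} \setminus \{0\}$; under such $g$, the second camera and world points transform as $(\mathbf{R},\hat{\mathbf{T}}) \mapsto (\mathbf{R},\hat{\mathbf{T}}/\lambda)$ and $\Gamma_i \mapsto \Gamma_i/\lambda$. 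Requiring $\hat{\mathbf{T}}/\lambda \in \mathbb{S}^2$ forces $\lambda = \pm 1$, so each fiber is precisely $\{(\mathbf{R},\hat{\mathbf{T}},\Gamma_i),\,(\mathbf{R},-\hat{\mathbf{T}},-\Gamma_i)\}$.

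Finally, since $\dim \mathcal{U} = 3 + 2 + 15 = 20 = \dim \mathcal{W}$, it suffices to show that $\mathcal{U}$ is transverse to each $G$-orbit at its points, which will make $\pi|_{\mathcal{U}}$ a local diffeomorphism. An orbit-tangent vector at $((I\,\,0),(\mathbf{R}\,\,\hat{\mathbf{T}}),\Gamma_i)$ comes from a Lie algebra element $(A,b,c) \in \mathfrak{so}(3) \times \mathbb{R}^3 \times \mathbb{R}$, and a direct differentiation of the action formula yields components $(-A,-b)$ on the first camera, $(-\mathbf{R} A,\,-\mathbf{R} b - c\hat{\mathbf{T}})$ on the second, and $A\Gamma_i + b - c\Gamma_i$ on each world point. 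Requiring this vector to lie in $T\mathcal{U}$ forces the first-camera component to vanish, giving $A = 0$ and $b = 0$; the translation component of the second camera then reduces to $-c\hat{\mathbf{T}}$, which is tangent to $\mathbb{S}^2$ only when $c = 0$, so the intersection is trivial. The main delicacy I anticipate is bookkeeping around projective scaling in $G$: every step must respect that $G$-elements are defined only up to a nonzero global scalar, and it is precisely the residual sign ambiguity ($|\lambda| = 1$) in the $(I\,\,0)$-stabilizer, interacting with the unit-norm constraint on $\hat{\mathbf{T}}$, that produces the double-cover phenomenon.
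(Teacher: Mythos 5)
Your proof is correct and follows essentially the same strategy as the paper's: factor the map through the quotient, check surjectivity by normalizing a representative, check exact two-to-one fibers via the stabilizer of $(I\,\,0)$, and check the local-diffeomorphism property by showing that the $G$-orbit tangent space meets the image of the parameterization's differential trivially and then applying the dimension count. Two small points worth making explicit: after normalizing you should note that the defining conditions of $\mathcal{U}$ are preserved by the $G$-action (so the normalized representative really lands in the image of $\mathcal{U}$), and you should close the loop by citing the fact that a surjective local diffeomorphism with constant finite fiber cardinality between manifolds is a covering map, which the paper does via \cite[Cor.~4.13]{lee2013smooth}.
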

Lemma~\ref{lem:double} is proven in the appendices.  
The map $\mathcal{U} \rightarrow \mathcal{W}$ in Lemma~\ref{lem:double} identifies $(\mathbf{R}$, $\hat{\mathbf{T}}$, $\Gamma_1$, $\ldots$, $\Gamma_5)$ and $(\mathbf{R}$, $-\hat{\mathbf{T}}$, $-\Gamma_1$, $\ldots$, $-\Gamma_5)$, which is a well-known ambiguity in the 5-point problem \cite[Sec.~9.6.2]{hartleyzisserman}.  

Next up, define $\mathcal{X}$ as the relevant \textbf{image data space}: 
\begin{equation}
   \mathcal{X} = \left( \mathbb{R}^2 \times \mathbb{R}^2 \right)^{\times 5} = \{(\gamma_1, \bar{\gamma}_1), \ldots, (\gamma_5, \bar{\gamma}_5)\}.
\end{equation}
This consists of $5$-tuples of image point pairs.  
Here 
\begin{equation*}
\gamma_i \in \mathbb{R}^2 \quad \text{and} \quad \bar{\gamma_i} \in \mathbb{R}^2 \quad \quad (i = 1, \ldots, 5)
\end{equation*}
denote corresponding points in the two images.   The space $\mathcal{X}$ collects together all possible image data in the 5-point problem.

Thirdly, define $\mathcal{Y}$ as the relevant \textbf{output space}:
\begin{align} \label{eq:essential-ideal}
     \mathcal{Y} &= \{E \in \mathbb{P}(\mathbb{R}^{3 \times 3}) : E \text{ is an essential matrix}\}  \nonumber \\ 
     &= \{E \in \mathbb{P}(\mathbb{R}^{3 \times 3}): 2EE^{\top}\!E - \operatorname{tr}(EE^{\top}\!)E  = 0, \nonumber  \det(E)=0 \} \nonumber \\
    &= \{E \in \mathbb{P}(\mathbb{R}^{3 \times 3}) : \sigma_1(E) = \sigma_2(E) > \sigma_3(E) = 0\}.
\end{align}
where $\sigma_i$ ($i = 1,2,3$) denotes the three singular values of the essential matrix. 
The output space $\mathcal{Y}$ consists of the quantities of interest that we solve for in the 5-point problem, \ie essential matrices.    
It is well-known that essential matrices can be characterized in terms of the vanishing of ten cubic equations  \cite{demazure1988deux}, or in terms of singular values, as  in \eqref{eq:essential-ideal}.

As one of two relevant maps in the 5-point problem, define $\Phi$ as the  \textbf{forward map} from world scenes to image data:
\begin{align} \label{eq:E-forward}
& \Phi(\R,\hat{\mathbf{T}},\Gamma_1,\ldots, \Gamma_5 ) = ((\gamma_1, \bar{\gamma}_1), \ldots, (\gamma_5, \bar{\gamma}_5)), \\
& \,\, \text{for }  \gamma_i = \pi(\Gamma_i) \, \text{ and } \, \bar{\gamma}_i = \pi(\R\Gamma_i + \hat{\mathbf{T}}) \nonumber \\
& \,\, \text{where }  \pi(x,y,z) = (x/z, y/z) \, \text{ is perspective projection.} \nonumber
\end{align}
Since $(\mathbf{R}, \hat{\mathbf{T}}, \Gamma_1, \ldots, \Gamma_5)$ and $(\mathbf{R}, -\hat{\mathbf{T}}, -\Gamma_1, \ldots, -\Gamma_5)$ have the same image in \eqref{eq:E-forward}, $\Phi$ is well-defined on $\mathcal{W}$.  The forward map formalizes data generation in the 5-point problem.

As a second map, define $\Psi$ to be the relevant \textbf{output map} from world scenes to the quantities of interest:
\begin{equation}\label{eq:E-psi-map}
    \Psi(  \R, \hat{\mathbf{T}}, \Gamma_1, \ldots, \Gamma_5 ) = E = [\hat{\mathbf{T}}]_{\times} \R  \in \mathbb{P}(\mathbb{R}^{3 \times 3}).
\end{equation}
Here $[\hat{\mathbf{T}}]_{\times} \in \mathbb{R}^{3 \times 3}$
is the skew symmetric matrix representation of  cross product with  $\hat{\mathbf{T}} \in \mathbb{R}^3$ \cite[Sec.~9.6]{hartleyzisserman}. 

The 5-point \textbf{minimal problem} is given by the tuple $\mathscr{M} = (\mathcal{W}, \mathcal{X}, \mathcal{Y}, \Phi, \Psi)$.  The computational task of the problem is to compute outputs $\Psi(\Phi^{-1}(\cdot))$ in $\mathcal{Y}$ for different instances of image data in $\mathcal{X}$.
The output $\Psi(\Phi^{-1}(\cdot))$ consists of up to $10$ essential matrices in $\mathcal{Y}$. 
\cite[Part~II]{hartleyzisserman} shows these are the essential matrices satisfying the \textbf{epipolar constraints}:
\begin{equation}\label{eq:E-epipolar}
(\bar{\gamma}_i; 1)^{\top} E \, (\gamma_i; 1) = 0 \quad \text{ for } i =1, \ldots, 5.
\end{equation}
The essential matrices can be found by solvers based on computing the real roots of a degree $10$ univariate polynomial \cite{nister:PAMI:2004}.  
In this paper our concern is not with specific solvers for the 5-point problem, but with the intrinsic numerical conditioning of the problem: How sensitive are the essential matrices to perturbations in the image data? Which world scenes correspond to ill-conditioned image data?  
How can we detect directly from the image data that a problem instance is poorly conditioned? 
After presenting a general analysis framework in Section~\ref{sec:framework}, we answer these questions for the 5-point problem in Section~\ref{sec:main-results}.

\subsection{Fundamental matrices and the 7-point problem}
 \label{example:fundamental}
The other minimal problem analyzed in depth in this paper is the $7$-point problem.  This is the minimal problem for computing the fundamental matrix between two uncalibrated cameras given $7$ point matches.

Formally, we regard the 7-point problem as described by $3$ spaces and $2$ maps between the spaces, similarly to our setup of the 5-point problem.
We define the \textbf{world scene space} $\mathcal{W}$ as a quotient:
\begin{multline} \label{eq:W-fundamental}
\!\!\!\!\!\! \mathcal{W} = \big{\{} (P, \bar{P}, \tilde{\Gamma}_1, \ldots, \tilde{\Gamma}_7) \in \mathcal{C}^{\times 2} \times (\mathbb{P}_{\mathbb{R}}^3)^{\times 7} \! \! : 
\operatorname{ker}(P) \neq \operatorname{ker}(\bar{P}), \\ 
\,\,\,\,\,\,\,\,\,\,\,\,\,\,\,\, (P \, \tilde{\Gamma}_i)_3 \neq 0 \text{ and } (\bar{P} \tilde{\Gamma}_i)_3 \neq 0  \text{ for all } i  \big{\}} \! \Big{/} \! \operatorname{PGL}(4) \\[0.2em]
  = \{ (P, \bar{P}, \tilde{\Gamma}_1, \ldots, \tilde{\Gamma}_7) \text{ mod } \operatorname{PGL}(4) \}. \hspace{2.4cm}
\end{multline} 
Like in the 5-point problem, $\operatorname{ker}(P) \neq \operatorname{ker}(\bar{P})$ ensures that the two cameras have distinct centers, and the other inequations ensure that none of the world points project to infinity in either image. Here
\begin{equation*}
\mathcal{C} = \{ P \in \mathbb{P}(\mathbb{R}^{3 \times 4}) : \operatorname{rank}(P) = 3\}
\end{equation*}
stands for projective cameras, and 
$
\tilde{\Gamma}_i \in \mathbb{P}_{\mathbb{R}}^3
$
are projective world points.
Also,  $$\operatorname{PGL}(4) = \{ g \in \mathbb{P}(\mathbb{R}^{4 \times 4}) : \det(g) \neq 0\}$$ is the group of projective world transformations.  The group $G$ acts on $\mathcal{C}^{\times 2} \times (\mathbb{P}_{\mathbb{R}}^{3})^{\times 7}$ in \eqref{eq:W-fundamental} via
\begin{equation} \label{eq:action}
g \cdot (P, \bar{P}, \tilde{\Gamma}_1, \ldots, \tilde{\Gamma}_7) = (Pg^{-1}, \bar{P}g^{-1}, g \tilde{\Gamma}_1, \ldots, g \tilde{\Gamma}_7).
\end{equation}
The world scene space $\mathcal{W}$ collects together the possible world scenes in the 7-point problem.   

\begin{proposition}
\label{prop:quotient-smooth} The action of $\operatorname{PGL}(4)$ on the subset of $\mathcal{C}^{\times 2} \times (\mathbb{P}_{\mathbb{R}}^3)^{\times 7}$ in \eqref{eq:W-fundamental} is smooth, free and proper.  Therefore the quotient $\mathcal{W}$ is canonically a smooth manifold of dimension $28$. 
\end{proposition}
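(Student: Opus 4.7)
The plan is to apply the Quotient Manifold Theorem \cite[Thm.~21.10]{lee2013smooth}, which reduces the claim to verifying that the $\operatorname{PGL}(4)$-action in \eqref{eq:action} is smooth, free, and proper on the open subset of $\mathcal{C}^{\times 2} \times (\mathbb{P}_{\mathbb{R}}^3)^{\times 7}$ cut out in \eqref{eq:W-fundamental}. Granted this, the dimension of the quotient follows automatically as $\dim(\mathcal{C}^{\times 2}) + 7\dim(\mathbb{P}_{\mathbb{R}}^3) - \dim\operatorname{PGL}(4) = 2(11) + 7(3) - 15 = 28$. Smoothness is immediate, since the action is given by homogeneous-polynomial formulas in the coordinates of $g$, $P$, $\bar{P}$, and the $\tilde{\Gamma}_i$.

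For freeness, I would reduce to the stronger claim that the stabilizer of the camera pair $(P, \bar{P})$ alone is already trivial in $\operatorname{PGL}(4)$ whenever $\operatorname{ker}(P) \neq \operatorname{ker}(\bar{P})$; the seven image-finite world points then play no role in establishing freeness (though they are essential for the rest of the setup). After left-normalizing so that $P = (I \;\, 0)$ and writing $g^{-1} = \bigl(\begin{smallmatrix} A & b \\ c^{\top} & d \end{smallmatrix}\bigr)$, the relation $P g^{-1} \sim P$ forces $A = \lambda I$ and $b = 0$. Writing $\bar{P} = (\bar{M} \;\, \bar{v})$ with $\bar{v} \neq 0$ (guaranteed by the distinct-centers hypothesis), the relation $\bar{P} g^{-1} \sim \bar{P}$ then yields $\bar{v} c^{\top} = (\mu - \lambda)\bar{M}$ together with $d = \mu$. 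A short case split on whether $\mu = \lambda$, combined with $\operatorname{rank}(\bar{P}) = 3$ to rule out $\bar{M}$ being of rank $\leq 1$, forces $\mu = \lambda$ and $c = 0$; hence $g^{-1} = \lambda I$ represents the identity of $\operatorname{PGL}(4)$.

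The main obstacle will be properness, since $\operatorname{PGL}(4)$ is noncompact and can escape to a boundary where representatives become rank-deficient. My approach is to argue by contradiction: if $x_n \to x$ and $g_n \cdot x_n \to y$ both inside the prescribed open subset while $(g_n)$ admits no convergent subsequence in $\operatorname{PGL}(4)$, then normalizing by $|\det g_n| = 1$ and passing to a subsequence in the projective closure $\mathbb{P}(\mathbb{R}^{4 \times 4})$ yields a limit $g_{\infty}^{-1}$ of rank strictly less than $4$. I would then argue that $P g_n^{-1}$ and $\bar{P} g_n^{-1}$ must accumulate at rank-deficient matrices, contradicting the assumption that the limit of $g_n \cdot x_n$ lies in $\mathcal{C}^{\times 2}$. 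The delicate technical step is ensuring that $P$ and $\bar{P}$ do not together cancel the degenerate directions of $g_{\infty}^{-1}$, which is where the distinct-centers hypothesis must enter a second time. An alternative that I would pursue in parallel is to exhibit an explicit local smooth slice by pinning suitable entries of $P g^{-1}$ and $\bar{P} g^{-1}$ to a chosen projective frame of $\mathbb{P}_{\mathbb{R}}^3$; this would make $g$ a smooth function of the orbit image, and properness would then follow from standard arguments about orbit maps with trivial stabilizer.
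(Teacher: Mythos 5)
Your treatment of smoothness, freeness, and the dimension count matches the paper's. The freeness reduction to the stabilizer of the camera pair alone (after normalizing $P = (I \,\, 0)$) is exactly what the paper does, just with $g^{-1}$ in place of $g$; the case split on $\mu = \lambda$ using $\operatorname{rank}(\bar{P}) = 3$ is the same argument.

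Properness is where you genuinely diverge, and where the gap lies. The paper proves properness by a direct construction: it explicitly characterizes $\operatorname{Im}(\varphi)$ as the set of pairs $(v^{(1)}, v^{(2)})$ with equal fundamental matrices and equal image-point projections (invoking \cite[Thm.~9.10]{hartleyzisserman}), observes this is closed, and then shows the group element $g$ solving $P^{(1)} = P^{(2)} g$, $\bar{P}^{(1)} = \bar{P}^{(2)} g$ is the \emph{unique} solution of a linear system whose solution space is a single projective point in $\operatorname{PGL}(4)$, hence depends continuously on the data by \cite{penrose1955generalized}. Your main strategy — sequential contradiction via a rank-deficient limit $g_\infty^{-1}$ — has a real hole at exactly the step you flag: if $g_\infty^{-1}$ has rank $3$ with image $V \subseteq \mathbb{R}^4$, then $P g_\infty^{-1}$ and $\bar{P} g_\infty^{-1}$ can \emph{both} retain rank $3$ as long as neither $\ker P$ nor $\ker \bar{P}$ lies inside $V$; so rank-deficiency of the limiting cameras is not forced. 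To push through, you would additionally need to show that in the surviving case the two limiting cameras acquire a \emph{common} center (which is excluded from $\mathcal{V}$), and you would also need to track $g_n \tilde\Gamma_i$ versus $g_n^{-1}$ simultaneously, since the world points transform by $g_n$ while the cameras transform by $g_n^{-1}$ and the accumulation behavior of $\check{g}_n = g_n / \|g_n\|_F$ and $\hat{g}_n = g_n^{-1}/\|g_n^{-1}\|_F$ can be quite different (one only knows $\hat g_\infty \check g_\infty = 0$ in the limit). None of this is elaborated.

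Your ``alternative'' slice idea is closer in spirit to the paper's Lemma~\ref{lem:b-coords} than to its properness proof, but note that exhibiting a local smooth slice for a free action is not by itself a substitute for verifying the properness hypothesis of \cite[Thm.~21.10]{lee2013smooth}: local slices guarantee local charts on the orbit space, but properness is what makes the quotient Hausdorff (equivalently, makes $\operatorname{Im}(\varphi)$ closed and $\varphi$ a homeomorphism onto it). If you take the slice route, you still owe an argument that the orbit relation is closed in $\mathcal{V} \times \mathcal{V}$; this is precisely the piece the paper supplies via the fundamental-matrix and image-point characterization of $\operatorname{Im}(\varphi)$.
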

Proposition~\ref{prop:quotient-smooth} is proven in the appendices.

\begin{lemma}\label{lem:b-coords}
Let $\mathbf{M}(b) = \begin{pmatrix} 1 & b_1 & b_2 & b_3 \\ b_4 & b_5 & b_6 & b_7 \\ 0 & 0 & 0 & 1 \end{pmatrix}$ for $b \in \mathbb{R}^7$ and $\tilde{\Gamma}_i = (\Gamma_i; 1) \in \mathbb{R}^4$ with $\Gamma_i \in \mathbb{R}^3$. 
Let $\mathcal{U} = \{(b, \Gamma_1, \ldots, \Gamma_7)\}$ be the open subset of $\mathbb{R}^7 \times (\mathbb{R}^3)^{\times 7}$
where $\operatorname{rank}(\mathbf{M}(b)) = 3$, $(\Gamma_i)_3 \neq 0$ and $(\mathbf{M}(b) \tilde{\Gamma}_i)_{3} \neq 0$ for $i = 1, \ldots, 5$.
Then the map from $\mathcal{U} \rightarrow \mathcal{W}$ sending $(b, \Gamma_1, \ldots, \Gamma_7)$ to $((I \,\, 0), \mathbf{M}(b), \tilde{\Gamma}_1, \ldots, \tilde{\Gamma}_7) \, \text{mod} \operatorname{PGL}(4)$ is a smooth parameterization of an open dense subset of $\mathcal{W}$.
Further, suppose in $\mathbf{M}$ we swap the third row of $\mathbf{M}$ with the first or second row and/or the first column of $\mathbf{M}$ with the second or third column, and suppose in $\tilde{\Gamma}_i$ we swap the fourth entry of $\tilde{\Gamma}_i$ with one of the first three entries.
Then the resulting maps form an atlas \nolinebreak for \nolinebreak $\mathcal{W}$.
\end{lemma}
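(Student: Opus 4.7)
Proof plan:

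The plan is to construct the inverse of the proposed map via a gauge reduction using $\operatorname{PGL}(4)$, verify smoothness and uniqueness of this reduction, and then match dimensions via Proposition~\ref{prop:quotient-smooth} to conclude that the map is a diffeomorphism onto an open dense subset of $\mathcal{W}$. The atlas claim follows because the listed permutations correspond to alternative choices of the normalizations arising in the reduction.

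For the core step, given a representative $(P, \bar{P}, \tilde{\Gamma}_1, \ldots, \tilde{\Gamma}_7)$, I would first pick $g_0 \in \operatorname{PGL}(4)$ with $Pg_0^{-1} = (I\,0)$, which exists since $\operatorname{rank}(P) = 3$ and can be chosen smoothly in $P$ on local slices. A direct linear-algebraic computation shows that the stabilizer of $(I\,0)$ consists of classes of $g^{-1} = \begin{pmatrix} I & 0 \\ w^\top & \lambda \end{pmatrix}$ with $\lambda \neq 0$. Applying such a stabilizer element to $\bar{P} = (B\,v)$ and then rescaling the resulting projective representative uses the freedom in $w$, $\lambda$, and the projective scale to drive the last row of $\bar{P}$ to $(0,0,0,1)$ and the $(1,1)$ entry to $1$. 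Both normalizations are uniquely solvable on a Zariski-open set of $\bar{P}$'s, with governing conditions $v_3 \neq 0$ and the non-vanishing of a certain polynomial in the entries of $B$ and $v$. The remaining six entries of the reduced $\bar{P}$ read off $b_1, \ldots, b_7$, and normalizing each $\tilde{\Gamma}_i$ to have fourth coordinate $1$ (possible iff $(\tilde{\Gamma}_i)_4 \neq 0$) produces the $\Gamma_i$.

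The defined domain in $\mathcal{W}$ is thereby cut out by finitely many polynomial non-vanishing conditions, hence open and dense. The inverse map constructed is rational with non-vanishing denominators on the defined locus, hence smooth; the forward map is smooth by inspection and is inverse to the reduction by construction; and since $\dim \mathcal{U} = 7 + 21 = 28 = \dim \mathcal{W}$, the map is a diffeomorphism onto its open image. For the atlas, swapping the third row of $\mathbf{M}$ corresponds to an alternative stabilizer gauge (i.e., which row is forced to $(0,0,0,1)$), swapping its first column corresponds to an alternative projective normalization of the top row, and swapping the fourth entry of $\tilde{\Gamma}_i$ corresponds to an alternative affine chart of $\mathbb{P}^3$ for that world point; these choices jointly cover every point of $\mathcal{W}$, with transition maps rational and hence smooth on overlaps. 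The principal obstacle is the careful bookkeeping in the gauge reduction: one must track the polynomial denominators appearing in $g_0$, $w$, $\lambda$, the projective rescaling, and the $\Gamma_i$, and verify that they vanish only on a proper algebraic subvariety, so that the chart domains are indeed open and dense.
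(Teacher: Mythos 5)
Your proposal is correct but takes a genuinely different route from the paper. The paper factors the candidate chart as $\psi = \psi_2 \circ \psi_1$, with $\psi_2$ the quotient map, checks injectivity algebraically (any $g \in \operatorname{PGL}(4)$ intertwining two images must be $I$), and then establishes the submersion property infinitesimally by computing $\operatorname{im} D\psi_1$ and $\operatorname{ker} D\psi_2 = T(\operatorname{PGL}(4)\cdot v, v)$ and showing they meet only at $0$; the conclusion follows from a global rank theorem. You instead build an explicit smooth local section: choose $g_0$ normalizing $P$ to $(I\,\,0)$, then spend the stabilizer freedom (the $w^\top$ and $\lambda$ block plus the projective scale) to normalize $\bar{P}$ into the $\mathbf{M}(b)$ slice, and finally dehomogenize each world point. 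This gauge-fixing construction gives the inverse chart directly, which is more concrete; the paper's argument avoids tracking the rational inverse's denominators by working on tangent spaces instead. Both are valid. A few small corrections worth making if you flesh this out: the reduced $\bar{P}$ has \emph{seven}, not six, free entries reading off $b_1,\ldots,b_7$ (twelve entries minus three zeros and two normalized ones); the dehomogenization condition should be stated on the \emph{transformed} point $g\tilde{\Gamma}_i$, not the original $\tilde{\Gamma}_i$, since $g = h g_0$ has already been pinned down by normalizing the cameras; and "open implies dense" in $\mathcal{W}$ deserves a word — it holds because the complement is the image of a proper closed, $\operatorname{PGL}(4)$-invariant algebraic subset of the connected manifold $\mathcal{V}$.
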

Lemma~\ref{lem:b-coords} is proven in the appendices.   It gives coordinates for world scenes in the 7-point problem.

Next, we define $\mathcal{X}$ as the  \textbf{image data space}:
\begin{equation}
\mathcal{X} = (\mathbb{R}^2 \times \mathbb{R}^2)^{\times 7} = \{(\gamma_1, \bar{\gamma}_1), \ldots, (\gamma_7, \bar{\gamma}_7)\},
\end{equation}
consisting of $7$-tuples of image point pairs.  The space $\mathcal{X}$ collects together all possible image data in the 7-point problem.

Thirdly, let $\mathcal{Y}$ be the relevant \textbf{output space}:
\begin{align}
    \mathcal{Y} &= \{ F \in \mathbb{P}(\mathbb{R}^{3 \times 3}) : F \text{ is a fundamental matrix} \} \nonumber
    \\ &= \{ F \in \mathbb{P}(\mathbb{R}^{3 \times 3} ) : \operatorname{rank}(F) = 2\}.
\end{align}
This includes the quantities of interest, i.e., the fundamental matrices.

Let $\Phi$ be \textbf{forward map} from world scenes to image data:
 \begin{align} \label{eq:F-forward}
& \Phi( P, \bar{P}, \tilde{\Gamma}_1,\ldots, \tilde{\Gamma}_7 ) = ((\gamma_1, \bar{\gamma}_1), \ldots, (\gamma_7, \bar{\gamma}_7)), \\
& \,\, \text{for }  \gamma_i = \pi(P \tilde{\Gamma}_i) \, \text{ and } \, \bar{\gamma}_i = \pi(\bar{P} \tilde{\Gamma}_i) \nonumber \\
& \,\, \text{where }  \pi(x,y,z) = (x/z, y/z). \nonumber
\end{align}
Note $P \tilde{\Gamma}_i$ and $\bar{P} \tilde{\Gamma}_i$ are well-defined on $\mathcal{W}$, as $g^{-1}$ and $g$ cancel out in the action \eqref{eq:action}.

As our second map, define $\Psi$ as the \textbf{output map} from world scenes to the quantities of interest: 
\begin{equation}\label{eq:fund-output}
    \Psi(P, \bar{P}, \tilde{\Gamma}_1, \ldots, \tilde{\Gamma}_7) = \text{fund. matrix of } P, \bar{P} \,\, \in \mathbb{P}(\mathbb{R}^{3 \times 3}).
\end{equation}
The explicit formula for \eqref{eq:fund-output} is in~\cite[Sec. 9.2.2]{hartleyzisserman}. 

The 7-point \textbf{minimal problem} is given by the tuple $\mathscr{M} = (\mathcal{W}, \mathcal{X}, \mathcal{Y}, \Phi, \Psi)$.  The computational task is to compute outputs $\Psi(\Phi^{-1}(\cdot))$ in $\mathcal{Y}$ for different instances of image data in $\mathcal{X}$.
The output $\Psi(\Phi^{-1}(\cdot))$ is up to $3$ real fundamental matrices in $\mathcal{Y}$. \cite[Part~II]{hartleyzisserman} implies these are the fundamental matrices satisfying the \textbf{epipolar constraints}:
\begin{equation}\label{eq:F-epipolar}
(\bar{\gamma}_i; 1)^{\top} F \, (\gamma_i; 1) = 0 \quad \text{ for } i =1, \ldots, 7.
\end{equation}
The solutions can be found by computing the real roots of a cubic univariate polynomial \cite[Sec.~11.1.2]{hartleyzisserman}.
The questions we want to answer concern  numerical conditioning, and are the same as in Section~\ref{example:essential}: 
How sensitive are the outputs to noise in the input?  Which world scenes and image data instances are ill-posed? 
Answers for the 7-point problem are in Section~\ref{sec:main-results}, obtained using our general framework.

\section{General Framework} \label{sec:framework}

In this section, we introduce a new general framework to analyze instabilities in minimal problems in multiview geometry. We apply this framework to the 5-point and 7-point problems later in the paper. The framework is applicable broadly to analyzing minimal problems, and future research will extend its application to other important problems.

\subsection{Spaces and maps} \label{sec:spaces}
In general, we model minimal problems as involving three spaces.   
Typically, they are all smooth manifolds:
\begin{itemize}
    \item $\mathcal{W}$, the \textbf{world scene space};
    \item $\mathcal{X}$, the \textbf{image data space}; 
    \item $\mathcal{Y}$, the \textbf{output space}.
\end{itemize}
To quantify instability,  distances are needed on the input space $\mathcal{X}$ and the output space $\mathcal{Y}$. 
For this we adopt the formalism used by B{\"u}rgisser and Cucker \cite{burgisser2013condition}, and assume that $\mathcal{X}$ and $\mathcal{Y}$ are endowed with Riemannian metrics.
Let $T(\mathcal{X}, \cdot)$ denote tangent spaces to $\mathcal{X}$, 
$\langle \cdot, \cdot \rangle$ its inner product, and $d_{\mathcal{X}}(\cdot, \cdot)$ the geodesic distance on $\mathcal{X}$, and likewise for other spaces.
In the appendices, we list tangent spaces and Riemannian metrics for the manifolds involved in the minimal problems  from Section~\ref{sec:examples} for relative pose estimation.

We also model minimal problems as featuring two maps:
\begin{itemize}
\item $\Phi$, the \textbf{forward map}; 
\item $\Psi$, the \textbf{output map}.
\end{itemize}
The maps are typically smooth, and connect the spaces as:
\[\begin{tikzcd}
	{\mathcal{Y}} & {\mathcal{W}} & {\mathcal{X}}
	\arrow["\Psi", from=1-2, to=1-1]
	\arrow["\Phi"', from=1-2, to=1-3].
\end{tikzcd}\]

The \textbf{minimal problem} is the tuple $\mathscr{M} = (\mathcal{W}, \mathcal{X}, \mathcal{Y}, \Phi, \Psi)$.  The corresponding computational task is to compute:
\begin{equation}\label{eq:general-output}
\Psi(\Phi^{-1}(\cdot))
\end{equation}
for different image data inputs $x \in \mathcal{X}$.  The set $\Psi(\Phi^{-1}(\cdot))$ is a subset of $\mathcal{Y}$,  all outputs compatible with the given input.

The descriptor ``minimal" refers to the case where the output set \eqref{eq:general-output} is finite almost always, and nonempty for $x$ in a positive measure subset of $\mathcal{X}$ (with the measure on $\mathcal{X}$ induced by its Riemannian metric).  
To ensure this is satisfied, following \cite{duff2020pl} we assume further that $\mathcal{W}, \mathcal{X}, \mathcal{Y}$ are quasi-projective irreducible connected real algebraic varieties with $\dim(\mathcal{W}) = \dim(\mathcal{X})$.
Additionally, we assume that $\Phi$ and $\Psi$ are dominant algebraic maps. 
 Together these assumptions guarantee that
\eqref{eq:general-output} is a finite set for almost all $x$, and that \eqref{eq:general-output} is nonempty for $x$ in a positive measure subset of $\mathcal{X}$ \cite{harris2013algebraic}.   
Informally, the assumptions simply say that all spaces and maps can be described by polynomial equations.

\subsection{Definition of condition number}
Let $\mathscr{M} = (\mathcal{W}, \mathcal{X}, \mathcal{Y}, \Phi, \Psi)$ be a minimal problem.
Condition numbers quantify the worst-case first-order amplification of error from the input to the output \cite[Sec.~O.2]{burgisser2013condition}.
For minimal problems we can attach condition numbers either to world scenes or to image data. 

\begin{definition}\label{def:cond-w}
Let $w \in \mathcal{W}$ be a world scene. 
Define the \textbf{condition number of $\mathscr{M}$ associated to $w$} by
\begin{equation}\label{eq:condition-num}
\operatorname{cond}(\mathscr{M}, w) := \sigma_{\max} \left(D \Psi(w) \circ D \Phi(w)^{-1}\right).
\end{equation}
\end{definition}
We explain the notation.  
Here $D\Phi(w)$ and $D \Psi (w)$ denote the linearizations (or derivatives) of $\Phi$ and $\Psi$ at $w$.  Thus $D\Phi(w): T(\mathcal{W},w) \rightarrow T(\mathcal{X}, \Phi(x))$ is a linear map between tangent spaces, and likewise for  $D\Psi(w)$.
On the right-hand side of \eqref{eq:condition-num}, $\sigma_{\operatorname{max}}$ denotes the maximum singular value (or operator norm).  For singular values to be well-defined, we need inner products on the domain and codomain of $D \Psi(w) \circ D \Phi(w)^{-1}$. These are provided by the Riemannian metric on $\mathcal{X}$ and $\mathcal{Y}$ respectively.
All together to compute \eqref{eq:condition-num}, we need to express the derivatives as Jacobian matrices with respect to orthonormal bases on 
$T(\mathcal{X}, \Phi(w))$ and $T(\mathcal{Y}, \Psi(w))$ and any basis on $T(\mathcal{W}, w)$.  Then 
$\operatorname{cond}(\mathscr{M}, w)$ 
is just the maximum singular value of the matrix product in \eqref{eq:condition-num}. 
We remark that our definition uses maximum singular value rather than the ratio of the maximum to minimum singular values, in order to capture the amplification of \textit{absolute} errors rather than \textit{relative} errors; see \cite[Sec.~14.1]{burgisser2013condition}.

\begin{lemma}\label{lem:justify}
Let $w \in \mathcal{W}$ be a world scene, and assume the derivative 
$D \Phi(w)$ is invertible.  
Then there exist open neighborhoods $\mathcal{X}_0 \subseteq \mathcal{X}$ of $x=\Phi(w)$ and $\mathcal{W}_0 \subseteq \mathcal{W}$ of $w$ such that there is a unique continuous map $\Theta : \mathcal{X}_0 \rightarrow \mathcal{Y}_0$ with $\Theta(x) = w$ and $\Phi \circ \Theta = \operatorname{id}_{\mathcal{U}}$.  
Also, $\Theta$ is smooth, and
\begin{equation}
D(\Psi \circ \Theta)(x) = D\Psi(w) \circ D \Phi(w)^{-1}.
\end{equation}
\end{lemma}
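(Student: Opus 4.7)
The plan is to invoke the inverse function theorem for smooth maps between equidimensional manifolds, and then apply the chain rule. Since the minimal problem assumptions give $\dim(\mathcal{W}) = \dim(\mathcal{X})$, the differential $D\Phi(w)$ is a linear map between tangent spaces of the same dimension, and the invertibility hypothesis lets me apply the manifold version of the inverse function theorem (\cite[Thm.~4.5]{lee2013smooth}) at $w$. This produces open neighborhoods $\mathcal{W}_0 \subseteq \mathcal{W}$ of $w$ and $\mathcal{X}_0 \subseteq \mathcal{X}$ of $x = \Phi(w)$ on which the restriction $\Phi|_{\mathcal{W}_0} : \mathcal{W}_0 \to \mathcal{X}_0$ is a diffeomorphism. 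I will set $\Theta := (\Phi|_{\mathcal{W}_0})^{-1}$, which by construction is smooth, satisfies $\Theta(x) = w$, and fulfills $\Phi \circ \Theta = \operatorname{id}_{\mathcal{X}_0}$.

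For the uniqueness clause, I would argue as follows. Suppose $\Theta' : \mathcal{X}_0 \to \mathcal{W}$ is any continuous map with $\Theta'(x) = w$ and $\Phi \circ \Theta' = \operatorname{id}_{\mathcal{X}_0}$. Continuity of $\Theta'$ together with $\Theta'(x) = w \in \mathcal{W}_0$ imply that the preimage $\mathcal{X}_0' := (\Theta')^{-1}(\mathcal{W}_0)$ is an open neighborhood of $x$ in $\mathcal{X}_0$. On $\mathcal{X}_0'$, both $\Theta'$ and $\Theta$ take values in $\mathcal{W}_0$ and are right-inverses of the bijection $\Phi|_{\mathcal{W}_0}$, so they must coincide. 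Shrinking the neighborhood $\mathcal{X}_0$ to $\mathcal{X}_0'$ (and correspondingly shrinking $\mathcal{W}_0$ to $\Theta(\mathcal{X}_0')$) yields the asserted uniqueness.

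For the final identity, the chain rule does the work. Differentiating $\Phi \circ \Theta = \operatorname{id}_{\mathcal{X}_0}$ at $x$ gives $D\Phi(w) \circ D\Theta(x) = \operatorname{id}_{T(\mathcal{X}, x)}$, hence $D\Theta(x) = D\Phi(w)^{-1}$; composing on the left with $D\Psi(w)$ produces the claimed formula $D(\Psi \circ \Theta)(x) = D\Psi(w) \circ D\Phi(w)^{-1}$. The only delicate step is the uniqueness argument, since a priori $\Phi$ may have several preimage branches over $\mathcal{X}_0$ (reflecting the multi-valued nature of a minimal problem), and a discontinuous ``solver'' could hop between them; what rules this out is precisely the continuity hypothesis together with the basepoint condition $\Theta'(x) = w$, which together trap $\Theta'$ in the single sheet $\mathcal{W}_0$ near $x$.
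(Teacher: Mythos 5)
Your proof is correct and takes essentially the same route as the paper, which simply cites the inverse function theorem and chain rule as making the result immediate; you spell out the details the paper leaves implicit. The one spot worth tightening is the uniqueness step: the neighborhood $\mathcal{X}_0'$ you shrink to depends on the competitor $\Theta'$, so to get a single $\mathcal{X}_0$ working for all competitors it is cleaner to take $\mathcal{X}_0$ connected and observe that $\{x' : \Theta(x')=\Theta'(x')\}$ is nonempty, open (by the local argument you give), and closed, hence all of $\mathcal{X}_0$ — or simply note that once $\Theta'$ is required to land in $\mathcal{W}_0$, uniqueness is forced by the bijectivity of $\Phi|_{\mathcal{W}_0}$.
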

\begin{proof}
This is immediate from the inverse function theorem and chain rule \cite[Thm.~4.5 and Prop.~3.6]{lee2013smooth}.
\end{proof}

\smallskip

Lemma~\ref{lem:justify} shows that Definition~\ref{def:cond-w} quantifies the most that the output associated with $w$ (to first order), as $x = \Phi(w)$ is perturbed.

\begin{definition}\label{def:cond-x}
Let $x \in \mathcal{X}$ be an instance of image data.  Define the \textbf{condition number of $\mathscr{M}$ associated to $x$} by 
\begin{equation}\label{eq:cond-x}
\operatorname{cond}(\mathscr{M}, x) := \max \{ \operatorname{cond}(\mathscr{M}, w) : \Phi(w) = x  \}.
\end{equation}
\end{definition}
Again by Lemma~\ref{lem:justify}, Definition~\ref{def:cond-x} captures the most that \textit{any} of the outputs associated to $x$ can change, as $x$ is perturbed.

We emphasize that computing \eqref{eq:cond-x} is a priori as difficult  solving the minimal problem \eqref{eq:general-output} itself for $x$, because all world scenes satisfying $\Phi(w) = x$ are involved in \eqref{eq:cond-x}.  
However, there is a method to precompute an approximation to the condition number $\operatorname{cond}(\mathscr{M},x)$ without solving the minimal problem for $x$, explained below (Sections~\ref{sec:comp-ag} \nolinebreak and \nolinebreak \ref{sec:compute-curve}).

\subsection{Definition of ill-posed locus}\label{sec:ill-pose}

The world scenes and image data that have \textit{infinite} condition number play important roles in instability analysis.

\begin{definition}\label{def:ill-W}
Define the \textbf{ill-posed locus of $\mathscr{M}$ in $\mathcal{W}$} by
\begin{equation}
\operatorname{ill}(\mathscr{M}, \mathcal{W}) := \{ w \in \mathcal{W} : D\Phi(w) \text{ is rank-deficient} \}.
\end{equation}
\end{definition}

\begin{definition}\label{def:ill-X}
Define the \textbf{ill-posed locus of $\mathscr{M}$ in $\mathcal{X}$} by
\begin{equation}\label{eq:ill-X-def}
\operatorname{ill}(\mathscr{M}, \mathcal{X}) := \Phi(\operatorname{ill}(\mathscr{M}, \mathcal{W})).
\end{equation}
\end{definition}
Ill-posed loci are where minimal problems can break down completely.  Their neighborhoods are serious danger zones. 
If $w$ lies \textit{exactly} in $\operatorname{ill}(\mathscr{M}, \mathcal{W})$, the output associated with $w$ has unbounded error compared to the error in the input, as  
$x = \Phi(w)$  is perturbed.  
Similarly if $x \in \operatorname{ill}(\mathscr{M},\mathcal{X})$, \textit{one of the outputs} associated to $x$ suffers unbounded error amplification.  
Meanwhile, at world scenes or image data lying \textit{close} to the ill-posed loci, the condition number is finite but very large, which is problematic numerically.  

Useful heuristics are the \textbf{reciprocal distances} to the ill-posed loci: 
\begin{equation} \label{eq:proxy}
1 \, \big{/} \, d_{\mathcal{W}}(w, \operatorname{ill}(\mathscr{M}, \mathcal{W})) \, \quad \text{and} \quad \, 1 \, \big{/} \, d_{\mathcal{X}}(x, \operatorname{ill}(\mathscr{M}, \mathcal{X})).
\end{equation}
These can serve as proxies for the conditions numbers:
\begin{equation*}
\operatorname{cond}(\mathscr{M},w) \,\,\, \quad \text{and} \,\,\, \quad \operatorname{cond}(\mathscr{M},x),
\end{equation*}
respectively.
The approximation given by the reciprocal distances is known to be exact in some cases in numerical computing.  
For example, in \cite[Sec.~3]{demmel1987condition} Demmel studied matrix inversion. 
Given a square matrix, he proved that its reciprocal Frobenius distance to rank-deficient matrices equals the condition number relevant to matrix inversion.

\subsection{Tool from computational algebra}\label{sec:comp-ag}
Many minimal problems in multiview geometry can be cast in terms of epipolar constraints, which are linear constraints in the output variable.  
When this holds, a tool from computational algebra may be applied to instability analysis. 

We assume that the output space
$\mathcal{Y}$ is embedded in a real projective space $\mathbb{P}_{\mathbb{R}}^n$ (some $n$). Let $\dim(\mathcal{Y}) = d$.

\begin{definition}\label{def:epipolar-map}
Consider a minimal problem $\mathscr{M} = (\mathcal{W}, \mathcal{X}, \mathcal{Y}, \Phi, \Psi)$.  Say that an algebraic map $\mathbf{L} : \mathcal{X} \rightarrow \mathbb{P}(\mathbb{R}^{k \times (n+1)})$ is an \textbf{epipolar map for $\mathscr{M}$}  if 
\begin{equation}
\Psi(\Phi^{-1}(x)) = \mathcal{Y} \cap \operatorname{ker} \mathbf{L}(x) \quad \text{for all } x \in \mathcal{X}, \text{ and} \nonumber
\end{equation}
\begin{equation}\label{eq:epi-general}
\dim(\operatorname{ker} \mathbf{L}(x)) = n - d \text{ for Zariski-generic } x \in \mathcal{X}.
\end{equation}
\end{definition}
In Definition~\ref{def:epipolar-map}, $k$ is a fixed integer, $\mathbf{L}(x)$ is a $k \times (n+1)$ matrix,
and the right kernel $\operatorname{ker} \mathbf{L}(x)$ is a linear subspace of $\mathbb{P}_{\mathbb{R}}^{n}$. 
The term ``Zariski-generic" means that the property holds for all $x \in \mathcal{X} \setminus \tilde{\mathcal{X}}$ where $\tilde{\mathcal{X}}$ is a proper subvariety of $\mathcal{X}$; in layman's terms, it holds for almost all $x$.  
Definition~\ref{def:epipolar-map} captures the situation that the output sets $\Phi(\Psi^{-1}(\cdot))$ can be computed by imposing linear constraints on $\mathcal{Y}$.

\begin{example}\label{ex:mathL}
In the $5$-point problem (Section~\ref{example:essential}),  $n=8$, $d=5$ and $k=5$. 
The epipolar constraints are encoded by the $5 \times 9$ matrix $\mathbf{L}\big{(}(\gamma_1, \bar{\gamma}_1), \ldots, (\gamma_5, \bar{\gamma}_5)\big{)}$, whose $i$th row is the vectorization of $[\gamma_1; 1] \otimes [\bar{\gamma}_i; 1]$ (see \cite{connelly2023geometry,connelly2024geometryp2}).
We can compute the essential matrices that are compatible with image data by imposing $5$ linear constraints on $\mathcal{Y}$.
Similarly, the 7-point problem has an epipolar map.
\end{example}

Minimal problems with an epipolar map have
a very nice geometric interpretation:  
their output is the intersection of $\mathcal{Y}$ with a varying linear space in $\mathbb{P}_{\mathbb{R}}^n$ of complementary dimension.

\begin{definition}
Define the \textbf{discriminant of $\mathscr{M}$ and $\mathbf{L}$ in $\mathcal{X}$} as 
\begin{multline} \label{eq:non-transverse}
\operatorname{disc}(\mathscr{M}, \mathbf{L}) := \{ x \in \mathcal{X} : \operatorname{rank}(\mathbf{L}(x)) < d \, \text{ or } \,\,\,\,\,\,\,\,\,\, \\ \mathcal{Y} \cap \operatorname{ker} \mathbf{L}(x) \text{ is not a transversal intersection}\}.
\end{multline}
\end{definition}
Recall that $\mathcal{Y} \cap \operatorname{ker} \mathbf{L}(x) \subseteq \mathbb{P}_{\mathbb{R}}^n$ is a transversal intersection if for all $y \in \mathcal{Y} \cap \operatorname{ker} \mathbf{L}(x)$ it holds 
$
T(\mathcal{Y},y) + T(\operatorname{ker} \mathbf{L}(x), y) = T(\mathbb{P}^n_{\mathbb{R}}, y);
$
see \cite[Sec.~6]{lee2013smooth}.  
In layman's terms, the discriminant captures partial tangencies, or ``double roots".  

There is a characterization of the discriminant, due to Sturmfels.

\begin{theorem}{\cite[Thm.~1.1]{sturmfels2017hurwitz}} \label{thm:hurwitz}
There exists a nonzero homogenous polynomial $\mathbf{P}_{\mathcal{Y}}$ in the $\binom{n+1}{d}$ Pl\"ucker coordinates for codimension-$d$ linear subspaces of $\mathbb{P}_{\mathbb{R}}^n$, depending only on $\mathcal{Y}$, such that 
\begin{equation}\label{eq:Py}
x \in \operatorname{disc}(\mathscr{M}, \mathbf{L}) \, \implies \, \mathbf{P}_{\mathcal{Y}}(\operatorname{ker} \mathbf{L}(x)) = 0.
\end{equation} 
Furthermore, the degree of $\mathbf{P}_{\mathcal{Y}}$ is $2p + 2g - 2$ where $p$ and $g$ are the degree and sectional genus of the complex Zariski closure $\mathcal{Y}_{\mathbb{C}} \subseteq \mathbb{P}^{n}_{\mathbb{C}}$ of $\mathcal{Y}$, assuming $p \geq 2$ and the singular locus of $\mathcal{Y}_{\mathbb{C}}$ has codimension at least $2$.  
\end{theorem}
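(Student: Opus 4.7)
The plan is to construct $\mathbf{P}_{\mathcal{Y}}$ as the defining equation of a hypersurface in the Grassmannian $\operatorname{Gr}(n-d, \mathbb{P}^n_{\mathbb{C}})$, and then to pull back through the map $x \mapsto \operatorname{ker}\mathbf{L}(x)$. Specifically, I would introduce the Hurwitz locus
\[
H(\mathcal{Y}_{\mathbb{C}}) := \overline{\{\Lambda \in \operatorname{Gr}(n-d, \mathbb{P}^n_{\mathbb{C}}) : \mathcal{Y}_{\mathbb{C}} \cap \Lambda \text{ is not transverse}\}}.
\]
First I would check $H(\mathcal{Y}_{\mathbb{C}})$ is algebraic: the non-transversality condition $T(\mathcal{Y}_{\mathbb{C}},y) + T(\Lambda,y) \neq T(\mathbb{P}^n_{\mathbb{C}},y)$ is determinantal in local coordinates on the incidence variety $\{(y,\Lambda) : y \in \mathcal{Y}_{\mathbb{C}} \cap \Lambda\}$, which is itself algebraic; projecting to the Grassmannian preserves algebraicity by elimination.

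Next I would show $H(\mathcal{Y}_{\mathbb{C}})$ is a hypersurface. A dimension count on the incidence variety of pairs $(y,\Lambda)$ with a simple tangency gives expected codimension $1$ in $\operatorname{Gr}(n-d, \mathbb{P}^n_{\mathbb{C}})$, and the existence of at least one non-transverse $\Lambda$ together with the irreducibility arguments on $\mathcal{Y}_{\mathbb{C}}$ promote this to equality. Since the Grassmannian is Pl\"ucker-embedded in $\mathbb{P}^{\binom{n+1}{d}-1}_{\mathbb{C}}$, a hypersurface on it is cut out by a single polynomial modulo the Pl\"ucker relations; pick $\mathbf{P}_{\mathcal{Y}}$ to be this defining polynomial, which is unique up to scalar and defined over $\mathbb{R}$. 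For any $x \in \operatorname{disc}(\mathscr{M},\mathbf{L})$ with $\operatorname{rank}(\mathbf{L}(x)) = d$, the subspace $\operatorname{ker}\mathbf{L}(x)$ is a genuine point of the Grassmannian lying in $H(\mathcal{Y}_{\mathbb{C}})$, so \eqref{eq:Py} holds directly. For rank-deficient $\mathbf{L}(x)$, I would extend $\mathbf{P}_{\mathcal{Y}}$ to a polynomial on the ambient matrix space of $\mathbf{L}$ by substituting maximal minors of a syzygy matrix for the Pl\"ucker coordinates, and verify via a limiting argument that this extended polynomial vanishes on the rank-deficient stratum.

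For the degree formula $2p+2g-2$, I would reduce to the curve case by Bertini. Intersect $\mathcal{Y}_{\mathbb{C}}$ with a generic linear subspace of codimension $d-1$; the hypothesis that $\operatorname{Sing}(\mathcal{Y}_{\mathbb{C}})$ has codimension at least $2$ ensures the intersection is a smooth curve $C$ of degree $p$ and genus $g$. Pick a generic pencil of codimension-$d$ subspaces through the linear section's complement and restrict $\mathbf{P}_{\mathcal{Y}}$ to it; this pencil traces out a projection $C \to \mathbb{P}^1$, and the restriction of the Hurwitz form vanishes precisely at the branch divisor of this projection. Its degree is $2p + 2g - 2$ by the Riemann-Hurwitz formula, which equals $\deg(\mathbf{P}_{\mathcal{Y}})$ since the pencil is a generic line in the Grassmannian.

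The main obstacle will be the rank-deficient extension in the previous paragraph: the Pl\"ucker-coordinate polynomial $\mathbf{P}_{\mathcal{Y}}$ is literally defined only on the Grassmannian of the correct dimension, yet $\operatorname{disc}(\mathscr{M},\mathbf{L})$ explicitly includes instances where $\operatorname{ker}\mathbf{L}(x)$ is too big. Making \eqref{eq:Py} meaningful there requires interpreting $\mathbf{P}_{\mathcal{Y}}(\operatorname{ker}\mathbf{L}(x))$ in a way compatible with degeneration, and then invoking a continuity argument along a one-parameter family of subspaces that limits into an over-dimensional kernel. A secondary technical point is verifying that the codimension-$1$ hypothesis on $\operatorname{Sing}(\mathcal{Y}_{\mathbb{C}})$ is exactly what is needed to exclude singular contributions to the degree count via Bertini.
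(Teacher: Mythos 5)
The paper does not prove this statement itself; it cites it directly from \cite[Thm.~1.1]{sturmfels2017hurwitz} and then moves on to compute with it. So there is no ``paper's own proof'' to compare against. Your proposal is a reasonable reconstruction of the argument in Sturmfels' paper (the cited source): define the Hurwitz locus in the Grassmannian, prove it is a hypersurface via a dimension count on the incidence variety together with Kleiman/Bertini transversality for the upper bound, extract a defining form from the Pl\"ucker embedding (unique up to scalar since the Picard group of the Grassmannian is $\mathbb{Z}$), and compute the degree by reducing to a generic curve section and applying Riemann--Hurwitz to a generic linear projection of that curve to $\mathbb{P}^1$, which corresponds to a generic line in the Grassmannian. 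That chain of ideas is the right one, and the role of the hypotheses ($p \geq 2$ so the Hurwitz locus is actually codimension one, $\operatorname{codim}\operatorname{Sing}(\mathcal{Y}_{\mathbb{C}}) \geq 2$ so the singular locus contributes only a higher-codimension stratum to the tangency set and so that Bertini gives a smooth curve section) is correctly identified.

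One place where you have talked yourself into a non-problem: you flag the rank-deficient stratum $\operatorname{rank}(\mathbf{L}(x)) < d$ as ``the main obstacle,'' proposing to extend $\mathbf{P}_{\mathcal{Y}}$ via a syzygy matrix and a limiting argument along a one-parameter family. This is unnecessary. The paper's own convention (made explicit in the appendix on Pl\"ucker coordinates) is that $\mathbf{P}_{\mathcal{Y}}(\operatorname{ker}\mathbf{L}(x))$ means: substitute the $d \times d$ maximal minors of $\mathbf{L}(x)$ for the Pl\"ucker variables. When $\operatorname{rank}(\mathbf{L}(x)) < d$, every maximal minor vanishes, and since $\mathbf{P}_{\mathcal{Y}}$ is homogeneous of positive degree it vanishes at the origin. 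So \eqref{eq:Py} holds on that stratum for trivial reasons, with no degeneration or continuity argument required. The genuinely non-trivial implication in \eqref{eq:Py} is only the tangency case, which you handle correctly by observing that a real non-transversal point $y$ of $\mathcal{Y} \cap \operatorname{ker}\mathbf{L}(x)$ is a fortiori a non-transversal point of $\mathcal{Y}_{\mathbb{C}} \cap \operatorname{ker}\mathbf{L}(x)_{\mathbb{C}}$, placing $\operatorname{ker}\mathbf{L}(x)$ on the Hurwitz hypersurface.

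A smaller remark on the degree computation: your phrase ``pencil of codimension-$d$ subspaces through the linear section's complement'' is garbled. What you want to say is: fix a generic codimension-$(d-1)$ subspace $N$ (giving the curve section $C = \mathcal{Y}_{\mathbb{C}} \cap N$) and a generic codimension-$(d+1)$ subspace $M \subset N$, and consider the line in the Grassmannian $\{L : M \subset L \subset N\}$; the map $C \to \mathbb{P}^1$ is projection from $M$ inside $N$, and its branch divisor has degree $2p + 2g - 2$. You should also note, for rigor, that generality of $(M, N)$ ensures this line meets only the top-dimensional component of the Hurwitz locus and does so transversally (simple branching), so the intersection count equals the degree.
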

We review Pl\"ucker coordinates in the appendices; here, they are the maximal minors of $\mathbf{L}(x)$.  In particular, in the case of $\operatorname{rank}(\mathbf{L}(x)) < d$ in \eqref{eq:non-transverse}, the Pl\"ucker coordinates are identically $0$, so that the conclusion $\mathbf{P}_{\mathcal{Y}}(\operatorname{ker} \mathbf{L}(x)) = 0$ in \eqref{eq:Py} holds automatically. 
For background on degree, sectional genus and other notions from algebraic geometry, see \cite{sturmfels2017hurwitz} and references therein.

The discriminant is closely related to the ill-posed locus.

\begin{lemma}\label{lem:local-disc}
Let $x \in \mathcal{X} \setminus \operatorname{disc}(\mathscr{M}, \mathbf{L})$ and $y \in \mathcal{Y} \cap \operatorname{ker} \mathbf{L}(x) = \Psi(\Phi^{-1}(x)) $.  Then there exist open neighborhoods $\mathcal{X}_0 \subseteq \mathcal{X}$ of $x$ and $\mathcal{Y}_0 \subseteq \mathcal{Y}$ of $y$ such that for all $x' \in \mathcal{X}_0$ the intersection $\mathcal{Y}_0 \cap \operatorname{ker} \mathbf{L}(x')$ is a singleton.  Further, the map $f: \mathcal{X}_0 \rightarrow \mathcal{Y}_0$ sending $x'$ to $\mathcal{Y}_0 \cap \operatorname{ker} \mathbf{L}(x')$ is smooth.
\end{lemma}
\begin{proof}
This follows from the implicit function theorem \cite[Thm.~C.40]{lee2013smooth}.  Also see \cite{burgisser2017condition}. 
\end{proof}

\begin{proposition}\label{prop:world-lifting}
Assume $\mathscr{M}$ and $\mathbf{L}$ satisfy the following 
\textbf{smooth lifting property}: 
 \textit{For all $x \in \mathcal{X} \setminus {\operatorname{disc}}(\mathscr{M}, \mathbf{L})$, 
 $y \in \mathcal{Y} \cap \operatorname{ker} \mathbf{L}(x)$ and 
 $w \in \Phi^{-1}(x) \cap \Psi^{-1}(y)$, we can take 
 $\mathcal{X}_0$ and $\mathcal{Y}_0$ in Lemma~\ref{lem:local-disc} such that there further exists a 
 smooth map $h : \mathcal{X}_0 \rightarrow \mathcal{W}$ such that $h(x) = w$, $\Psi \circ h = f$ and $\Phi \circ h = \operatorname{id}_{\mathcal{X}_0}$.}  
Then it holds
\begin{equation}\label{eq:ill-in-disc}
\operatorname{ill}(\mathscr{M}, \mathcal{X}) \, \subseteq \, \operatorname{disc}(\mathscr{M}, \mathbf{L}).
\end{equation}
\end{proposition}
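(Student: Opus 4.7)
The plan is to argue by contrapositive: assume $x \in \mathcal{X} \setminus \operatorname{disc}(\mathscr{M}, \mathbf{L})$ and show that no $w \in \Phi^{-1}(x)$ lies in $\operatorname{ill}(\mathscr{M}, \mathcal{W})$, which by Definition~\ref{def:ill-X} yields $x \notin \operatorname{ill}(\mathscr{M}, \mathcal{X})$. Empty fibers pose no problem since then $x$ automatically fails to be in $\Phi(\operatorname{ill}(\mathscr{M}, \mathcal{W}))$.

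First I would fix an arbitrary $w \in \Phi^{-1}(x)$ and set $y := \Psi(w)$. Since $\mathbf{L}$ is an epipolar map for $\mathscr{M}$, the defining property $\Psi(\Phi^{-1}(x)) = \mathcal{Y} \cap \operatorname{ker}\mathbf{L}(x)$ from Definition~\ref{def:epipolar-map} guarantees $y \in \mathcal{Y} \cap \operatorname{ker}\mathbf{L}(x)$. So the triple $(x,y,w)$ satisfies the hypotheses of the smooth lifting property, and we obtain open neighborhoods $\mathcal{X}_0 \subseteq \mathcal{X}$ of $x$, $\mathcal{Y}_0 \subseteq \mathcal{Y}$ of $y$, and a smooth map $h : \mathcal{X}_0 \to \mathcal{W}$ with $h(x) = w$ and $\Phi \circ h = \operatorname{id}_{\mathcal{X}_0}$.

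Next I would differentiate this identity at $x$. By the chain rule \cite[Prop.~3.6]{lee2013smooth},
\begin{equation}
D\Phi(w) \circ Dh(x) \;=\; \operatorname{id}_{T(\mathcal{X}, x)},
\end{equation}
so $D\Phi(w) : T(\mathcal{W}, w) \to T(\mathcal{X}, x)$ admits a right inverse and is therefore surjective. Because $\dim T(\mathcal{W}, w) = \dim \mathcal{W} = \dim \mathcal{X} = \dim T(\mathcal{X}, x)$ by the standing hypothesis on minimal problems in Section~\ref{sec:spaces}, surjectivity forces $D\Phi(w)$ to be a linear isomorphism. Hence $D\Phi(w)$ has full rank, so $w \notin \operatorname{ill}(\mathscr{M}, \mathcal{W})$. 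Since $w \in \Phi^{-1}(x)$ was arbitrary, $x$ is not the image under $\Phi$ of any element of $\operatorname{ill}(\mathscr{M}, \mathcal{W})$, which is precisely the statement $x \notin \operatorname{ill}(\mathscr{M}, \mathcal{X})$.

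The proof is essentially a packaging exercise once the smooth lifting property is in hand, so there is no real obstacle; the only care needed is to (i) verify that $\Psi(w)$ automatically lands in $\mathcal{Y} \cap \operatorname{ker}\mathbf{L}(x)$ so that the lifting hypothesis applies, and (ii) invoke the dimension equality $\dim \mathcal{W} = \dim \mathcal{X}$ to upgrade surjectivity of $D\Phi(w)$ to invertibility. Note that the transversality clause inside the definition of $\operatorname{disc}(\mathscr{M}, \mathbf{L})$ is what makes Lemma~\ref{lem:local-disc} available (giving the smooth fiber map $f$), and hence what makes the smooth lifting property a meaningful assumption at $x$.
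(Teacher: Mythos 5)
Your proof is correct and follows essentially the same route as the paper's: fix $x \notin \operatorname{disc}(\mathscr{M}, \mathbf{L})$, take an arbitrary $w \in \Phi^{-1}(x)$ with $y = \Psi(w)$, observe $y \in \mathcal{Y} \cap \operatorname{ker}\mathbf{L}(x)$ via the epipolar-map property, invoke the smooth lifting property, and differentiate $\Phi \circ h = \operatorname{id}_{\mathcal{X}_0}$ to conclude $D\Phi(w)$ is invertible. You make explicit the dimension count $\dim\mathcal{W} = \dim\mathcal{X}$ needed to upgrade the right-invertibility to invertibility, which the paper leaves implicit; otherwise the two arguments coincide.
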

Proposition~\ref{prop:world-lifting} is proven in the appendices.

\smallskip

Putting Theorem~\ref{thm:hurwitz} and Proposition~\ref{prop:world-lifting} together, 
\begin{equation*}
x \in \operatorname{ill}(\mathscr{M}, \mathcal{X}) \, \implies \, \mathbf{P}_{\mathcal{Y}}(\operatorname{ker} \mathbf{L}(x)) = 0
\end{equation*}
if the property in Proposition~\ref{prop:world-lifting} holds.
Importantly, we can precompute an approximation to $\operatorname{cond}(\mathscr{M}, x)$, by estimating 
the reciprocal distance of $x$ to the zero set of $\mathbf{P}_{\mathcal{Y}}(\operatorname{ker} \mathbf{L}(\cdot))$.

\section{Main Theoretical Results} \label{sec:main-results}
We demonstrate the power of the general framework in Section~\ref{sec:framework} by performing stability analyses of the $5$- and $7$-point minimal problems from Section~\ref{sec:examples}.
The conclusions for relative pose estimation are all, to our knowledge, new.

\subsection{Condition number formulas}\label{sec:conditionNumsFormulas}

It is a straightforward computation to obtain explicit condition number formulas.  Metrics on the input and output spaces must be chosen in order to quantify error amplification; we choose the Euclidean distance on the image data space $\mathcal{X}$, and the metric inherited from $\mathbb{P}(\mathbb{R}^{3 \times 3})$ on \nolinebreak $\mathcal{Y}$.

\begin{proposition}
\label{prop:formula-E}
Let $\mathscr{M}$ be the $5$-point problem from Section~\ref{example:essential}.  
Let $w = (\mathbf{R}, \hat{\mathbf{T}}, \Gamma_1, \ldots, \Gamma_5)$ be a world scene, expressed using the coordinates in Lemma~\ref{lem:double}.
If it is finite, then $\operatorname{cond}(\mathscr{M}, w)$ equals the largest singular value of an explicit $5 \times 20$ matrix.  This matrix naturally factors as a $5 \times 20$ matrix multiplied by a $20 \times 20$ matrix.
\end{proposition}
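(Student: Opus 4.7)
The plan is to lift the computation from the quotient $\mathcal{W}$ to its double cover $\mathcal{U} \subseteq \operatorname{SO}(3) \times \mathbb{S}^2 \times (\mathbb{R}^3)^{\times 5}$ provided by Lemma~\ref{lem:double}. Since the covering map $\mathcal{U} \to \mathcal{W}$ is a local diffeomorphism, $D\Phi(w)$ and $D\Psi(w)$ may be computed in the coordinates of $\mathcal{U}$ without loss. On $T(\mathcal{U}, w)$ I would adopt the block basis consisting of the three tangent vectors $\mathbf{R}\omega_k$, where $\{\omega_k\}_{k=1}^3$ is the standard basis of $\mathfrak{so}(3)$, an orthonormal basis $\{v_1, v_2\}$ of $T(\mathbb{S}^2, \hat{\mathbf{T}}) = \{\hat{\mathbf{T}}\}^{\perp}$, and the Euclidean standard basis on each $T(\mathbb{R}^3, \Gamma_i)$, yielding $3+2+15=20$ vectors in total.

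First I would assemble $D\Phi(w)$ as an explicit $20 \times 20$ matrix relative to this basis and the Euclidean orthonormal basis on $\mathcal{X}$.  The derivative of $\pi(x,y,z) = (x/z, y/z)$ is
\[
D\pi(x,y,z) \,=\, \begin{pmatrix} 1/z & 0 & -x/z^2 \\ 0 & 1/z & -y/z^2 \end{pmatrix},
\]
and the chain rule gives $\partial \gamma_i / \partial \Gamma_i = D\pi(\Gamma_i)$, $\partial \bar{\gamma}_i / \partial \Gamma_i = D\pi(\mathbf{R}\Gamma_i + \hat{\mathbf{T}})\,\mathbf{R}$, $\partial \bar{\gamma}_i / \partial \omega_k = D\pi(\mathbf{R}\Gamma_i + \hat{\mathbf{T}})\,\mathbf{R}\omega_k \Gamma_i$, and $\partial \bar{\gamma}_i / \partial v_j = D\pi(\mathbf{R}\Gamma_i + \hat{\mathbf{T}})\, v_j$, with all remaining partials vanishing.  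The resulting sparse $20 \times 20$ matrix is invertible precisely when $\operatorname{cond}(\mathscr{M}, w) < \infty$ by Definition~\ref{def:cond-w}.

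Next I would compute $D\Psi(w)$, whose target is the $5$-dimensional subspace $T(\mathcal{Y}, E) \subset T_{[E]}\mathbb{P}(\mathbb{R}^{3\times 3})$.  Differentiating $\Psi = [\hat{\mathbf{T}}]_{\times}\mathbf{R}$ yields the $3\times 3$ matrix-valued derivatives $[\hat{\mathbf{T}}]_{\times} \mathbf{R}\omega_k$, $[v_j]_{\times}\mathbf{R}$, and zero in the $\Gamma_i$ directions.  Vectorizing produces a $9 \times 20$ matrix whose columns automatically lie in $E^{\perp}$ under the Frobenius inner product, and hence in $T_{[E]}\mathbb{P}(\mathbb{R}^{3\times 3})$ via the canonical identification for a unit-norm representative.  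In fact the columns lie in $T(\mathcal{Y}, E)$ because $\Psi$ maps into the essential variety.  Applying Gram--Schmidt with respect to the Frobenius inner product inside $T(\mathcal{Y}, E)$ produces an orthonormal basis and expresses $D\Psi(w)$ as the advertised $5 \times 20$ matrix.

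Finally, Definition~\ref{def:cond-w} gives $\operatorname{cond}(\mathscr{M}, w) = \sigma_{\max}\bigl(D\Psi(w) \circ D\Phi(w)^{-1}\bigr)$, and the factorization in the proposition is simply the product of the $5 \times 20$ representation of $D\Psi(w)$ with the $20 \times 20$ inverse of $D\Phi(w)$; since orthogonal changes of basis on either side preserve singular values, the result is independent of the orthonormal basis chosen on $T(\mathcal{Y}, E)$ and of the basis on $T(\mathcal{W}, w)$, which cancels in the product.  The main technical care needed is with the target manifold: $\mathcal{Y}$ is a $5$-dimensional subvariety of the $8$-dimensional $\mathbb{P}(\mathbb{R}^{3\times 3})$ cut out by Demazure's ten cubics together with the determinantal equation, so equipping $T(\mathcal{Y}, E)$ with an intrinsic orthonormal basis is not immediate.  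I would sidestep this by observing that the image of $D\Psi(w)$ already spans $T(\mathcal{Y}, E)$ whenever $D\Phi(w)$ is invertible (since then $\Psi \circ \Phi^{-1}$ is a local submersion onto its image), so that Gram--Schmidt on these columns yields the desired orthonormal basis and closes the argument.
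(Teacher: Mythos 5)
Your approach is essentially the paper's: work in the coordinates of the double cover from Lemma~\ref{lem:double}, write $D\Phi(w)$ as a $20 \times 20$ matrix (the paper factors it further as $D\Phi_2 \circ D\Phi_1$ but that is merely presentational), write $D\Psi(w)$ as a $5 \times 20$ matrix relative to an orthonormal basis of $T(\mathcal{Y}, E)$, and read off the condition number as $\sigma_{\max}$ of the product. The Gram--Schmidt you invoke is equivalent to the paper's Gram-matrix trick (Lemma~\ref{lem:linear-alg}).

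One small gap: your justification that the columns of the vectorized $D\Psi(w)$ span $T(\mathcal{Y}, E)$ does not hold up. You assert that invertibility of $D\Phi(w)$ forces $\Psi \circ \Phi^{-1}$ to be a ``local submersion onto its image,'' but this is vacuous (every smooth map is a submersion onto its image in the generic sense) and does not guarantee the image is $5$-dimensional — nor does it logically depend on $D\Phi$ being invertible at all. The correct and unconditional reason is that the output map $\Psi$ factors through the parameterization $(\mathbf{R}, \hat{\mathbf{T}}) \mapsto [\hat{\mathbf{T}}]_{\times}\mathbf{R}$ from $\operatorname{SO}(3) \times \mathbb{S}^2$ onto the essential variety, and this parameterization is a submersion (the paper isolates this as Lemma~\ref{lem:E-submer}). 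Once you replace your justification with that submersion fact, the Gram--Schmidt step is sound and the rest of your argument matches the paper.
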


\smallskip

\noindent {\textbf{Sketch:}}   
By Definition~\ref{def:cond-w}, if it is finite then 
\begin{equation}\label{eq:proof-cond1}
\operatorname{cond}(\mathscr{M}, w) = \sigma_{\max}(D \Psi(w) \circ D \Phi(w)^{-1})
\end{equation}
where $\Phi$ and $\Psi$ are the forward and output maps given by \eqref{eq:E-forward} and \eqref{eq:E-psi-map}.  
We factor the forward map as
\begin{equation*}
\Phi = \Phi_2 \circ \Phi_1,
\end{equation*}
 where  
$\Phi_1 : \mathcal{U} \subseteq \operatorname{SO}(3) \times \mathbb{S}^2 \times (\mathbb{R}^3)^{\times 5} \rightarrow (\mathbb{R}^{3} \times \mathbb{R}^3)^{\times 5}$ 
 is 
\begin{align} 
    &\Phi_1(\R, \hat{\mathbf{T}}, \Gamma_1, \ldots, \Gamma_5) = \nonumber \\  
    &((\Gamma_1, \R\Gamma_1 + \hat{\mathbf{T}}), \ldots, (\Gamma_5, \R\Gamma_5 + \hat{\mathbf{T}})), \label{eq:E-Phi1-main}
\end{align}
and $\Phi_2: (\mathbb{R}^{3} \times \mathbb{R}^3)^{\times 5} \dashrightarrow (\mathbb{R}^2 \times \mathbb{R}^2)^{\times 5}$ is 
\begin{align} 
&\Phi_2((\Gamma_1,\bar{\Gamma}_1), \ldots, (\Gamma_5, \bar{\Gamma}_5)) = \nonumber \\  &((\pi(\Gamma_1), \pi(\bar{\Gamma}_1)), \ldots, (\pi(\Gamma_5),\pi(\bar{\Gamma}_5))) \label{eq:E-Phi2-main}
\end{align}
where the broken arrow indicates a rational map and $\pi$ is perspective projection. 
By the chain rule, 
\begin{equation}\label{eq:E-cond2}
D \Phi = D \Phi_2 \circ D \Phi_1.
\end{equation}
Here $\Phi_1$ is multi-affine in $\Gamma_i,  \R, \hat{\mathbf{T}}$, while $\Phi_2 = (\pi \times \pi)^{\times 5}$, so they are easily differentiated.  
See the appendices for explicit expressions, 
with respect to an orthonormal basis of $T(\mathcal{X}, x)$. 
Meanwhile $\Psi$ is $(\R,\hat{\mathbf{T}}, \Gamma_1, \ldots, \Gamma_7) \mapsto [\hat{\mathbf{T}}]_{\times} \R$.
See the appendices for its Jacobian matrix with respect to an orthonormal basis for $T(\mathcal{Y},y)$.
Inserting these into \eqref{eq:proof-cond1} finishes the computation. \hfill $\square$

\medskip
\smallskip

\begin{proposition}
\label{prop:formula-F}
Let $\mathscr{M}$ be the $7$-point problem from Section~\ref{example:fundamental}.  
Let $w = (b, \Gamma_1, \ldots, \Gamma_7)$ be a world scene, expressed using the coordinates in Lemma~\ref{lem:b-coords}.  
If it is finite,  then $\operatorname{cond}(\mathscr{M}, w)$ equals the largest singular value of an explicit $7 \times 28$ matrix.  This matrix naturally factors as a $7 \times 28$ matrix multiplied by a $28 \times 28$ matrix.
\end{proposition}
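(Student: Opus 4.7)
The plan is to mirror the sketch of Proposition~\ref{prop:formula-E}, working in the projective coordinates supplied by Lemma~\ref{lem:b-coords}. By Definition~\ref{def:cond-w}, if finite,
\begin{equation}\label{eq:F-cond-prop}
\operatorname{cond}(\mathscr{M}, w) = \sigma_{\max}\bigl(D\Psi(w) \circ D\Phi(w)^{-1}\bigr),
\end{equation}
computed in the Riemannian metrics of Section~\ref{sec:spaces}. Since $\dim(\mathcal{W}) = \dim(\mathcal{X}) = 28$ and $\dim(\mathcal{Y}) = 7$, this is the largest singular value of a $7 \times 28$ matrix, and the task is just to exhibit it as the claimed matrix product.

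As in the $5$-point case, I would factor $\Phi = \Phi_2 \circ \Phi_1$, where
\begin{equation*}
\Phi_1(b, \Gamma_1, \ldots, \Gamma_7) = \bigl((\Gamma_1, \mathbf{M}(b)\tilde{\Gamma}_1), \ldots, (\Gamma_7, \mathbf{M}(b)\tilde{\Gamma}_7)\bigr)
\end{equation*}
takes values in $(\mathbb{R}^3 \times \mathbb{R}^3)^{\times 7}$ and $\Phi_2 = (\pi \times \pi)^{\times 7}$ dehomogenizes each pair as in \eqref{eq:E-Phi2-main}. The chain rule gives $D\Phi(w) = D\Phi_2(\Phi_1(w)) \circ D\Phi_1(w)$. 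Since $\Phi_1$ is bilinear in $b$ and $(\Gamma_i)$, its $42 \times 28$ Jacobian is immediate from the entries of $\mathbf{M}(b)$ and $\tilde{\Gamma}_i$, while $D\Phi_2$ is block-diagonal with $2 \times 3$ blocks given by the standard derivative of $\pi$. Composing and inverting with respect to an orthonormal basis of $T(\mathcal{X}, \Phi(w))$ produces the $28 \times 28$ factor.

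Next I would compute $D\Psi(w)$. From \eqref{eq:fund-output}, $\Psi$ depends only on $b$, so the $21$ columns of $D\Psi(w)$ indexed by the $(\Gamma_i)$-directions vanish. For the remaining $7$ columns I would differentiate the cofactor formula $F_{ij}(b) = (-1)^{i+j} \det\!\bigl(\begin{smallmatrix}(I\,\,0)_{\hat j}\\ \mathbf{M}(b)_{\hat i}\end{smallmatrix}\bigr)$, which is polynomial in $b$; this is routine symbolic differentiation.

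The main obstacle will be passing from these raw $3 \times 3$-matrix-valued derivatives of $F$ to coordinates on the $7$-dimensional tangent space $T(\mathcal{Y}, \Psi(w))$. Using the SVD $F = U\Sigma V^{\top}$ with $\sigma_3 = 0$, the rank-$\leq 2$ tangent hyperplane is $\{\dot F \in \mathbb{R}^{3 \times 3} : u_3^{\top} \dot F v_3 = 0\}$, and the projective quotient removes the line $\mathbb{R}F$; intersecting these and applying Gram--Schmidt under the Frobenius inner product gives an explicit orthonormal basis. Projecting the $\partial F/\partial b_k$ onto it yields the $7 \times 28$ factor, and substitution into \eqref{eq:F-cond-prop} completes the argument.
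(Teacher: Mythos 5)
Your proposal follows the paper's proof in all essentials: factor $\Phi=\Phi_2\circ\Phi_1$, compute the $42\times 28$ Jacobian of the multi-affine $\Phi_1$ and the $28\times 42$ block-diagonal Jacobian of $\Phi_2=(\pi\times\pi)^{\times 7}$ in the Euclidean (hence already orthonormal) coordinates on $T(\mathcal{X},x)$, observe that $\Psi$ depends only on $b$ so its $\Gamma$-columns vanish, and assemble the $7\times 28$ product.

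The one place you deviate is in handling the Riemannian metric on $T(\mathcal{Y},F)$. You propose to differentiate the cofactor formula to obtain the raw derivatives $\partial F/\partial b_k\in\mathbb{R}^{3\times 3}$, then build an orthonormal basis of the $7$-dimensional tangent space $\{\dot F:\langle\dot F,F\rangle=0,\ u_3^{\top}\dot F v_3=0\}$ by SVD plus Gram--Schmidt, and project. This is correct (your SVD description of $T(\mathcal{F},F)$ agrees with the paper's epipole-based characterization, since $\mathbf{e}\propto v_3$ and $\mathbf{e}'\propto u_3$), but it is slightly more work than the paper's device: the paper takes the $\partial F/\partial b_k$ themselves as a non-orthonormal basis of $T(\mathcal{F},F)$, so $D\Psi$ becomes the trivial selector matrix $(I_{7\times 7}\ 0_{7\times 21})$ (up to column ordering) in that basis, and then corrects to an orthonormal basis by a single left-multiplication with the square root of the $7\times 7$ Gram matrix of those vectors (Lemma~\ref{lem:linear-alg}). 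Your approach does the orthonormalization by hand and then does a genuine projection; the paper's defers orthonormalization to the Gram-matrix factor, which keeps $D\Psi$ maximally simple. Both produce the same $7\times 28$ matrix and the same factorization claimed in the statement.
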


\noindent {\textbf{Sketch:}} 
If the condition number is finite, then 
\begin{equation}
\operatorname{cond}(\mathscr{M}, w) = \sigma_{\max}(D \Psi(w) \circ D \Phi(w)^{-1})
\end{equation}
where $\Phi$ and $\Psi$ are given by \eqref{eq:F-forward} and \eqref{eq:fund-output}.  
We factor 
\begin{equation*}
\Phi = \Phi_2 \circ \Phi_1,
\end{equation*}
 where $\Phi_1 : \mathcal{U} \subseteq \mathbb{R}^7 \times (\mathbb{R}^3)^{\times 7} \rightarrow ({\mathbb{R}}^{3} \times {\mathbb{R}}^3)^{\times 7}$ is 
\begin{align}\label{eq:Phi1-7pt}
    &\Phi_1(b, \Gamma_1, \ldots, \Gamma_7) =  
    (({\Gamma}_1, \mathbf{M}(b) \tilde{\Gamma}_1), \ldots, ({\Gamma}_7, \mathbf{M}(b)\tilde{\Gamma}_7)),
\end{align} 
and $\Phi_2: ({\mathbb{R}}^{3} \times {\mathbb{R}}^3)^{\times 7} \dashrightarrow (\mathbb{R}^2 \times \mathbb{R}^2)^{\times 7}$ is 
\begin{align}\label{eq:Phi2-7pt}
&\Phi_2((\Gamma_1,\bar{\Gamma}_1), \ldots, (\Gamma_7, \bar{\Gamma}_7)) = \\ \nonumber &((\pi(\Gamma_1), \pi(\bar{\Gamma}_1)), \ldots, (\pi(\Gamma_7),\pi(\bar{\Gamma}_7))).
\end{align}
Using the chain rule and an orthonormal basis for $T(\mathcal{X}, x)$, we compute $D \Phi$ in the appendices. 
We also explicitly compute the Jacobian $D\Psi$ with respect to an orthonormal basis for $T(\mathcal{Y}, y)$. \hfill $\square$

\subsection{Ill-posed world scenes}\label{sec:ill-pose-world}
In this section, we derive geometric characterizations of ill-posed world scenes.
The results are in terms of the existence of certain quadric surfaces in $\mathbb{R}^3$.  
The proofs are involved.

\begin{figure}[h]
    \centering
    (a)\includegraphics[height=0.325\linewidth]{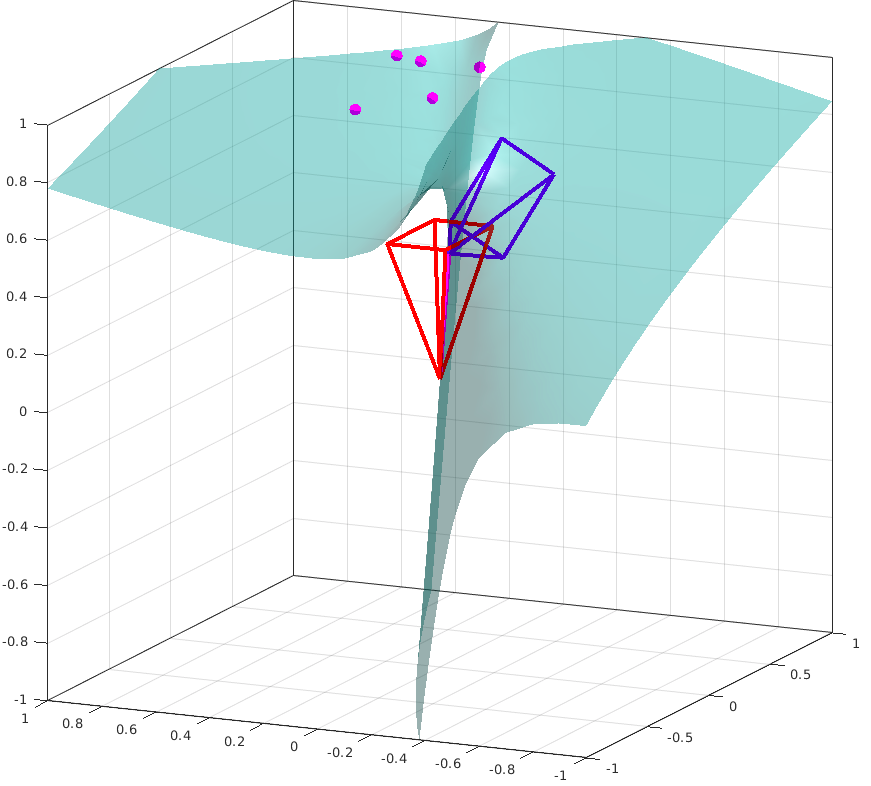}
    (b)\includegraphics[height=0.37\linewidth]{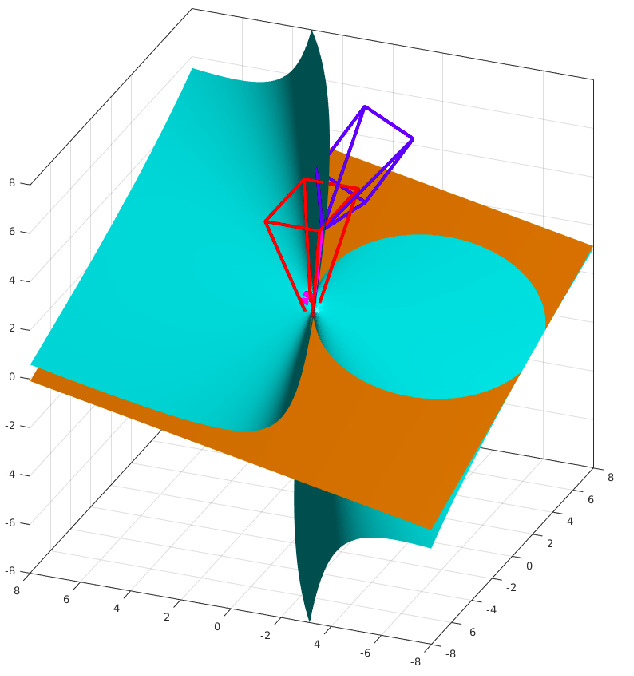}
    (c)\includegraphics[height=0.25\linewidth]{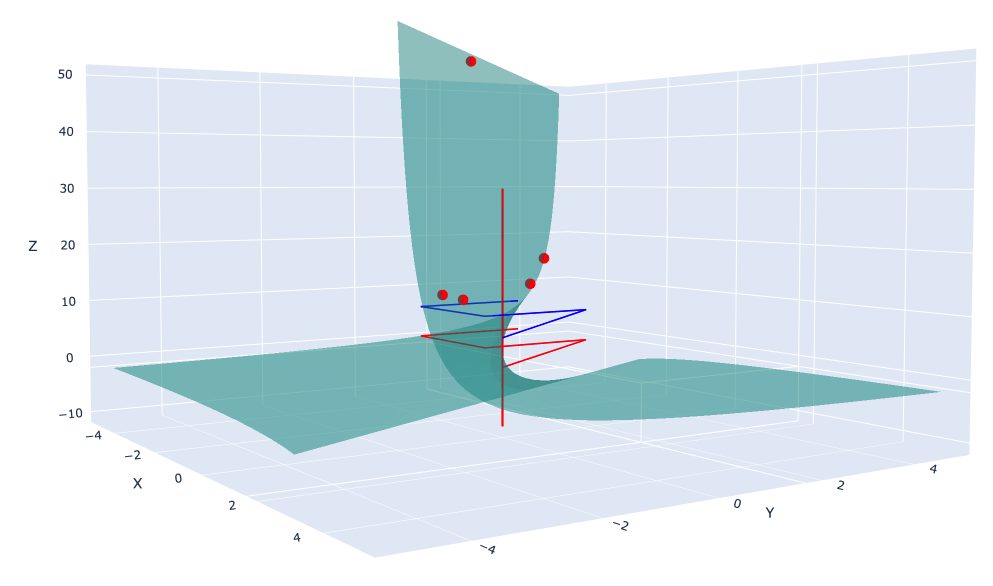}
    (d)\includegraphics[height=0.25\linewidth]{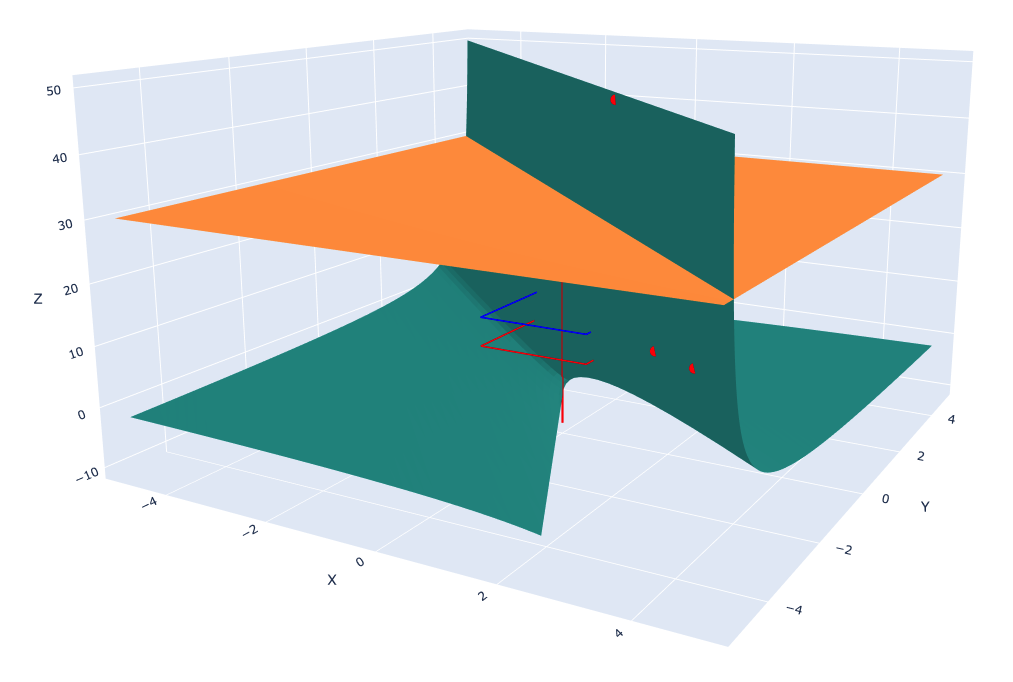}
    \caption{(a) An ill-posed world scene in the calibrated case. Red and blue pyramids represent two cameras. Magenta points represent five world points. The green surface is a quadric surface satisfying the conditions of Theorem~\ref{thm:illposed-world}.  The view (b) shows an orange plane perpendicular to the baseline whose intersection with the quadric surface is a circle. (c) Another ill-posed world scene for the $5$-point problem, in the case where a plane perpendicular to the baseline intersect with the quadric surface in a line. View (d) shows the orange plane perpendicular to the baseline.} 
    \label{fig:surface}
\end{figure}

\begin{theorem}
\label{thm:illposed-world}
Let $\mathscr{M}$ be the $5$-point problem from Section~\ref{example:essential}.
Let $w = (\R, \hat{\mathbf{T}}, \Gamma_1, \ldots, \Gamma_5) \in \mathcal{W}$ be a world scene, expressed using the coordinates in Lemma~\ref{lem:double}. 
Then $w \in \operatorname{ill}(\mathscr{M}, \mathcal{W})$  
if and only if there exists a quadric surface $\mathcal{Q}$ in  $\mathbb{R}^3$ such \nolinebreak that:
\begin{itemize}\setlength\itemsep{0.35em}
    \item $\Gamma_i \in \mathcal{Q}$ for each $i =1 , \ldots, 5$; 
    \item $\ell \subseteq \mathcal{Q}$ where $\ell = \operatorname{span} \, (\mathbf{R}^{\top} \hat{\mathbf{T}})$ is the baseline through the two camera centers; and
    \item $\mathcal{N} \cap \mathcal{Q}$ is a circle for each affine plane $\mathcal{N}$ in $\mathbb{R}^3$ normal to $\ell$ (possibly of radius $0$), or $\mathcal{N} \cap \mathcal{Q}$ is a line for each affine plane $\mathcal{N}$ in $\mathbb{R}^3$ normal to $\ell$ (with at most one exception).
\end{itemize}
\end{theorem}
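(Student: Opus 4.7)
My plan is to make the rank-deficiency of $D\Phi(w)$ explicit in the coordinates from Lemma~\ref{lem:double}, and then translate the resulting linear-algebraic condition into the existence of a quadric surface with the three listed properties. Via the double cover, a tangent vector at $w$ lifts to $(\delta\R, \delta\hat{\mathbf{T}}, \delta\Gamma_1, \ldots, \delta\Gamma_5)$ with $\delta\R = \R[\omega]_\times$ for some $\omega \in \mathbb{R}^3$ and $\delta\hat{\mathbf{T}} = \R a$ with $a \in T^\perp$, using the shorthand $T := \R^\top \hat{\mathbf{T}}$ (so $\ell = \operatorname{span}(T)$ in the first-camera frame). Vanishing of $D\Phi(w)$ on this tangent vector says $\delta\gamma_i = \delta\bar{\gamma}_i = 0$ for each $i$, which forces $\delta\Gamma_i = t_i \Gamma_i$ and $\R[\omega]_\times \Gamma_i + \R\delta\Gamma_i + \R a = s_i (\R \Gamma_i + \hat{\mathbf{T}})$ for some scalars $t_i, s_i$. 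Multiplying by $\R^\top$ and eliminating $\delta\Gamma_i$ reduces these equations to the per-point condition
\begin{equation*}
\omega \times \Gamma_i + a \;\in\; \operatorname{span}(\Gamma_i, T) \qquad (i = 1, \ldots, 5).
\end{equation*}
Hence $w$ is ill-posed if and only if there exists a nonzero $(\omega, a) \in \mathbb{R}^3 \oplus T^\perp$ satisfying these five conditions.

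Next I would dualize each condition via $(\omega \times \Gamma_i + a) \cdot (\Gamma_i \times T) = 0$ and expand using Lagrange's identity, obtaining $Q(\Gamma_i) = 0$ for the explicit quadratic form
\begin{equation*}
Q(X) = (\omega \cdot X)(T \cdot X) - (\omega \cdot T)\|X\|^2 + a \cdot (X \times T).
\end{equation*}
Set $\mathcal{Q} := \{X \in \mathbb{R}^3 : Q(X) = 0\}$. I would then verify the three claimed properties directly: $\Gamma_i \in \mathcal{Q}$ by construction; $Q(\lambda T) = 0$ for every $\lambda$ (both the quadratic and linear parts vanish on $\operatorname{span}(T)$), so $\ell \subseteq \mathcal{Q}$; and for any plane $\mathcal{N}$ normal to $\ell$, writing $X = cT + Y$ with $Y \perp T$ and substituting gives $Q|_{\mathcal{N}} = -(\omega \cdot T)\|Y\|^2 + (\text{linear in } Y)$, so $\mathcal{N} \cap \mathcal{Q}$ is a circle when $\omega \cdot T \neq 0$.

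For the converse, I would show that every quadric $\{X^\top A X + b \cdot X = 0\}$ in $\mathbb{R}^3$ satisfying the three properties is of the above form. The properties translate to $T^\top A T = 0$, $b \cdot T = 0$, and $PAP = \lambda P$ for some scalar $\lambda$, where $P$ is orthogonal projection onto $T^\perp$. A dimension count shows the pairs $(A, b)$ satisfying these constraints form a five-dimensional linear space, matching the dimension of $\mathbb{R}^3 \oplus T^\perp$. A direct calculation reads off $\omega$ from $A$ (with $\omega \cdot T = -\lambda$ and $P\omega$ recovered from the off-block of $A$) and $a$ from $b$ (inverting $a \mapsto T \times a$ on $T^\perp$), establishing surjectivity of the parametrization up to scale.

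The main obstacle will be handling degenerate configurations. For an index $i$ with $\Gamma_i$ collinear with $T$ the dualization step becomes vacuous and the condition must be reanalyzed from the original inclusion; such $i$ contribute additional free parameters that must be tracked. Moreover, in the case $\omega \cdot T = 0$ the form $Q$ degenerates to a pair of planes through $\ell$, whose intersection with a normal plane is a line rather than a genuine circle. Thus the ``only if'' direction either requires a separate argument that a solution with $\omega \cdot T \neq 0$ can always be chosen whenever the kernel is nontrivial, or a reading of ``circle'' broad enough to include degenerate conics (a line, point, or empty set). Outside of these exceptional loci the argument is a routine calculation.
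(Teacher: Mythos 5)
Your derivation follows essentially the same route as the paper: factor $D\Phi = D\Phi_2 \circ D\Phi_1$, use that $\ker D\pi(X) = \mathbb{R}X$ to reduce the rank-deficiency of $D\Phi(w)$ to a per-point linear condition, eliminate the free scalars ($t_i, s_i$ in your notation; $\lambda_i, \mu_i$ in the paper's) by projecting onto the orthogonal complement of $\operatorname{span}(\Gamma_i, T)$, and read off a quadratic form in $\Gamma_i$. Your explicit formula $Q(X) = (\omega\cdot X)(T\cdot X) - (\omega\cdot T)\|X\|^2 + a\cdot(X\times T)$ is exactly the paper's $Q(\dot{s},\dot{\alpha})$ after the coordinate rotation sending $T \mapsto e_3$ (match $\omega = -\dot{s}$, $a = \dot{\alpha}$); your vectorial presentation avoids that rotation and is arguably cleaner. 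For the converse, the paper simply asserts that every quadric with the three properties arises from some $(\dot{s},\dot{\alpha})$; your linear-algebraic dimension count supplies the argument.

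On the two degenerate loci you flag: both are genuine, and the paper deals with them. For $\Gamma_i$ on the baseline, the paper's proof branches into a ``Case 1'' where $((\bar{\Gamma}_i)_1, (\bar{\Gamma}_i)_2) = 0$: it observes that the homogeneous $5 \times 5$ system is automatically rank-deficient there (the $i$-th row vanishes identically), and simultaneously the original kernel condition is satisfied with $\lambda_i \neq 0$ and everything else zero, so ill-posedness and the existence of a quadric are both automatic -- the equivalence holds vacuously. For $\omega\cdot T = 0$ (\ie $\dot{s}_3 = 0$), your observation that $\mathcal{N}\cap\mathcal{Q}$ degenerates to a line is correct, and the theorem is intended with the broad, projective reading of ``circle'' -- the paper's remark after the theorem identifies $\mathcal{Q}$ with a ``rectangular quadric'' in the sense of Maybank, a class which includes these degenerate members; equivalently, the third bullet should be read as saying the conic $\mathcal{N}\cap\mathcal{Q}$ passes through the two circular points at infinity of $\mathcal{N}$, which a line does trivially. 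You cannot in general avoid this case by re-choosing $(\omega,a)$: for example, five world points lying in a plane through the baseline force $\dot{s}_3 = 0$ for every kernel vector, so the broad interpretation is not optional. With that reading fixed, your proposal is complete and matches the paper's argument.
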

\noindent {\textbf{Sketch:}} 
Let $\delta w = (\delta \mathbf{R}, \delta \hat{\mathbf{T}}, \delta \Gamma_1, \ldots, \delta \Gamma_5) \in T(\mathcal{W}, w)$ stand for a tangent vector to $\mathcal{W}$ at $w$.  
We need to characterize the world scenes $w$ such that 
\begin{equation} \label{eq:E-linearsys}
D\Phi(w)(\delta w) = 0
\end{equation}
admits a nonzero solution in $\delta w$.  
In the appendices, we show there exists an invertible linear change of coordinates in \eqref{eq:E-linearsys}, so that without loss of generality we may assume $\mathbf{R} = I$ and $\hat{\mathbf{T}} = \mathbf{e}_3$.

Recall $D \Phi = D \Phi_2 \circ D \Phi_1$ as in \eqref{eq:E-cond2}.  
Here the second map is $\Phi_2 = (\pi \times \pi)^{\times 5}$, its Jacobian is $D \Phi_2 = (D\pi \oplus D\pi)^{\oplus 5}$, and its kernel is $\operatorname{ker}(D\Phi_2)((\Gamma_1, \bar{\Gamma}_1), \ldots, (\Gamma_5, \bar{\Gamma}_5)) = \mathbb{R} \Gamma_1 \oplus \ldots \oplus \mathbb{R} \bar{\Gamma}_5$.  
Additionally, $D \Phi_1$ takes the explicit form in Proposition~\ref{prop:formula-E}.
In the appendices, we show that \eqref{eq:E-linearsys} can be reduced to the $5 \times 5$ linear system:
\begin{multline}
 a ((\Gamma_i)_1^2 \, + \, (\Gamma_i)_2^2) \, + \, b (\Gamma_i)_1 (\Gamma_i)_3 
\, + \, c (\Gamma_i)_2 (\Gamma_i)_3  \\[0.5pt]
 + \, d  (\Gamma_i)_1 \, + \, e (\Gamma_i)_2 \,\, = \,\, 0   \label{eq:my-reduced-linear}
\end{multline}
for $i=1, \ldots, 5$, in  the sense that \eqref{eq:my-reduced-linear} has a nonzero solution in $a, \ldots, e$   
if and only \eqref{eq:E-linearsys} has a nonzero solution in $\delta w$.  To finish, we note that there is a nonzero solution to \eqref{eq:my-reduced-linear} if and only if there exists  $\mathcal{Q}$ passing through $\Gamma_1, \ldots, \Gamma_5$ as announced, with $a \neq 0$ corresponding to circular cross sections in the third bullet of the theorem and $a = 0$ corresponding to linear cross sections.
\hfill $\square$

See Fig.~\ref{fig:surface} for visualizations of  \nolinebreak Theorem~\ref{thm:illposed-world}.  
Note that the second bullet in Theorem~\ref{thm:illposed-world} implies $\mathcal{Q}$ is ruled, \ie covered by lines. 
Further, \eqref{eq:my-reduced-linear} implies that $\mathcal{Q}$ is a ``rectangular quadric" as in \cite[Thm.~3.13]{maybank2012theory}

\begin{theorem} 
\label{thm:F-ill-posed-world}
Let $\mathscr{M}$ be the $7$-point problem from Section~\ref{example:fundamental}.
 Let $w = (b, \Gamma_1, \ldots, \Gamma_7) \in \mathcal{W}$, expressed using the coordinates in Lemma~\ref{lem:b-coords}.
Then $w \in \operatorname{ill}(\mathscr{M}, \mathcal{W})$ if and only if there exists a quadric surface $\mathcal{Q}$ in  $\mathbb{R}^3$ such that:
\begin{itemize}\setlength\itemsep{0.35em}
    \item $\Gamma_i \in \mathcal{Q}$ for each $i = 1, \ldots, 7$; and
    \item $\ell \subseteq \mathcal{Q}$ where $\ell$ is the baseline through the two camera centers.
\end{itemize}
\end{theorem}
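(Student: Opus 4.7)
The plan is to mirror the proof sketch of Theorem~\ref{thm:illposed-world}, now adapted to the projective setting.  Write $\delta w = (\delta b, \delta \Gamma_1, \ldots, \delta \Gamma_7) \in T(\mathcal{W}, w)$ and characterize those world scenes for which $D \Phi(w)(\delta w) = 0$ admits a nonzero solution.  Using the factorization $\Phi = \Phi_2 \circ \Phi_1$ from \eqref{eq:Phi1-7pt}--\eqref{eq:Phi2-7pt} together with $\operatorname{ker}(D \pi)(x,y,z) = \mathbb{R}(x,y,z)^{\top}$, the condition is equivalent to the existence of scalars $\mu_i, \bar{\mu}_i \in \mathbb{R}$ such that $\delta \Gamma_i = \mu_i \Gamma_i$ and $\delta \mathbf{M}(b)\, \tilde{\Gamma}_i + \mu_i \mathbf{M}(b)(\Gamma_i; 0) = \bar{\mu}_i \mathbf{M}(b)\, \tilde{\Gamma}_i$ for $i = 1, \ldots, 7$.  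Eliminating $\delta \Gamma_i$ yields a homogeneous linear system in the $7 + 7 + 7 = 21$ unknowns $(\delta b, \mu_i, \bar{\mu}_i)$.

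Next, I would exploit the $\operatorname{PGL}(4)$ symmetry together with the atlas of Lemma~\ref{lem:b-coords} to choose coordinates in which the baseline $\ell$ lies along a fixed coordinate axis, analogous to the reduction $\mathbf{R} = I$, $\hat{\mathbf{T}} = \mathbf{e}_3$ used in the calibrated case.  This is justified because an invertible change of world coordinates cannot alter the rank of $D\Phi(w)$.  In these reduced coordinates, the seven components of $\delta b$ are in bijection with the seven-dimensional linear space of quadratic forms $Q$ on $\mathbb{R}^3$ that vanish along $\ell$, since a general quadric has $10$ coefficients and containing a line imposes $3$ linear conditions on them.

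Under this identification, after eliminating the auxiliary scalars $\mu_i$ and $\bar{\mu}_i$, the remaining equations in the linear system simplify to the seven incidence conditions $Q(\Gamma_i) = 0$ where $Q$ ranges over the seven-dimensional space of quadrics containing $\ell$.  A nonzero $\delta w$ in the kernel therefore corresponds bijectively to a nonzero quadric $\mathcal{Q} \subseteq \mathbb{R}^3$ that passes through the seven world points and contains the baseline, giving both directions of the theorem.

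The main obstacle is the explicit elimination of $\mu_i$ and $\bar{\mu}_i$ together with the dimension-counting argument that exhibits the linear map from $\delta b$ to the space of quadrics containing $\ell$ as an isomorphism.  This is also where the uncalibrated case departs from Theorem~\ref{thm:illposed-world}: the two extra degrees of freedom in $\delta b$ (seven, versus five for $(\delta \mathbf{R}, \delta \hat{\mathbf{T}})$) precisely absorb the ``rectangular'' constraint imposed by calibration, which is why only the incidence conditions on $\mathcal{Q}$ appear in the present theorem and no analogue of the third bullet from Theorem~\ref{thm:illposed-world} is required.
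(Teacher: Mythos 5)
Your outline correctly mirrors the structure of the paper's argument (factor $D\Phi = D\Phi_2 \circ D\Phi_1$, compute $\ker D\Phi_2$, eliminate auxiliary scalars, interpret the residual linear system as incidence with a quadric), but you leave precisely the two nontrivial steps unproved, and one of your intermediate moves does not go through as stated.

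First, you keep $\bar\mu_i$ as free unknowns, but they are forced to vanish: the third row of $\mathbf{M}(b)$ is the constant $(0\,\,0\,\,0\,\,1)$, so $\delta\mathbf{M}(b)$ has a zero third row and $\mathbf{M}(b)(\Gamma_i;0)$ has zero third entry, while $(\mathbf{M}(b)\tilde\Gamma_i)_3 = 1 \neq 0$; equating third coordinates kills $\bar\mu_i$ immediately. This observation is essential — it collapses the system to the first two coordinates and makes the single elimination of $\mu_i$ produce exactly one scalar equation per $i$, which is what gives a square $7\times 7$ system in $\delta b$.

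Second, the proposed reduction to coordinates in which the baseline lies along a fixed axis is not available within the chart of Lemma~\ref{lem:b-coords}: the $\operatorname{PGL}(4)$ freedom is already fully spent normalizing the first camera to $(I\,\,0)$ and the second to the $\mathbf{M}(b)$ form (the residual stabilizer is trivial, as used in the proof of Lemma~\ref{lem:b-coords}). The calibrated proof can rotate $\hat{\mathbf{T}}$ to $\mathbf{e}_3$ because the residual $\operatorname{SO}(3)$ symmetry is still available there; the analogue fails here. The paper therefore works with a general $b$: it computes the baseline explicitly by Cramer's rule as $\operatorname{span}(b_2 b_5 - b_1 b_6,\; -b_2 b_4 + b_6,\; b_1 b_4 - b_5)^{\!\top}$ and exhibits concrete matrices $Q_1,\dots,Q_7$ depending on $b$.

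Third, and most importantly, your dimension count only shows that the $\delta b$-space and the space of quadrics through $\ell$ are both $7$-dimensional; it does not show the linear map $\delta b \mapsto Q(\delta b)$ is an isomorphism. This is exactly where the real content lies, and the paper devotes the bulk of the proof to checking that each $Q_j$ corresponds to a quadric containing the baseline and that $Q_1,\dots,Q_7$ are linearly independent (a rank argument using the full-rank structure of $\mathbf{M}(b)$). You explicitly flag this as the "main obstacle" but do not resolve it, so the proof is incomplete rather than wrong; to close it you need the explicit basis $\{Q_j\}$ and the linear-independence verification, or some other argument for surjectivity/injectivity of $\delta b \mapsto Q(\delta b)$.
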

\noindent {\textbf{Sketch:}} 
Let $\delta w = (\delta b, \delta \Gamma_1, \ldots, \delta \Gamma_7) \in T(\mathcal{W}, w)$ be a tangent vector.  
Again we wish to characterize when 
\begin{equation}\label{eq:solvability-DF}
D\Phi(w)(\delta w) = 0
\end{equation}
has a nonzero solution in $\delta w$.  
We use $D \Phi = D \Phi_2 \circ D \Phi_1$, where $D\Phi_2 = (D \pi \oplus D \pi)^{\oplus 7}$ and $D\Phi_1$ is as in the proof of Proposition~\ref{prop:formula-F}.  
Eliminating variables, we reduce \eqref{eq:solvability-DF} to a particular $7 \times 7$ linear system.  Its solvability is equivalent to there existing a quadric surface $\mathcal{Q}$ as described.  See details in the appendices.
\hfill $\square$

\subsection{Ill-posed image data}
\label{sec:ImageData}

We describe ill-posed image data for the $5$- and $7$-point problems.

\begin{theorem}
\label{thm:image-E}
Let $\mathscr{M}$ be the $5$-point problem from Section~\ref{example:essential} 
Let $x = \left((\gamma_1, \bar{\gamma}_1), \ldots, (\gamma_5,\bar{\gamma}_5)\right) \in (\mathbb{R}^2 \times \mathbb{R}^2)^{\times 5}$ be image data.
Then $x \in \operatorname{ill}(\mathscr{M}, \mathcal{X})$  
only if a certain polynomial $\mathbf{P}$ in the entries of $\gamma_1, \ldots, \bar{\gamma}_5$ vanishes.  
Further, $\mathbf{P}$ has degree $30$ separately in each of $\gamma_1, \ldots, \bar{\gamma}_5$. 
\end{theorem}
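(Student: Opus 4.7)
The plan is to invoke the computational-algebra tool of Section~\ref{sec:comp-ag}. Let $\mathbf{L}$ be the epipolar map of Example~\ref{ex:mathL}, so that $\mathbf{L}(x)\in\mathbb{P}(\mathbb{R}^{5\times 9})$, the subspace $\operatorname{ker}\mathbf{L}(x)\subseteq\mathbb{P}^8$ generically has codimension $5$, and $\mathcal{Y}\cap\operatorname{ker}\mathbf{L}(x)$ is the set of essential-matrix solutions to the $5$-point problem at $x$. The first step is to verify the smooth lifting property of Proposition~\ref{prop:world-lifting}: given $x\notin\operatorname{disc}(\mathscr{M},\mathbf{L})$, a solution $y=\Psi(w)$, and a world scene $w=(\mathbf{R},\hat{\mathbf{T}},\Gamma_1,\ldots,\Gamma_5)$ above $(x,y)$, one builds a smooth local lift $h$ by extracting $(\mathbf{R},\hat{\mathbf{T}})$ from the smooth SVD-based essential-matrix decomposition (selecting the twisted-pair branch through $w$) and then triangulating each $\Gamma_i$ from $(\gamma_i,\bar{\gamma}_i)$; transversality off the discriminant keeps both steps smooth.

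Proposition~\ref{prop:world-lifting} then gives $\operatorname{ill}(\mathscr{M},\mathcal{X})\subseteq\operatorname{disc}(\mathscr{M},\mathbf{L})$. Next, apply Theorem~\ref{thm:hurwitz} with $\mathcal{Y}$ the essential variety in $\mathbb{P}^8$. Its complex Zariski closure has degree $p=10$ \cite{demazure1988deux}, and one checks from the equations in \eqref{eq:essential-ideal} that $p\geq 2$ and that the complex singular locus has codimension at least $2$, so the theorem applies. This produces a nonzero polynomial $\mathbf{P}_{\mathcal{Y}}$ in the $\binom{9}{5}=126$ Pl\"ucker coordinates of codimension-$5$ subspaces of $\mathbb{P}^8$, vanishing whenever $\operatorname{ker}\mathbf{L}(x)$ fails to cut $\mathcal{Y}$ transversally. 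Setting $\mathbf{P}(x):=\mathbf{P}_{\mathcal{Y}}(\operatorname{ker}\mathbf{L}(x))$ yields the asserted polynomial, since $x\in\operatorname{ill}(\mathscr{M},\mathcal{X})$ forces $x\in\operatorname{disc}(\mathscr{M},\mathbf{L})$, which forces $\mathbf{P}(x)=0$.

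Finally, for the degree, the Pl\"ucker coordinates of $\operatorname{ker}\mathbf{L}(x)$ are (up to sign) the $5\times 5$ minors of $\mathbf{L}(x)$; each such minor uses exactly one entry from each row, and since the $i$th row is of degree $\leq 1$ separately in $\gamma_i$ and in $\bar{\gamma}_i$ (and constant in the other correspondences), every Pl\"ucker coordinate has degree $\leq 1$ separately in each of $\gamma_1,\ldots,\bar{\gamma}_5$. By Theorem~\ref{thm:hurwitz}, $\deg\mathbf{P}_{\mathcal{Y}}=2p+2g-2$, where $g$ is the sectional genus of the complex essential variety; an explicit Hilbert-polynomial computation using \eqref{eq:essential-ideal} gives $g=6$, hence $\deg\mathbf{P}_{\mathcal{Y}}=30$. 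Composing a degree-$30$ polynomial in Pl\"ucker coordinates with coordinates that are separately multilinear in the correspondences preserves the separate degrees, so $\mathbf{P}$ has degree $30$ separately in each of $\gamma_1,\ldots,\bar{\gamma}_5$. The main obstacles are verifying the smooth lifting property and computing the sectional genus; the remaining steps are bookkeeping. \hfill $\square$
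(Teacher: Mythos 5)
Your proof is correct and mirrors the paper's own argument: verify the smooth lifting property via the smooth SVD decomposition of $E$ (handling the repeated-singular-value ambiguity) together with triangulation of the $\Gamma_i$, with transversality off $\operatorname{disc}(\mathscr{M},\mathbf{L})$ ensuring distinct backprojection lines; then apply Proposition~\ref{prop:world-lifting} and Theorem~\ref{thm:hurwitz} with $p=10$, $g=6$ to get a degree-$30$ polynomial in Pl\"ucker coordinates, and finish using the multi-affine dependence of the Pl\"ucker coordinates of $\operatorname{ker}\mathbf{L}(x)$ on each $\gamma_i,\bar{\gamma}_i$. The only cosmetic differences are sourcing: the paper takes $p=10$, $g=6$ from \cite[Ex.~2.5]{sturmfels2017hurwitz} and the codimension-$3$ singular locus from \cite[Prop.~2]{floystad2018chow}, whereas you propose recomputing the sectional genus by hand and settling for codimension $\geq 2$ --- both routes are adequate for Theorem~\ref{thm:hurwitz}.
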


\begin{remark}\label{rem:4.5-curve}
An explicit form for $\mathbf{P}$ is currently unavailable.  Nonetheless, we can still compute with \textit{specializations} of $\mathbf{P}$.
Specifically, suppose we fix $\gamma_1, \bar{\gamma}_1, \ldots, \gamma_5$ but keep $\bar{\gamma}_5 \in \mathbb{R}^2$ as a variable.   Then generically,  $\mathbf{P}$ is a degree-$30$ polynomial in just $\bar{\gamma}_5$.  
Its zero set is a plane curve.
We are able to plot such curves numerically; see  Section~\ref{sec:compute-curve}.
\end{remark}

\noindent \noindent {\textbf{Sketch:}} 
As noted in Example~\ref{ex:mathL}, the $5$-point problem has an epipolar map $\mathbf{L}$ in the sense of  Definition~\ref{def:epipolar-map} with $n=8$ and $d=5$.
Therefore Theorem~\ref{thm:hurwitz} applies. 
It implies the existence of a nonzero polynomial $\mathbf{P}$ in the Pl\"ucker coordinates for codimension-$5$ linear subspaces of $\mathbb{P}_{\mathbb{R}}^8$ such that \eqref{eq:Py} holds.  
Here 
$\mathcal{Y}_{\mathbb{C}} = \{ E \in \mathbb{P}(\mathbb{C}^{3 \times 3}) : 2 EE^{\top}E - \operatorname{tr}(EE^{\top})E = 0, \det(E) = 0 \}$.
By \cite[Ex.~2.5]{sturmfels2017hurwitz}, this variety has degree $p = 10$ and sectional genus $g = 6$.
By \cite[Prop.~2]{floystad2018chow}, the singular locus of $\mathcal{Y}_{\mathbb{C}}$ has codimension $3$.  
So, Theorem~\ref{thm:hurwitz} gives the degree of $\mathbf{P}$ as $2 \cdot 10 + 2 \cdot 6 - 2 = 30$ in Pl\"ucker coordinates. 
From the constraints in Example~\ref{ex:mathL}, each maximal minor of $\mathbf{L}(x)$ is multi-affine in $\gamma_1,  \ldots, \bar{\gamma}_5$.  
So, $\mathbf{P}$ has degree $30$ separately in each of $\gamma_1, \ldots, \bar{\gamma}_5$.

To conclude, use Proposition~\ref{prop:world-lifting}.  It suffices to verify the property there, which states that given image data $x$ outside of the discriminant, a compatible world scene, and an essential matrix, we can smoothly locally lift the essential matrix and $x$ to the world scene as $x$ moves.   This is justified in the appendices by \cite[Res.~9.19]{hartleyzisserman} and \cite{tron2017space}.   \nolinebreak \hfill $\square$

\medskip

A similar characterization holds in the uncalibrated case, except it is more explicit.

\begin{theorem}
\label{thm:F-image}
Let $\mathscr{M}$ be the $7$-point problem from Section~\ref{example:fundamental}. 
Let $x=\left((\gamma_1, \bar{\gamma}_1), \ldots, (\gamma_7,\bar{\gamma}_7)\right) \in (\mathbb{R}^2 \times \mathbb{R}^2)^{\times 7}$ be image data.
Then $x \in \operatorname{ill}(\mathscr{M}, \mathcal{X})$ 
only if a certain polynomial $\mathbf{P}$ in the entries of $\gamma_1, \ldots, \bar{\gamma}_7$ vanishes.  
The polynomial has degree $6$ separately in each of $\gamma_1, \ldots, \bar{\gamma}_7$.   
In fact, $\mathbf{P}$ can be expressed as an explicit degree-$6$ polynomial in the $36$ Pl\"ucker coordinates for $2$-dimensional subspaces of $\mathbb{R}^9$, with $1668$ monomial terms and all coefficients integers at most $72$ in absolute value. 
\end{theorem}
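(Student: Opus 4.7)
The plan is to adapt the proof of Theorem~\ref{thm:image-E} to the fundamental variety by combining Proposition~\ref{prop:world-lifting} with Theorem~\ref{thm:hurwitz}. First, I would verify the smooth lifting property of Proposition~\ref{prop:world-lifting} for the 7-point problem. Given image data $x$ outside the discriminant, a fundamental matrix $F \in \mathcal{Y} \cap \ker \mathbf{L}(x)$, and a compatible world scene $w \in \Phi^{-1}(x) \cap \Psi^{-1}(F)$, one extracts a pair of projective cameras from $F$ (unique up to $\operatorname{PGL}(4)$) and triangulates to recover world points, all smoothly in a neighborhood of $x$. This is the standard projective reconstruction theorem, with smoothness justified exactly as in the 5-point case via \cite[Result~9.19]{hartleyzisserman} and \cite{tron2017space}. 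Together with the $7$-point epipolar map $\mathbf{L}$ (a $7\times 9$ matrix whose $i$th row is the vectorization of $[\gamma_i;1]\otimes[\bar\gamma_i;1]$), this gives $\operatorname{ill}(\mathscr{M},\mathcal{X}) \subseteq \operatorname{disc}(\mathscr{M}, \mathbf{L})$.

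Next I would apply Theorem~\ref{thm:hurwitz} to $\mathcal{Y}_{\mathbb{C}} = \{F \in \mathbb{P}(\mathbb{C}^{3\times 3}) : \det(F)=0\} \subseteq \mathbb{P}_{\mathbb{C}}^{8}$. Here $n=8$ and $d=7$, so codimension-$d$ linear subspaces of $\mathbb{P}_{\mathbb{R}}^{8}$ correspond to 2-dimensional linear subspaces of $\mathbb{R}^9$, which have $\binom{9}{2}=36$ Plücker coordinates. I would verify the hypotheses and compute the invariants. The variety $\mathcal{Y}_{\mathbb{C}}$ is a cubic hypersurface, so its degree is $p=3$. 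Its singular locus is cut out by the vanishing of all $2\times 2$ minors (the cofactors of $F$), hence equals the Segre variety $\mathbb{P}^2\times\mathbb{P}^2$ of rank-one matrices, of dimension $4$ and codimension $3$ in $\mathcal{Y}_{\mathbb{C}}$. A general codimension-$(d-1)$ linear section is a smooth plane cubic curve, so the sectional genus is $g=1$. Theorem~\ref{thm:hurwitz} therefore yields $\deg \mathbf{P}_{\mathcal{Y}} = 2p+2g-2 = 6$.

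To translate into the image-data degree bound, I would use that the Plücker coordinates of $\ker \mathbf{L}(x) \subseteq \mathbb{R}^9$ are, up to signs and a permutation of indices, the $36$ maximal minors of the $7\times 9$ matrix $\mathbf{L}(x)$ (via the standard duality between $G(2,9)$ and $G(7,9)$). Each such $7\times 7$ minor uses all seven rows once, so is multilinear in the rows; since row $i$ is bilinear of bi-degree $(1,1)$ in $(\gamma_i,\bar\gamma_i)$ and independent of the other correspondences, every maximal minor has degree at most $1$ separately in each of $\gamma_i$ and $\bar\gamma_i$. Composing with the degree-$6$ polynomial $\mathbf{P}_{\mathcal{Y}}$ then produces a polynomial of degree at most $6$ separately in each of $\gamma_1,\ldots,\bar\gamma_7$, as claimed.

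For the explicit form, I would derive $\mathbf{P}_{\mathcal{Y}}$ as the discriminant of the restriction of the determinantal cubic to a general pencil. Namely, $\det(\lambda F_0+\mu F_1)$ is a binary cubic form in $(\lambda,\mu)$ whose roots correspond to the (up to three) fundamental matrices in the pencil, and the pencil is tangent to $\{\det=0\}$ precisely when this binary cubic has a repeated root. The standard discriminant of a binary cubic is degree $4$ in its coefficients, hence degree $12$ in the entries of $(F_0,F_1)$; by $\operatorname{GL}_2$-invariance under change of basis of the pencil, it descends to a degree-$6$ polynomial in the $36$ Plücker coordinates $p_{ij}=(F_0)_i(F_1)_j-(F_0)_j(F_1)_i$ of the 2-dim span. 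The main obstacle is the last, quantitative claim: extracting the precise count of $1668$ monomials and the coefficient bound $72$ requires an explicit symbolic expansion of the cubic discriminant, substitution of the Plücker coordinates, reduction modulo the quadratic Plücker relations among the $p_{ij}$, and careful auditing of the result in a computer algebra system.
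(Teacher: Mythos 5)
Your proposal matches the paper's argument essentially step for step: verify the smooth lifting property, invoke Sturmfels' theorem with $p=3$, $g=1$ (noting the Segre singular locus has codimension $3$), translate the Pl\"ucker degree bound into per-correspondence degrees via the multilinear maximal minors of $\mathbf{L}(x)$, and realize $\mathbf{P}_\mathcal{Y}$ as a cubic discriminant whose quantitative shape requires a symbolic computation. One small inaccuracy: for the $7$-point problem the paper's smooth lifting argument uses \cite[Result~9.14]{hartleyzisserman}, not Result~9.19 and \cite{tron2017space} — the uncalibrated camera extraction $\bar{P} = \bigl([\mathbf{e}']_\times F \ \ \mathbf{e}'\bigr)$ is a rational function of $F$ (the epipole $\mathbf{e}'$ depends smoothly on $F$), so the SVD/bundle-theoretic machinery needed to handle the repeated singular values of essential matrices is unnecessary here; one still needs to rule out both backprojection lines coinciding, which the paper handles by showing the epipole outer product $\mathbf{e}'\mathbf{e}^\top$ is orthogonal to $T(\mathcal{Y},F)$ (Lemma on fundamental-matrix tangent spaces), contradicting transversality.
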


\begin{remark}\label{rem:6.5-curve}
Suppose we fix $\gamma_1, \bar{\gamma}_1, \ldots, \gamma_7$, but keep $\bar{\gamma}_7$ as a variable.   Then generically,  $\mathbf{P}$ is a degree-$6$ polynomial in just $\bar{\gamma}_7$.  
Its zero set is a plane curve.
In Section~\ref{sec:compute-curve} we compute these curves to visualize $\operatorname{ill}(\mathscr{M}, \mathcal{X})$.
\end{remark}

\noindent {\textbf{Sketch:}} 
The $7$-point problem has an epipolar map, with $n=8$ and $d=7$.
By Theorem~\ref{thm:hurwitz}, there is a nonzero polynomial $\mathbf{P}$ in the $\binom{9}{7} = 36$ Pl\"ucker coordinates for lines in $\mathbb{P}_{\mathbb{R}}^8$ that obeys \eqref{eq:Py}.
Here $\mathcal{Y}_{\mathbb{C}} = \{F \in \mathbb{P}(\mathbb{C}^{3 \times 3}) : \det(F) = 0\}$ has degree $p = 3$ and sectional genus $g=1$.   
So $\mathbf{P}$ has degree $2 \cdot 3 + 2 \cdot 1 - 2 = 6$ in Pl\"ucker coordinates.
As in Theorem~\ref{thm:image-E}, this implies that $\mathbf{P}$ has degree $6$ separately in each of $\gamma_1, \ldots, \bar{\gamma}_7$ as wanted. 
In Section~\ref{sec:compute-curve} we compute $\mathbf{P}$ explicitly, using the formula for the discriminant of a univariate cubic polynomial.  
In the appendices, we verify the smooth lifting property holds using \cite[Res.~9.14]{hartleyzisserman}. \hfill $\square$

\section{Relation Between Ill-Posed ness,  Degeneracy and Criticality }\label{sec:relationship}

In this section we compare our new notions to thoroughly-studied ones in the literature. 

Previous works in multiview geometry have considered degenerate image data and critical world scenes; see Definition~\ref{def:degen} and Definition~\ref{def:critical}. These notions 
fundamentally concern \textit{non-uniqueness}. Studying criticality and degeneracy,  researchers have classified scenarios when there are multiple solutions to the 3D reconstruction problems. 
This line of work is primarily about the super-minimal case.  For example, it answers questions of the following sort: For which scenes of two uncalibrated cameras and $50$ world points does there exist an inequivalent world scene mapping to the same image data?
In particular, degeneracy and criticality seem unhelpful for minimal problems: minimal problems usually have multiple solutions. 

Our concepts of ill-posed world scenes and image data in Definitions~\ref{def:ill-W} and \ref{def:ill-X} are different. They pertain to minimal problems.  Furthermore, instead of non-uniqueness, ill-posedness captures when one (of possibly multiple) solutions is unstable.  Precisely, the condition number is infinite.  

Prior work in two-view geometry classified critical world scenes in terms of quadric surfaces; see \cite{krames1941ermittlung} and \cite{kahl2002critical}. 
In the uncalibrated case, there is a critical ruled quadric in the literature, like ours in Theorem~\ref{thm:F-ill-posed-world}, except it is only required to contain the world points and camera centers and not the baseline.  
The calibrated case is similar, with ``ruled quadric" replaced by ``rectangular quadric".  
Our next result helps clarify the relationship between concepts. 

\begin{proposition}\label{prop:crit-ill} 
Consider a super-minimal world scene $w'$, consisting of two
calibrated cameras and $N$ world points where $N \geq 5$.
Assume that for each subset of $5$ world points, the resulting
minimal world scene $w$ is ill-posed. Then 
$w'$ is critical. The same statement holds with ``calibrated"
and ``5" replaced by ``uncalibrated" and ``7". 

\end{proposition}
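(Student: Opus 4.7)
The plan is to show that if every $5$-element subset of world points gives an ill-posed minimal scene, then linear-algebraic interpolation forces the existence of a single quadric passing through all $N$ world points and the baseline (with the rectangularity constraint in the calibrated case), and then to invoke the classical characterization of critical configurations in terms of such quadrics. The argument has two ingredients: our Theorems~\ref{thm:illposed-world} and~\ref{thm:F-ill-posed-world}, which translate ill-posedness of a minimal sub-scene into the existence of a quadric of a special form; and the classical critical-configuration theorems, which translate the existence of a common such quadric into non-uniqueness of the epipolar geometry for the super-minimal scene.

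In the calibrated case, I would first change world coordinates as in Lemma~\ref{lem:double} so that the cameras are in canonical form with baseline $\ell$ along $\mathbf{e}_3$. As shown in the proof sketch of Theorem~\ref{thm:illposed-world}, the rectangular quadrics in $\mathbb{R}^3$ containing $\ell$ form a fixed $5$-dimensional linear space $\mathcal{V}$ of quadratic polynomials (spanned by $x_1^2 + x_2^2$, $x_1 x_3$, $x_2 x_3$, $x_1$, $x_2$), and Theorem~\ref{thm:illposed-world} says that a $5$-point minimal sub-scene is ill-posed precisely when the evaluation functionals at those $5$ world points fail to be linearly independent on $\mathcal{V}$. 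I would therefore form the $N \times 5$ evaluation matrix $M$ whose $i$th row is the coefficient vector of this functional at $\Gamma_i$. The hypothesis on all $5$-subsets is equivalent to every $5 \times 5$ submatrix of $M$ being singular, so $\operatorname{rank}(M) \leq 4$, and any nonzero element of $\ker M$ supplies a single rectangular quadric $\mathcal{Q}$ containing $\ell$ and all of $\Gamma_1, \ldots, \Gamma_N$.

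Next, I would invoke the classical Krames--Maybank--Kahl characterization: whenever the two camera centers together with $N \geq 5$ world points all lie on a common rectangular ruled quadric, the super-minimal calibrated scene is critical in the sense of Definition~\ref{def:critical} (see \cite{krames1941ermittlung,maybank2012theory,kahl2002critical}). The quadric $\mathcal{Q}$ constructed above satisfies these hypotheses, so $w'$ is critical. The uncalibrated case is entirely parallel, using Theorem~\ref{thm:F-ill-posed-world} in place of Theorem~\ref{thm:illposed-world} and the $7$-dimensional linear space of all quadrics in $\mathbb{R}^3$ containing the baseline (without the rectangular condition). The analogous $N \times 7$ evaluation matrix has all $7 \times 7$ minors vanishing, hence rank at most $6$, yielding a single ruled quadric through all $N$ world points and the baseline; the classical uncalibrated critical-configuration theorem (\cite{maybank2012theory}, \cite[Ch.~22]{hartleyzisserman}) then gives that $w'$ is critical.

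The main obstacle I anticipate is making the identification between the families of quadrics appearing in our ill-posedness characterizations and those appearing in the classical critical-configuration statements airtight: in the calibrated case, verifying that our ``rectangular quadrics containing the baseline'' coincide with Maybank's rectangular quadric family \cite[Ch.~3.2]{maybank2012theory}, and similarly in the uncalibrated case for ruled quadrics containing the baseline. Once these dictionaries are laid out, the rest of the proof is just the rank-based interpolation argument sketched above.
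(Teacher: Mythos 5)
Your proposal matches the paper's proof essentially step for step: you view the quadric conditions as linear in the quadric, form the $N \times 5$ (calibrated) or $N \times 7$ (uncalibrated) evaluation matrix at the world points, deduce $\operatorname{rank}(M) \leq 4$ (resp.\ $\leq 6$) from the vanishing of all maximal minors, extract a common quadric from $\ker M$, and invoke the classical critical-configuration theorems of Krames/Maybank/Kahl. The ``dictionary'' worry you raise at the end is also treated by the paper as a direct citation rather than a new argument, so your writeup and the paper's are the same in substance and level of detail.
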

\noindent {\textbf{Sketch:}}
For each minimal subscene $w$, there exists a quadric surface satisfying the conditions in Theorem~\ref{thm:illposed-world}.  
In the appendices, we view the conditions as linear in the quadric and prove that there must exist a common quadric $\mathcal{Q}$ which works for each $w$.  In particular, $\mathcal{Q}$ passes through the two camera centers and $N$ world points of $w'$.  
So $w'$ is critical by \cite{kahl2002critical}.  
The uncalibrated case is the same.
\hfill $\square$

\medskip

Proposition~\ref{prop:crit-ill} is tight: if there exists \textit{one} minimal subscene $w$ that is well-posed, then $w'$ can be nondegenerate.  We also stress that the converse of Proposition~\ref{prop:crit-ill} fails.  
Indeed if $w'$ is critical, then \textit{all} minimal subscenes $w$ could be well-posed, as illustrated below.

\begin{figure}[h]
    \centering
    \includegraphics[width=0.6\linewidth]{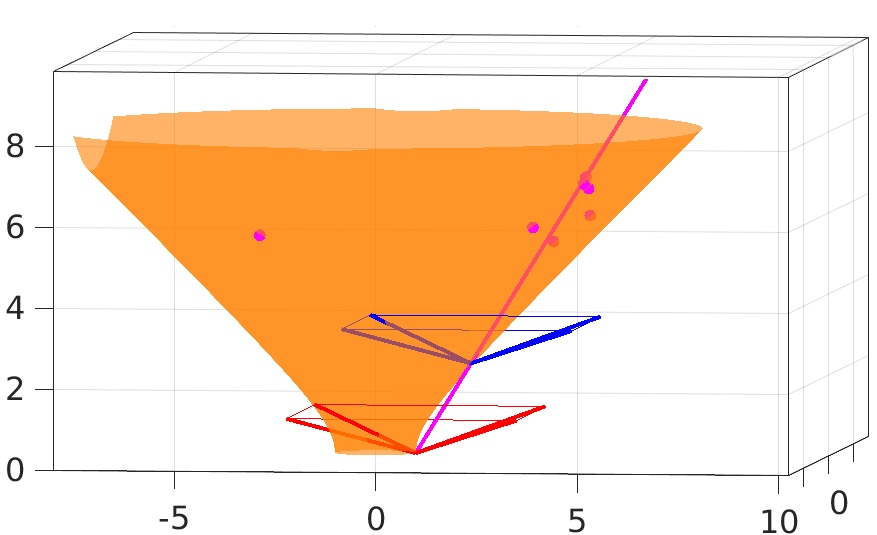}
    \caption[]{Example of an uncalibrated scene that is critical, in which all minimal subscenes are well-posed. 
    Fix a ruled quadric surface (orange surface) as shown.  Any $N$ points and $2$ camera centers (magenta points) drawn from the quadric will form a critical configuration by \cite{krames1941ermittlung}.  
    However generically, the baseline (magenta line) will not lie on the surface.}
    \label{fig:numerical_uncalib}
\end{figure}

\section{Computation of X.5-Point Curves}\label{sec:compute-curve}

We explain how to compute the curves in Remarks~\ref{rem:4.5-curve} and \ref{rem:6.5-curve}, dubbed ``$4.5$-point" and ``$6.5$-point" curves for short. 
The curves represent the ill-posed loci in image data space.

\subsection{Numerical computation of X.5-point curves}
\label{sec:compute_x5_numerical}
There is a numerical method to render the X.5-point curves in both the calibrated and uncalibrated cases. It is based on homotopy continuation \cite{breiding2018homotopycontinuation}.
We describe the procedure only for the calibrated case, as the uncalibrated case is analogous.  

To generate the $4.5$-point curve, four image point pairs $(\gamma_i,\bar{\gamma}_i)$ for $i=1,\ldots,4$
and a stand-alone point $\gamma_5$ are fixed. 
Using SVD, we compute the null space of the linear system $(\bar{\gamma}_i; 1)^{\top} E (\gamma_i; 1) = 0$ for $i=1, \ldots, 4$.  If it is more than $5$-dimensional as in \cite{connelly2023geometry, connelly2024geometryp2}, then all choices of $\bar{\gamma}_5$ give a point in the ill-posed locus.
Otherwise, similarly to \cite{nister:PAMI:2004} 
we parameterize (an affine slice of) the null space as
\begin{align}
    E = E(\alpha) :=  \alpha_1 E_1 + \alpha_2 E_2 + \alpha_3 E_3 + \alpha_4 E_4 + E_5,
    \label{Eq:nullspaceofE}
\end{align}
where $E_i$ are a basis of the null space and $\alpha_i$ are unknown weights. 
The essential matrix should also satisfy 
 \begin{align} \label{eq:P1P2-constraint}
  \left\{ \begin{matrix}
 \det(E) = 0 \\
 \, 2EE^{\top}\!E - \operatorname{tr}(EE^{\top}\!)E  = 0, 
 \end{matrix}
 \right.
 \end{align}
which are $10$ cubic equations. 
Further, the fifth correspondence should satisfy the epipolar constraint
\begin{align}\label{eq:P3-constraint} 
   (\bar{\gamma}_5; 1)^{\top} E (\gamma_5; 1) = 0.
\end{align}
At ill-posed configurations, the Jacobian matrix of the constraints \eqref{eq:P1P2-constraint} and \eqref{eq:P3-constraint} with respect to $\alpha_i$ should be rank-deficient. 
The Jacobian $J$ is a $11 \times 4$ matrix. 
To enforce its rank-deficiency, we introduce $3$ dummy variables $d_1, d_2, d_3 \in \mathbb{R}$ and require
\begin{equation}\label{eq:Jac-drop}
J \, (d_1 , d_2 , d_3 , 1)^{\top} = \, 0.
\end{equation} 
\noindent Note that there are $22$ equations from \eqref{eq:P1P2-constraint}, \eqref{eq:P3-constraint}, \eqref{eq:Jac-drop} and altogether $9$ unknowns, namely $\alpha_1$, $\alpha_2$, $\alpha_3$, $\alpha_4$, $(\bar{\gamma}_5)_1$, $(\bar{\gamma}_5)_2$, $d_1$, $d_2$, $d_3$. 
The solutions to this polynomial system, projected to the $\bar{\gamma}_5$-plane, define the $4.5$-point curve. 

By setting $(\bar{\gamma}_5)_1$ to various different values, we compute the zero-dimensional solution sets as in Fig.~\ref{fig:scanning}. 
The real solutions for $(\bar{\gamma}_5)_2$ correspond to the intersection between the $4.5$-point curve and a column of the image. 
The solutions to these systems are computed using parameter homotopy~\cite{breiding2018homotopycontinuation}.
By Theorem~\ref{thm:image-E}, the systems have $30$ complex solutions, so there are at most $30$ real intersections with the various columns of the second image.  
Linearly connecting the intersection points, the $4.5$-point curve is rendered on the image.

In applications we may wish to estimate the condition number by approximating \eqref{eq:proxy} as the reciprocal distance from a target point to the curve.  Then a full plot of the curve is not required. 
We can simply compute the intersection points as $(\bar{\gamma}_5)_1$ ranges over a small interval around the correspondence candidate, see Fig.~\ref{fig:scanning}(b). 

Finally, computations for different columns of the image are independent.  So the procedure is easily parallelized.

\begin{figure}[h]
    \centering
    (a)\includegraphics[width=0.3\linewidth]{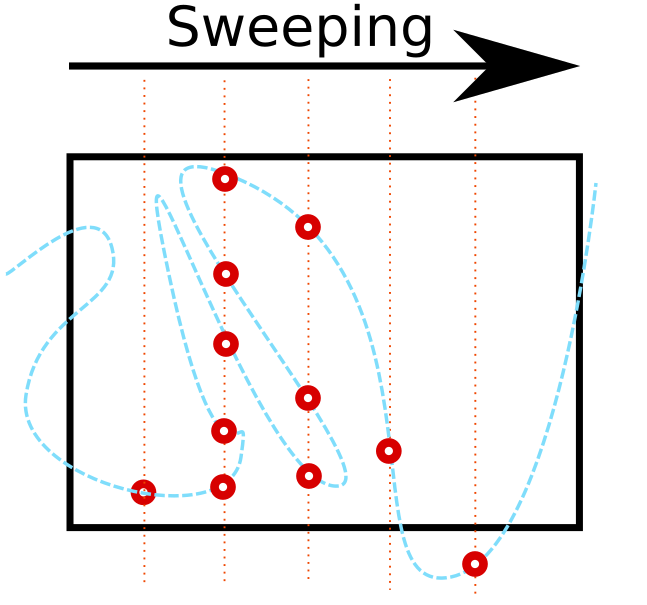}
    (b)\includegraphics[width=0.3\linewidth]{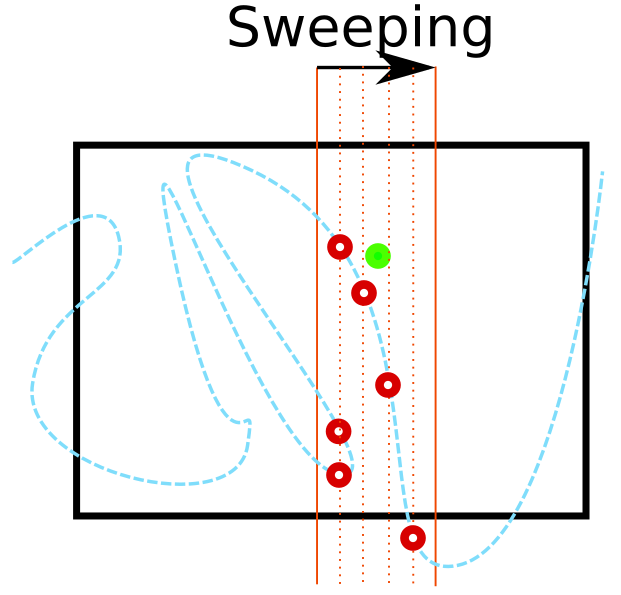} \\
    \vspace{0.1cm}
    \includegraphics[width=0.8\linewidth]{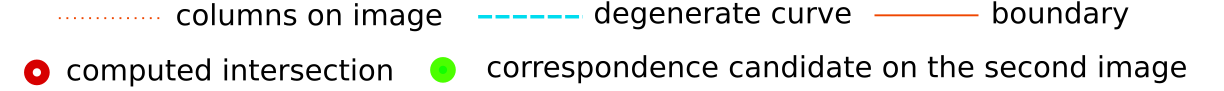}
    \caption{(a) To generate  X.5-point curves on the second image, we can sweep the image columnwise and compute the intersection with vertical lines by solving \eqref{eq:P1P2-constraint}, \eqref{eq:P3-constraint}, \eqref{eq:Jac-drop}. (b) To assess a candidate correspondence, we can  scan just a neighborhood around the candidate point.}
    \label{fig:scanning}
\end{figure}

\subsection{Symbolic computation of 6.5-point curves} \label{sec:symbolic-curve}
The numerical method in Section~\ref{sec:compute_x5_numerical} can generate the $4.5$-point and $6.5$-point curves with high precision. 
However, given the iterative nature of homotopy continuation, it is slow in finding the distance from the point to the curve. 
Engineering with GPUs has accelerated homotopy continuation path-tracking methods as much as $100 \times$ (see \cite{chien2022gpu,verschelde2016polynomial,chien2022parallel}), but the speedup factor varies across problems. 
Alternatively, for the case of $6.5$-point curves, we can symbolically compute the defining equation of the curve with high speed.  This is based on the last sentence of Theorem~\ref{thm:F-image}. 

First, we derive the expression for $\mathbf{P}$ in Theorem~\ref{thm:F-image}.
Consider the $7 \times 9$ matrix encoding the epipolar constraints: 
\begin{equation}\label{eq:L-7}
\mathbf{L}((\gamma_1, \bar{\gamma}_1), \ldots, (\gamma_7, \bar{\gamma}_7))
\end{equation}
from Example~\ref{ex:mathL}.  Let its maximal minors be $p_{ij}$ for $1 \leq i < j \leq 9$, obtained by dropping the $i$th and $j$th columns.  
If these are all $0$, then \eqref{eq:L-7} is rank-deficient and $\mathbf{P}$ vanishes.  
Else, $p_{ij}$ are the Pl\"ucker coordinates for the row-span of \eqref{eq:L-7}.
Each $p_{ij}$ is a degree-$6$ polynomial in $14$ symbolic variables. 
Generically, a basis for the null space of \eqref{thm:F-image} is written as
\begin{align} \label{eq:F-basis-joe}
F_1 = \begin{pmatrix}
        0   &    1      & p_{13} \\
       p_{14} & p_{15} & p_{16} \\
       p_{17} &  p_{18} &  p_{19} 
    \end{pmatrix}\!, \,\, 
    F_2 = \begin{pmatrix}
      1 &       0  &      -p_{23} \\
       -p_{24}& -p_{25}& -p_{26} \\
       -p_{27}& -p_{28}& -p_{29} 
    \end{pmatrix}\!.
\end{align}
Let $F = F(t) = tF_1 + F_2$, where $t$ is an unknown weight.  
The fundamental matrix must satisfy the constraint
\begin{equation}\label{eq:det-F-2}
\det(F) = 0.
\end{equation}
At ill-posed image data, \eqref{eq:det-F-2} should have a double root with respect to $t$.  
So the discriminant of \eqref{eq:det-F-2} (a univariate cubic) vanishes.  
Substituting \eqref{eq:F-basis-joe} into the discriminant in \cite{M2} produces the wanted expression for $\mathbf{P}$ in terms of \nolinebreak $p_{ij}$.

As for the $6.5$-point curve, plug numerical values for $\gamma_1, \bar{\gamma}_1, \ldots, \gamma_7$ into \eqref{eq:L-7}.  
Taking maximal minors, compute the Pl\"ucker coordinates $p_{ij}$ as affine-linear polynomials in  $(\bar{\gamma}_7)_1, (\bar{\gamma}_7)_2$.  
Substitute these into the expression for $\mathbf{P}$ to get the defining equation for the curve.
With the defining equation in hand, we estimate the distance to a target point by the Sampson distance approximation \cite{hartleyzisserman}.
We optimize this procedure using Maple to minimize the number of arithmetical operations needed to compute the $6.5$-point curve in C++. 
In experiments below, we achieve an average speed of $200 \mu$s per $6.5$-point curve on a single CPU, which reaches real-time speeds.  
The computation could be further sped-up, by calculating different coefficients in \nolinebreak parallel.

\section{Experimental Results} \label{sec:experimentals}

This section presents numerical experiments.
Corresponding code may be found at \texttt{\url{https://github.com/HongyiFan/minimalInstability}}.

\subsection{Instability with only inliers present}\label{sec:only-inliers}
We first aim to demonstrate that instabilities do occur in practice for relative pose estimation problems.  We show 
epipolar estimation can fail even with no outliers in the data.

To this end, we generate $3000$ random instances of the $5$- and $7$-point minimal problems.  
First, generate random world scenes $\large{(}\R,\hat{\mathbf{T}},\Gamma_1,\ldots, \Gamma_N \large{)}$ with $N=7$ and an intrisinc matrix $K$ in the uncalibrated case, and otherwise with $N=5$ in the calibrated case, as follows:
\begin{itemize}
\setlength\itemsep{1.5pt}
    \item $\R$: taken from the QR decomposition of a random $3 \times 3$ matrix with i.i.d. standard normal entries;
    \item $\hat{\mathbf{T}}$: a uniformly-sampled point on the sphere with radius $1$ meter;
    \item $\Gamma_i$: uniformly-sampled world points with depths between $1$ and $20$ meters;
    \item $K$: chosen so the image size is $640 \times 480$, focal length is  $32$ mm and principle point is the image \nolinebreak center.
\end{itemize}
First, we calculate the projections of each data point from $\Gamma_i$ onto two images, resulting in pairs of clean image data, $(\gamma_i, \bar{\gamma}_i)$, where $i$ ranges from 1 to $N$. We eliminate and re-create any data instance if any 2D points are outside the image borders or any 3D points are behind the cameras. Finally, we add independent and identically distributed (i.i.d.) noise to each image point. This noise is taken from a 2D Gaussian distribution $\mathcal{N}(0, \sigma^2 I)$, with different levels of noise characterized by the standard deviation $\sigma$.

Next, we separately solve the clean and perturbed minimal problems using the standard solvers. 
Here we regard an instance as unstable if either: \textit{(i)} The error in the computed epipolar matrix for the perturbed data is larger than a pre-defined threshold $\tau$; or \textit{(ii)} there is a difference in the number of real solutions for the clean and perturbed data.  
In criterion \textit{(i)}, the error in the fundamental or essential matrix is taken as 
$e = \operatorname{mean}(\operatorname{abs}(\operatorname{abs}(\bar{M} ./ M) - \mathbf{1}\mathbf{1}^{\!\top}))$
where ``$./$" denotes elementwise division, 
``$\operatorname{abs}$" denotes entrywise absolute value, $\mathbf{1}$ is the all-ones vector, $M$ is the ground-truth model and $\bar{M}$ is the nearest estimated real model.  If $e \geq \tau$ for some threshold $\tau$, then the instance is counted as unstable.  Meanwhile, criterion \textit{(ii)} is included because the number of real solutions changes only if  the ill-posed locus is crossed \nolinebreak \cite{bernal2023machine}. 

Fig.~\ref{fig:Revelation} shows the fraction of unstable instances out of the $3000$ runs, at small to moderate noise levels and various error thresholds.  We stress that all image point pairs are inliers here.  Even so, it is clear that for random perturbations the occurrence of unstable cases cannot be ignored.

\begin{figure}[ht]
    \centering
    (a)\includegraphics[height = 0.45 \linewidth]{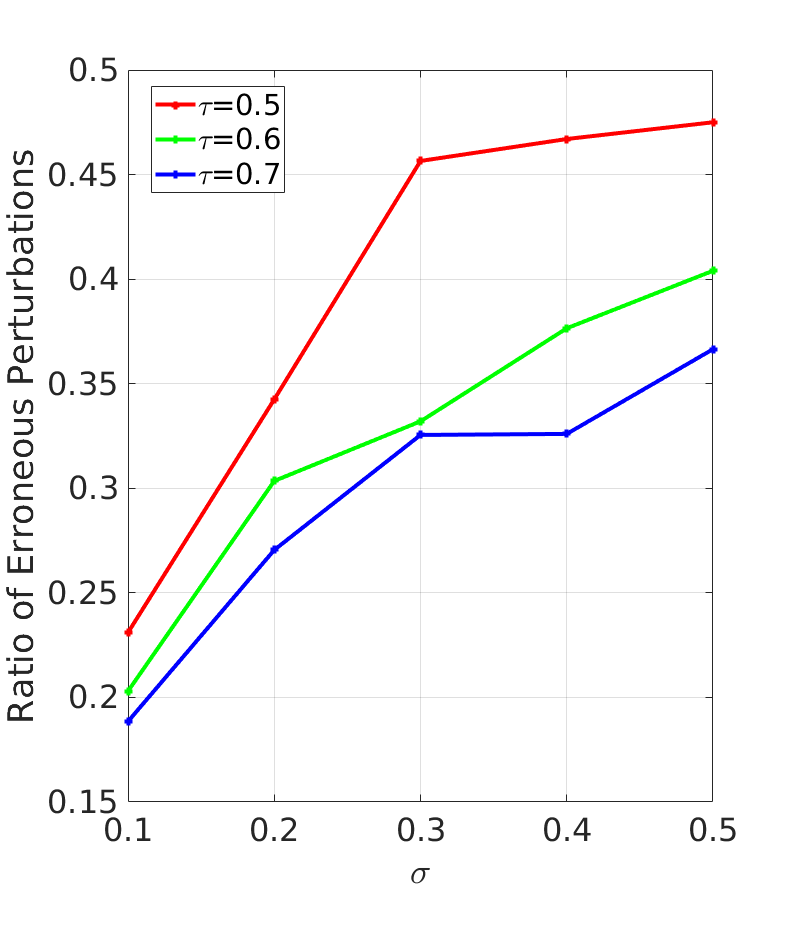}
    (b)\includegraphics[height = 0.45 \linewidth]{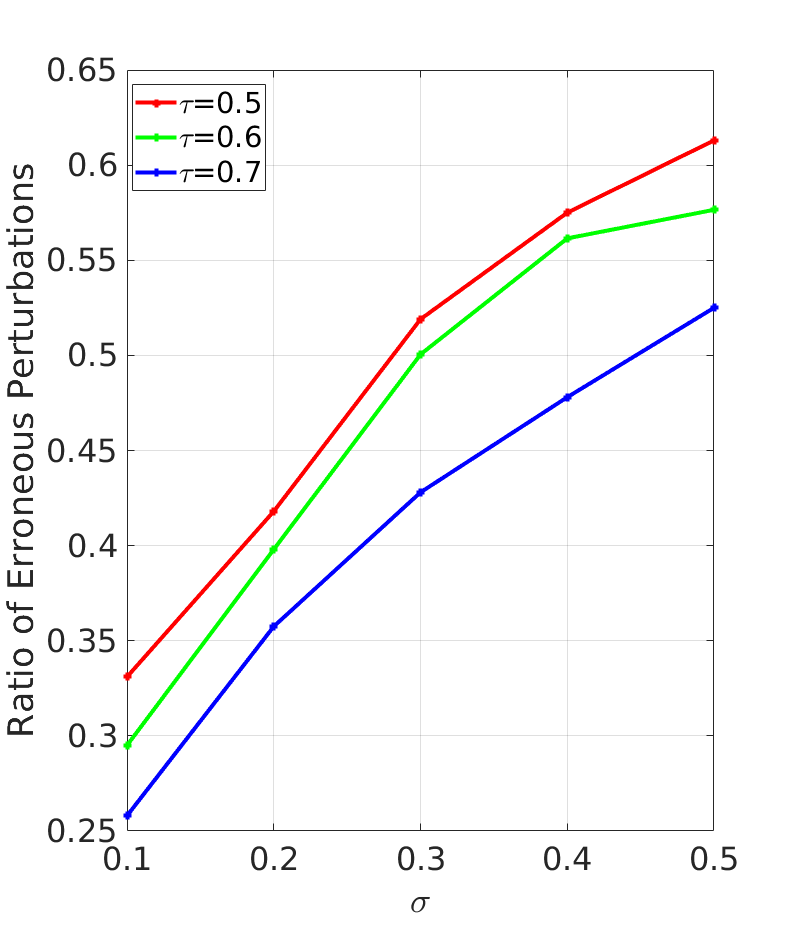}
    \caption{Ratio of 
    unstable instances
    out of $3000$ random synthetic minimal problem instances at different noise levels $\sigma$ and error thresholds $\tau$ for: (a) \textup{fundamental matrices}; and (b) \textup{essential matrices}.}
    \label{fig:Revelation}
\end{figure}

Given that minimal problems have such failure rates, many point pairs are needed for RANSAC to overcome these instabilities.   
We test this by generating calibrated world scenes as before, but with the number $N$ of world points much more than $5$. 
Add i.i.d. Gaussian noise, with $\sigma$ drawn uniformly from $[0,2]$ pixels in each trial.
The datasets again have no outliers.  
We apply a standard RANSAC method to estimate the essential matrix. Fig.~\ref{fig:hongyi} shows results over $100$ trials for two different choices of $N$.  With $N=50$, the median rotation and translation errors are $>10^{\circ}$ and $> 1$ pixels respectively.  The errors improve when $N=150$.

\begin{figure}[h]
    \centering
    (a)\includegraphics[width=0.4 \linewidth]{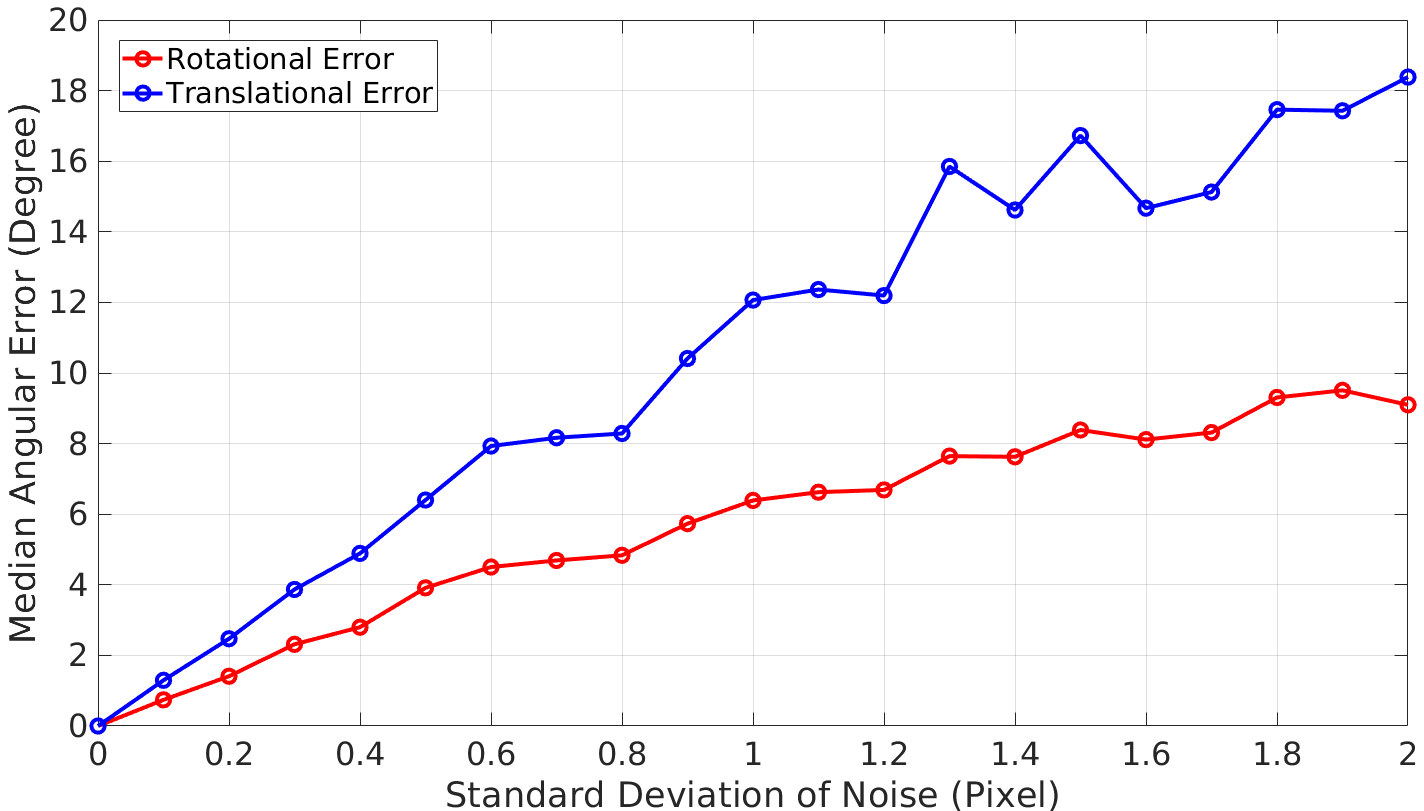}
    (b)\includegraphics[width=0.45 \linewidth]{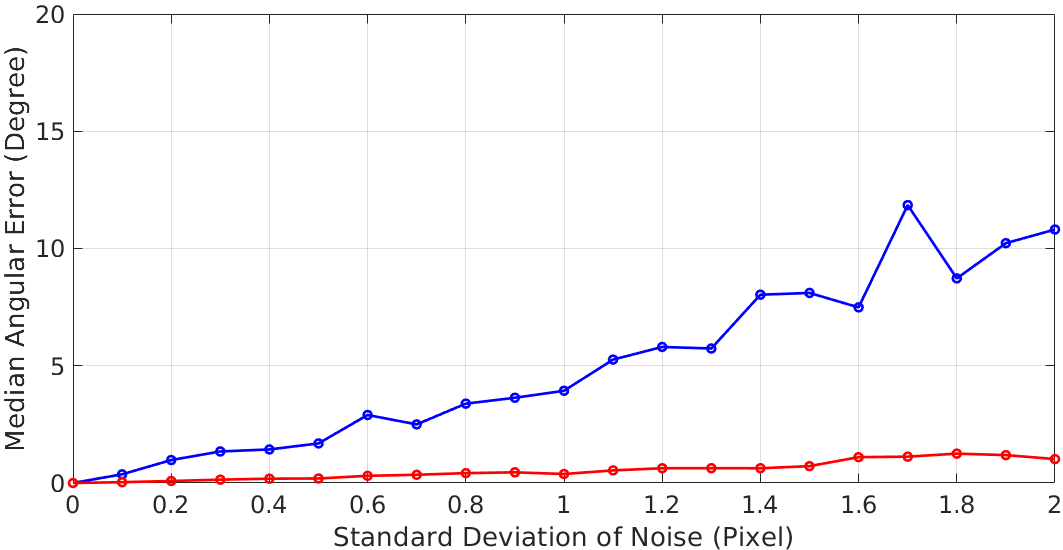}
    \caption{Median translation error and rotation error (both in degrees) vs. standard deviation of noise (in pixels) over $100$ synthetic scenes for: (a) $N =50$ veridical correspondences; and (b) $N=150$ veridical correspondences. More point pairs lead to less error in RANSAC.} 
    \label{fig:hongyi}
\end{figure}

\subsection{Instability estimation by X.5-point curves}

 We apply the methods in Section~\ref{sec:compute-curve} to render the 4.5- and 6.5-point curves.  
Fig.~\ref{fig:F_curves} shows several example curves plotted on the second image plane along with the given image points. 
For the uncalibrated case the degree of the curve is 6 by Theorem~\ref{thm:F-image}, while for calibrated case it is 30 by Theorem~\ref{thm:image-E}.

\begin{figure}[ht]
    \centering
    (a)\includegraphics[height=0.28\linewidth]{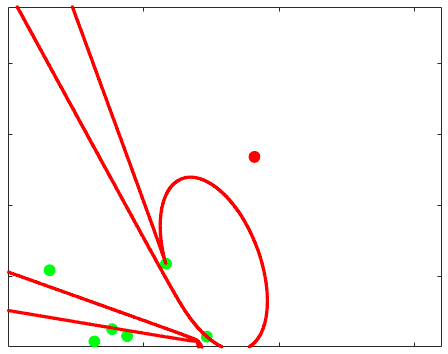}
    (b)\includegraphics[height=0.28\linewidth]{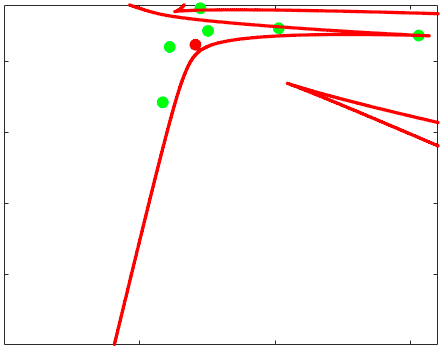}
        (c)\includegraphics[height=0.28\linewidth]{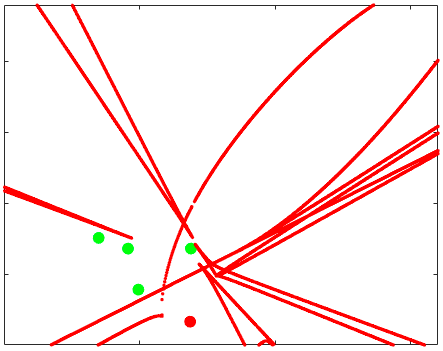}
    (d)\includegraphics[height=0.28\linewidth]{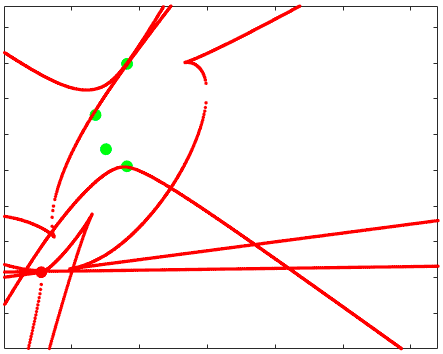}
\caption{Example renderings of X.5-point curves.  
The condition number is approximately the reciprocal distance from the red point to the curve. 
(a) Well-conditioned configuration for $F$ estimation. (b) Poorly-conditioned configuration for $F$ estimation. (c) Well-conditioned configuration for $E$ estimation. (d) Poorly-conditioned configuration for $E$ estimation.}  
    \label{fig:F_curves}
\end{figure}

In another test, we sort the $3000$ random synthetic minimal problems from Section~\ref{sec:only-inliers} into three categories: 
stable, unstable, and borderline cases. 
Here, an instance is classified according to the number of erroneous estimates among $n=20$ perturbations, denoted by $\hat{n}$. If $\hat{n} \in [0, n/3]$ we count the instance as stable; if $\hat{n} \in [2n/3, n]$ we count it as unstable; while if $\hat{n} \in [n/3, 2n/3]$ we count it as borderline. 
We take $\tau = 0.5$ and $\sigma = 0.3$.
In the uncalibrated case, the average distance from the $7$th point to the $6.5$-point curve is $2.35$ pixels among unstable cases, while for the stable cases it is $22.12$ pixels. 
In the calibrated case, the average distance from the $5$th point to the $4.5$-point curve is $0.32$ pixels for unstable cases, while for the stable case it is is $14.95$ pixels. 
See the histograms in Fig.~\ref{fig:statistics}. 
Given the statistical differences, the stable and unstable categories are distinguished 
by the distance from the last point to the X.5-point curve.

\begin{figure}
    \centering
    (a)\includegraphics[width = 0.8 \linewidth]{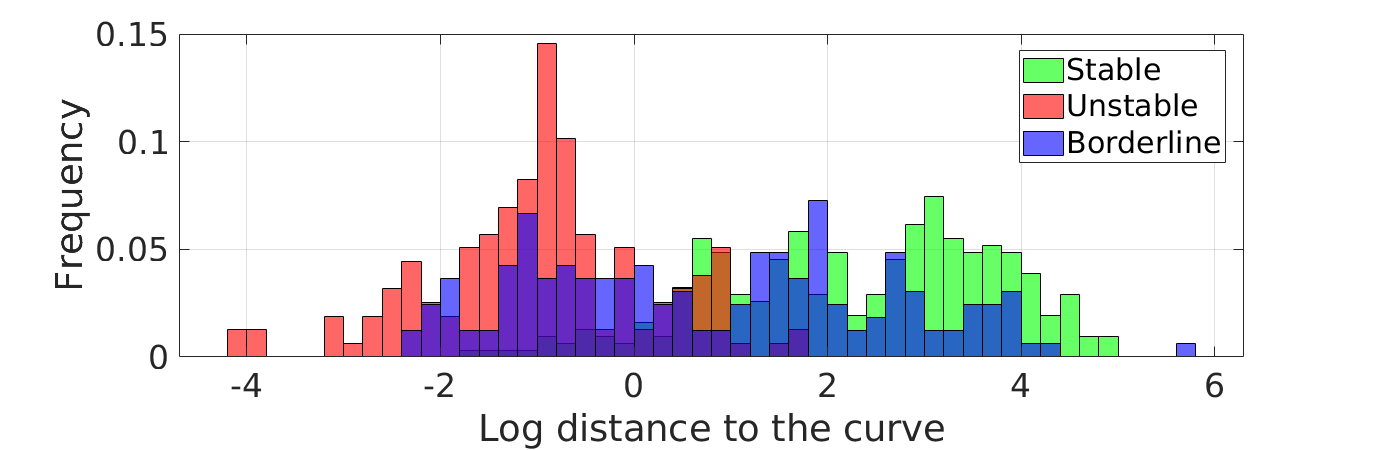}
    (b)\includegraphics[width = 0.8 \linewidth]{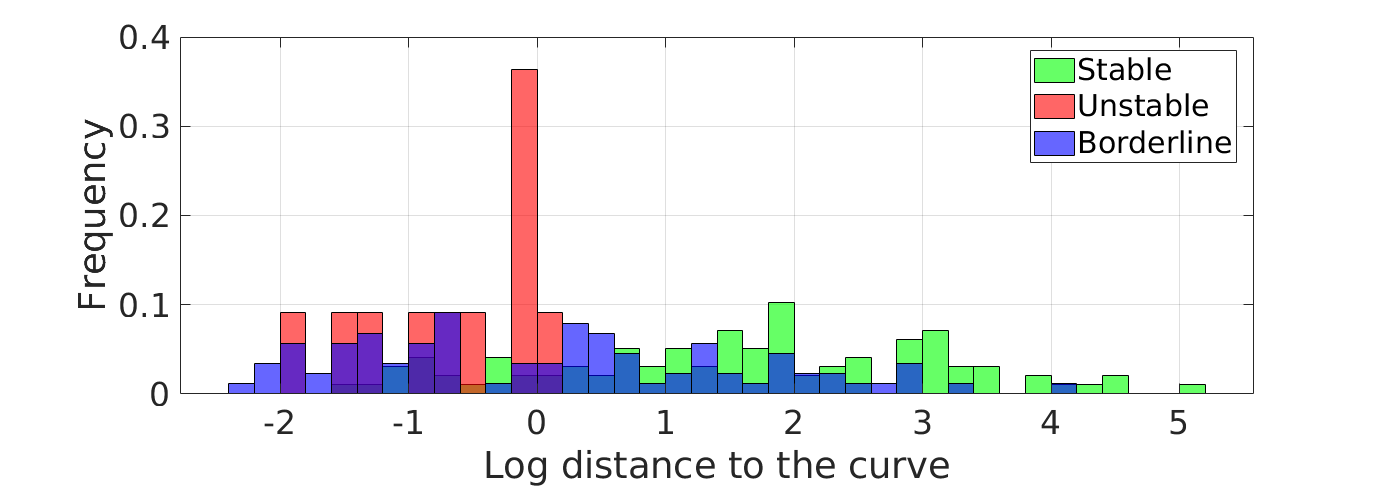}
    \caption{Histograms of distance from the last point to the X.5-point curve sorted by: stable cases (green), unstable cases (red) and borderline cases (blue). (a) Uncalibrated estimation. (b) Calibrated estimation. Stable and unstable categories are separated by distance to the curve.}
    \label{fig:statistics}
\end{figure}

\begin{figure}
    \centering
    (a)\includegraphics[width=0.35\linewidth]{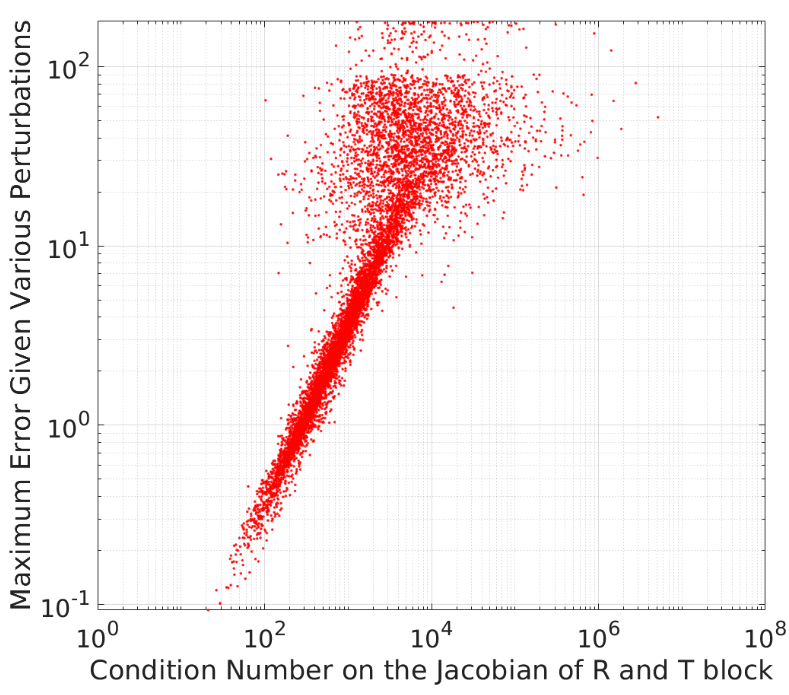}
    (b)\includegraphics[width=0.4\linewidth]{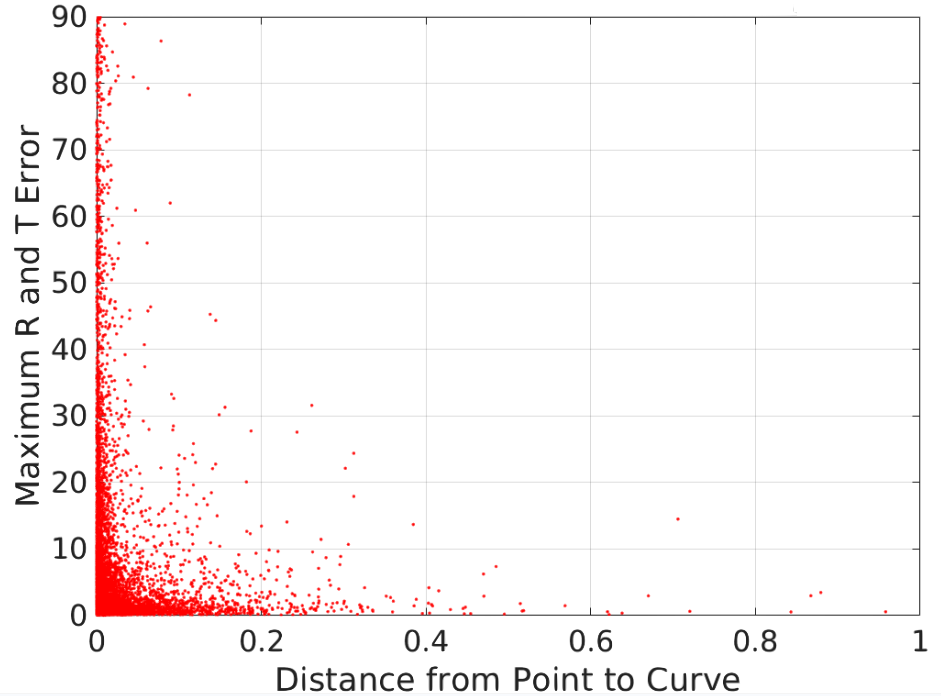}
    \caption{Comparison of the condition number, pose estimation error, and distance from the target point to the X.5-point curve.  (a) Linear relationship between the maximum rotation/translation error among various perturbations on the image point pairs vs. the condition number computed using Proposition~\ref{prop:formula-F}.  As expected,  the linear relation breaks down when the noise is high. (b) Inverse-linear relationship between the maximum rotation/translation error among various perturbation on the image point pairs vs. the distance from the point to $6.5$-point curve.}
    \label{fig:relationshipCheck}
\end{figure}

We next check the relationship between the condition number, the pose estimation error and the distance from the target point to X.5-point curve. 
Results over the $3000$ scenes are plotted in Fig.~\ref{fig:relationshipCheck}.
As expected, there is a linear relationship between the maximum error in rotation/translation and the computed condition number (Fig.~\ref{fig:relationshipCheck}(a)).   We also see an inverse-linear relationship between the distance from the point to the X.5-point curve and the maximum error in the rotation/translation (Fig.~\ref{fig:relationshipCheck}(b)), as expected.

Last, we show that the X.5-point curves themselves are acceptably stable to perturbations in the image data. 
Fig.~\ref{fig:my_label} displays examples of the X.5-point curves when adding noise to the image points. 
Note that the curves do not drastically change.  
Fig.~\ref{fig:stability_syn} collects the statistics over the $6000$ different minimal problems (half calibrated and half uncalibrated), when the perturbations are Gaussian with a standard deviation of $1$ pixel.
We find that the distance to the X.5-point curve is sufficiently robust to be used in practice.

\begin{figure}
    \centering
    (a)\includegraphics[height = 0.23 \linewidth]{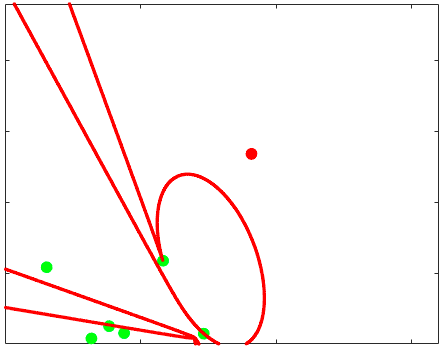} \includegraphics[height = 0.23 \linewidth]{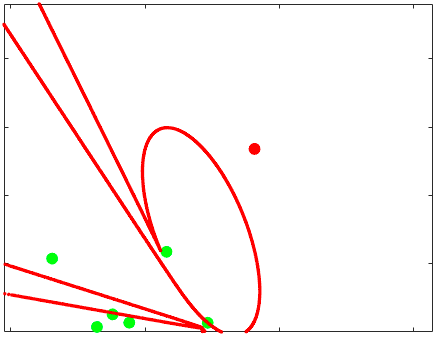} 
    \includegraphics[height = 0.23 \linewidth]{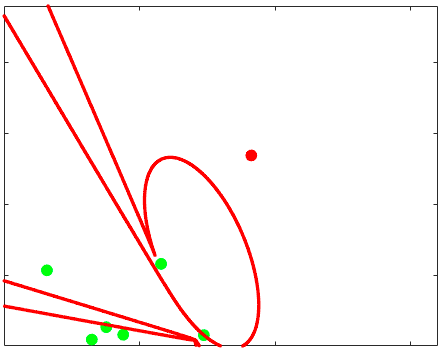}
    (b)\includegraphics[height = 0.23 \linewidth]{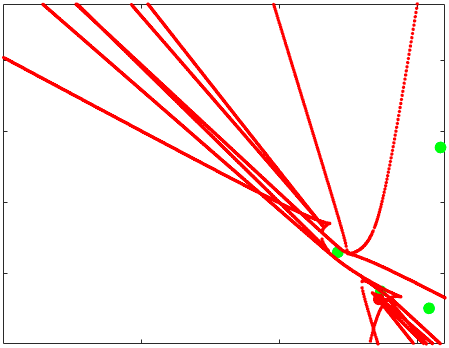} \includegraphics[height = 0.23 \linewidth]{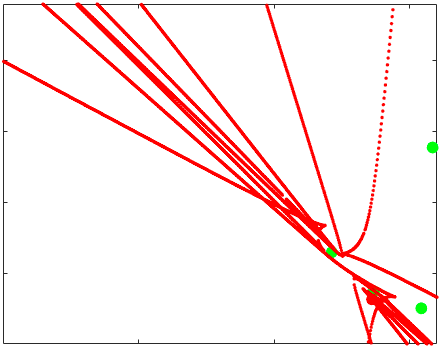} 
    \includegraphics[height = 0.23 \linewidth]{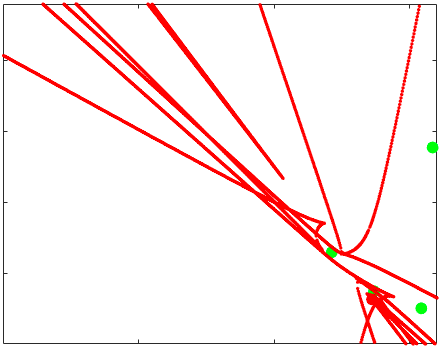}

    \caption{Illustration of the stability of the X.5-point curves themselves. 
    (a) A $6.5$-point curve for uncalibrated estimation with different perturbations on the image points applied. 
    (b) A $4.5$-point curve for calibrated estimation with different perturbations on the image points applied.}
    \label{fig:my_label}
\end{figure}

\begin{figure}
    \centering
    \includegraphics[width=0.8 \linewidth]{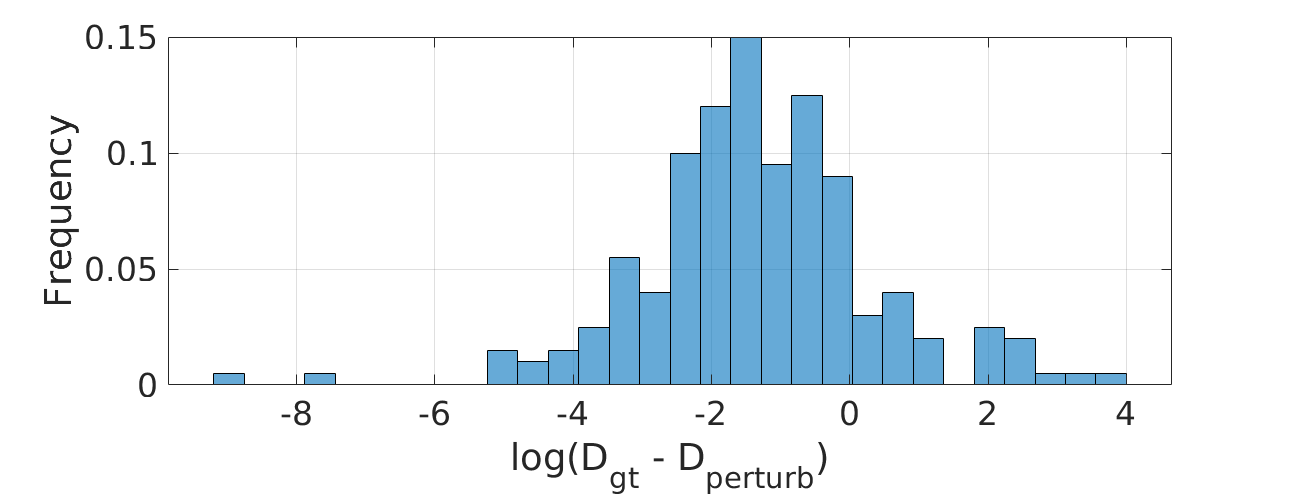}
    \caption{Histogram of the log-difference between two distances:  from the target point to the X.5-point curve without perturbation; and from the target point to the X.5-point curve with perturbation on the other points.}
    \label{fig:stability_syn}
\end{figure}
\subsection{Illustrations with real data}
We first demonstrate that the failure of RANSAC described at the end of Section~\ref{sec:only-inliers} can occur with real data.
To this end, we take various overlapping pairs of images from the RANSAC 2020 dataset \cite{mishkin}, manually remove all outliers, and then apply a standard RANSAC procedure to estimate the epipolar geometries.
Despite there being all inliers, this estimation suffers errors of $> 10^{\circ}$  in the rotation for $> 50 \%$ of image pairs, similarly to the synthetic experiments in Section~\ref{sec:only-inliers}.  
A sample result with real data is shown in Fig.~\ref{fig:Teaser}.

Next, we illustrate utility of the X.5-point curve on real data. 
Again we use image pairs from the RANSAC 2020 dataset~\cite{mishkin}, where veridical image point correspondences are available.  
Fig.~\ref{fig:Real} shows that for a minimal instance in which standard minimal solvers have large errors, the selected image point indeed lies close to the X.5-point curve.
It is consistent with our theory, because such data instances are poorly-conditioned.

Last, we note that X.5-point curves might also be useful in  near-degenerate situations (recall the comparison made in Section~\ref{sec:relationship}).
For example, consider near-planar scenes.
It is well-known that the near-planar scenes are nearly degenerate.  
However, their minimal subscenes are also ill-posed by Theorems~\ref{thm:illposed-world} and \ref{thm:F-ill-posed-world}. 
In the uncalibrated case, for example, minimal subscenes with at least two off-plane points would be both stable as well as non-degenerate. 
If the distance from a selected point to the 6.5-point curve is large, 
it could help in identifying good minimal instances.
The idea is illustrated in Fig.~\ref{fig:planar}.

\subsection{Discussion of the non-minimal case}
A common practice in pose estimation is to include a post-RANSAC refinement stage. 
Here we demonstrate that the refinement stage can also suffer from stability issues. Similar to the example shown in Fig.~\ref{fig:Teaser}, we consider local optimization  applied to the inliers. 
A significant discrepancy remains between the estimated and ground-truth epipolar geometries, suggesting that the stability problem persists even when the number of points is greater than minimal.

To evaluate the issue statistically, we conduct essential matrix estimation using the PoseLib solver~\cite{PoseLib}, including its nonlinear post-RANSAC refinement framework, applied to the RANSAC 2020 dataset. The dataset provides correspondences, from which we select all matches with high confidence scores. We solve for the essential matrix using PoseLib's robust pipeline. 
To ensure a fair evaluation, we exclude image pairs with large baselines and low co-visibility by considering only pairs with more than 50 inliers.

Applying the PoseLib solver to the RANSAC 2020 dataset, we observe a $64\%$ 
success rate under a $10$-degree angular threshold.  The failure rate of $36\%$ indicates that instability can remain after refinement using a non-minimal set of inliers. 
Fig.~\ref{fig:epipolar_geometry} shows two example scenes where the final non-minimal fitting exhibits significant instability.

\begin{figure}
    \centering
    (a)\includegraphics[height = 0.34 \linewidth]{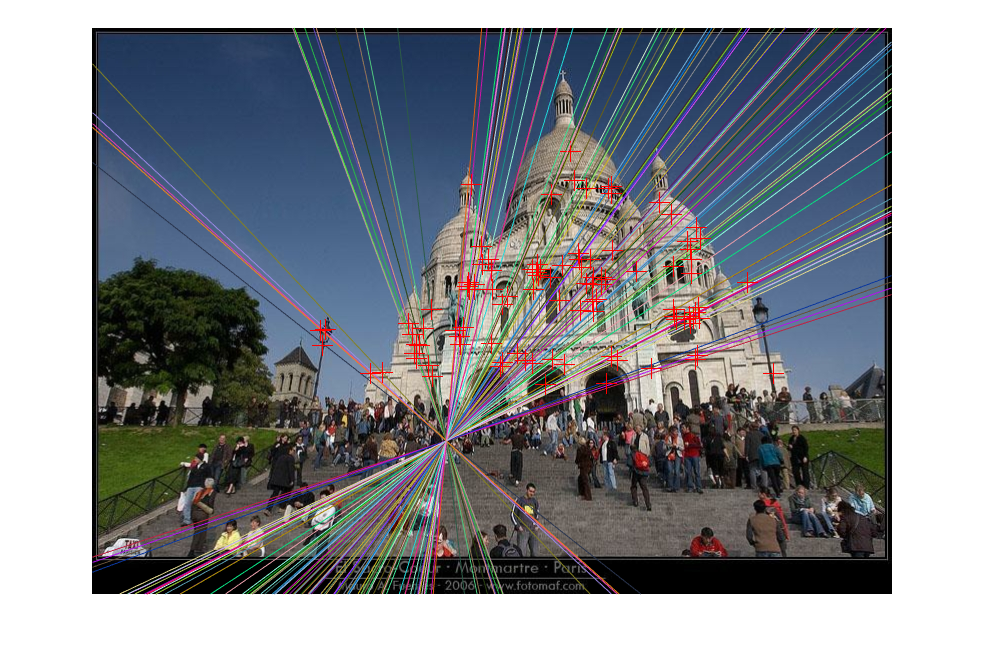} 
    \includegraphics[height = 0.34 \linewidth]{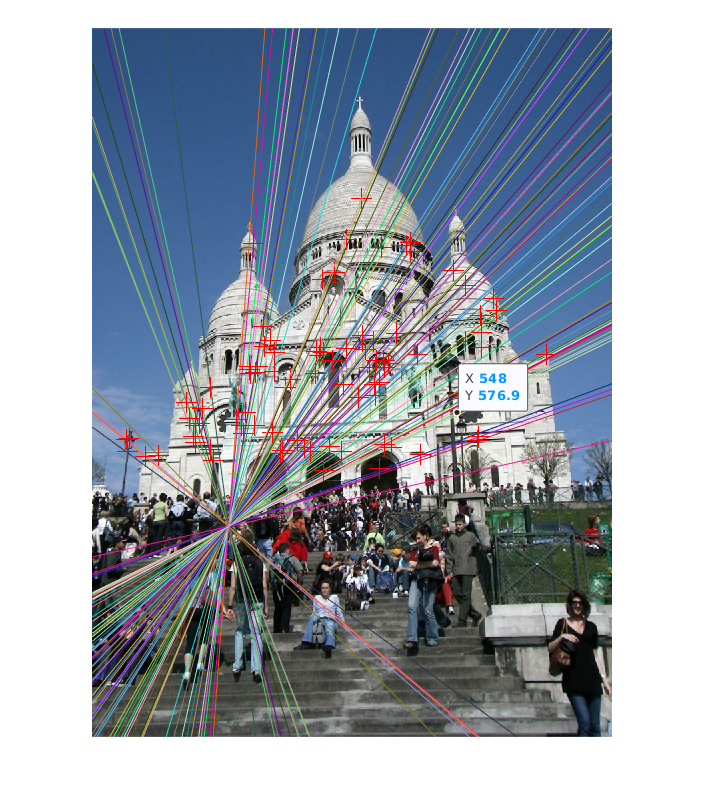} \\
    (b)\includegraphics[height = 0.34 \linewidth]{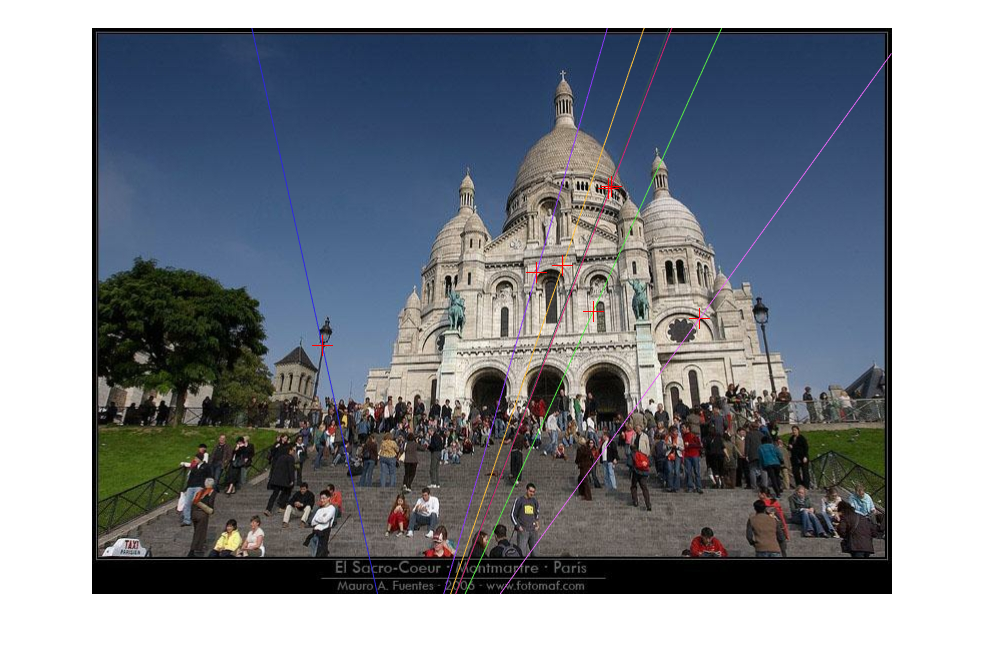} 
    \includegraphics[height = 0.34 \linewidth]{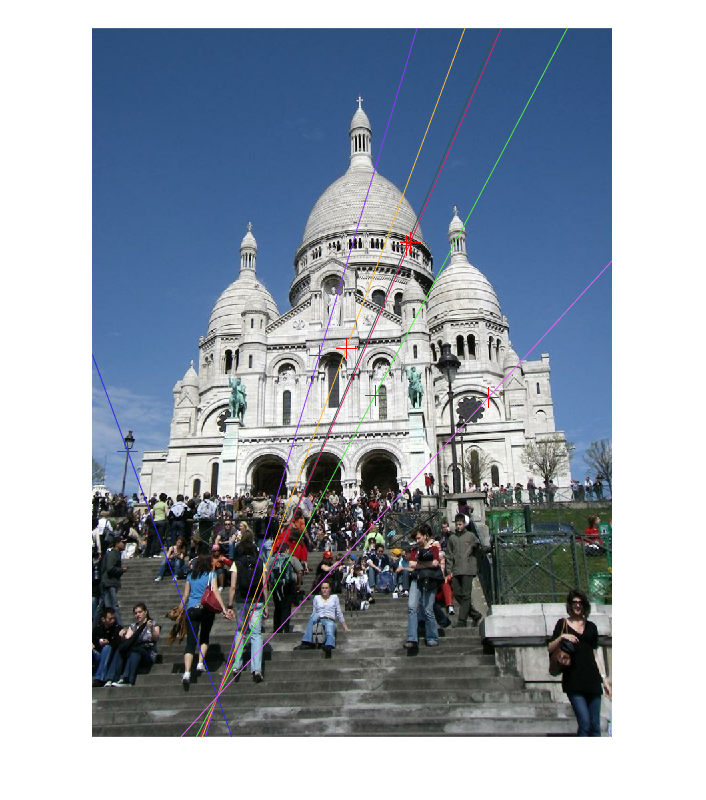} \\
    (c) \includegraphics[width = 0.7 \linewidth]{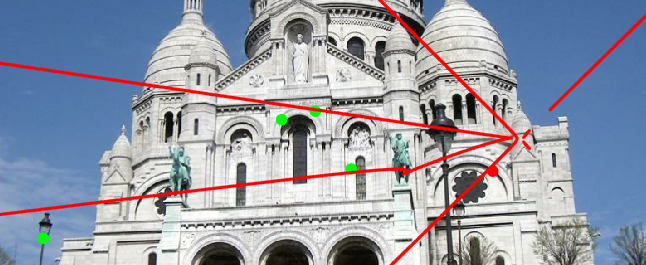}
    \caption{Example with real data demonstrating an unstable minimal configuration from all-inlier correspondences. (a) The ground-truth epipolar geometry of a pair of images. (b) The closest solution found by the $7$-point algorithm give seven inliers. (c) Zoomed-in image showing that the remaining point is close to the $6.5$-point curve.  This indicates the minimal instance is poorly conditioned, so the errors in (b) are expected.}
    \label{fig:Real}
\end{figure}

\begin{figure}
 \centering   
      (a)\includegraphics[height = 0.255\linewidth]{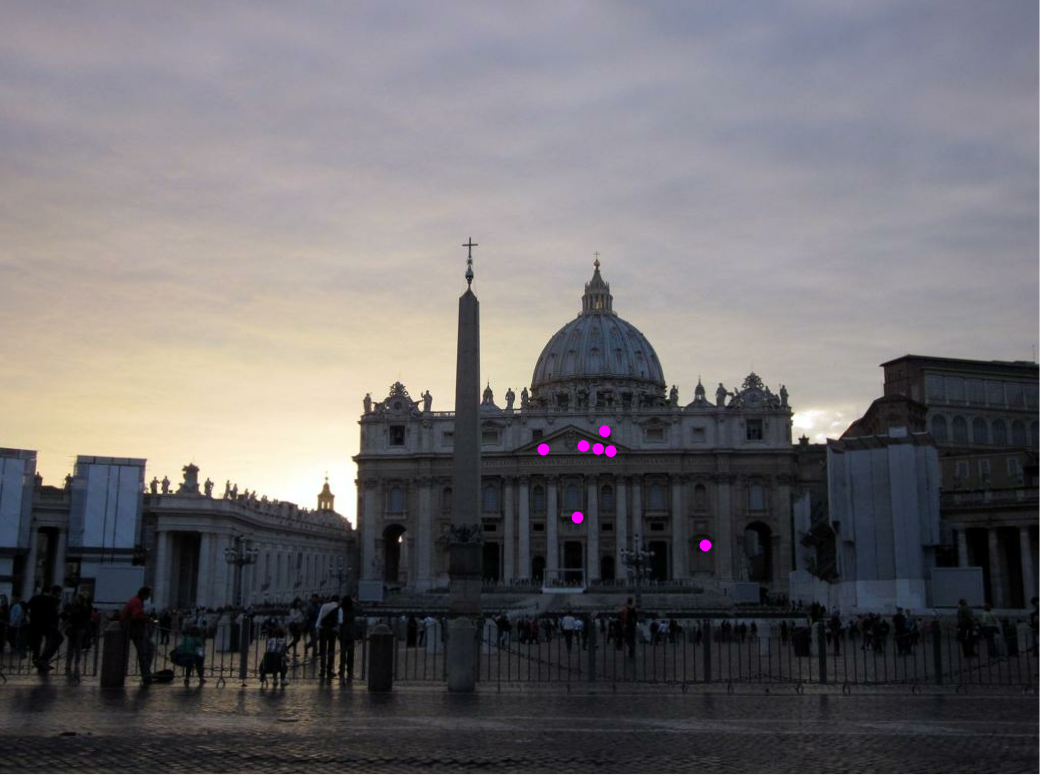}
      \includegraphics[height = 0.255\linewidth]{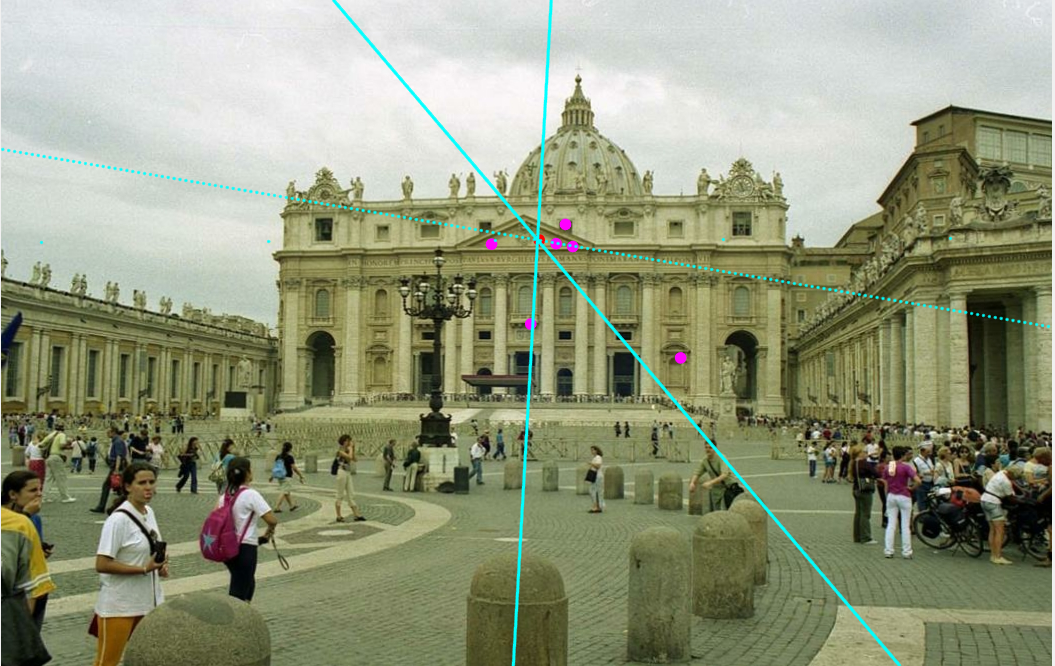} 
      \includegraphics[height = 0.255\linewidth]{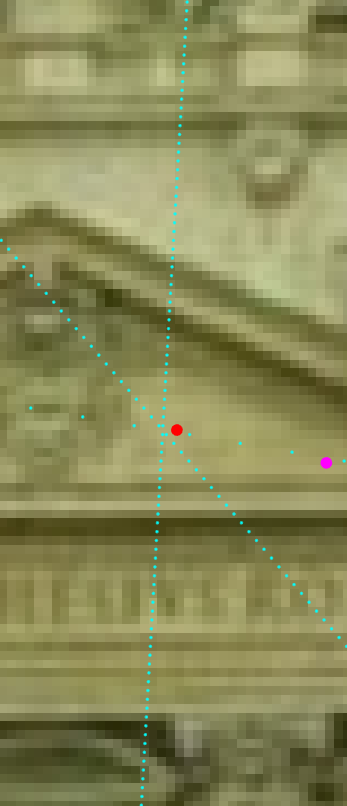} \\
      (b)\includegraphics[height = 0.3\linewidth]{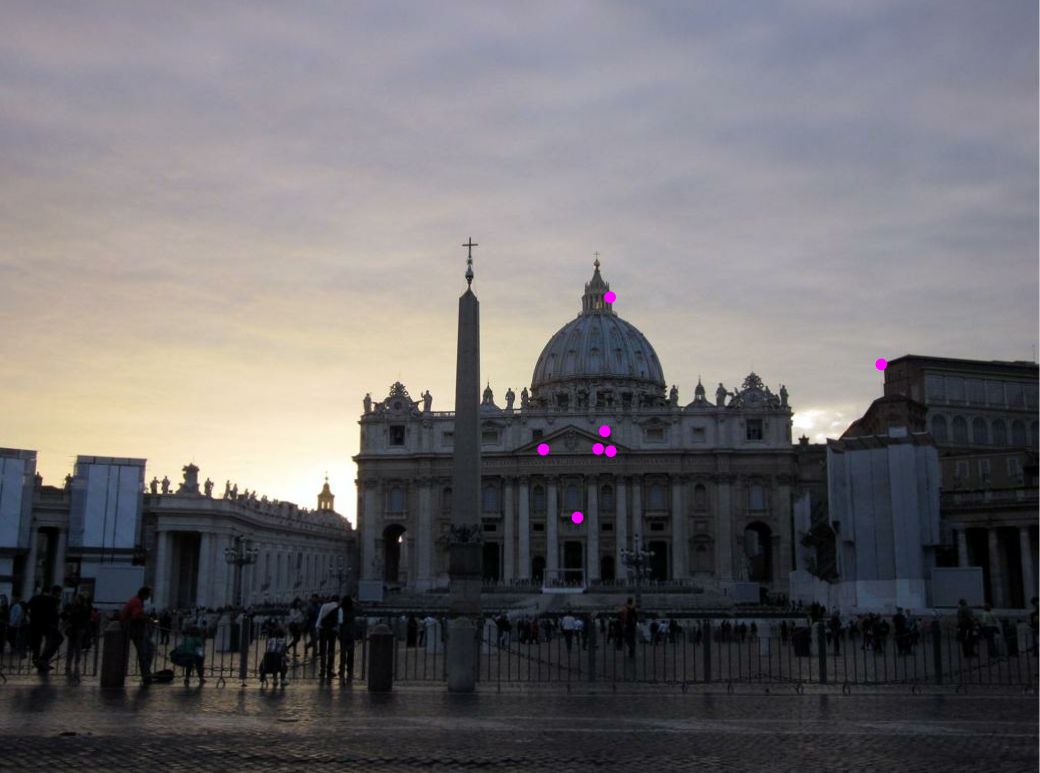}
      \includegraphics[height = 0.3\linewidth]{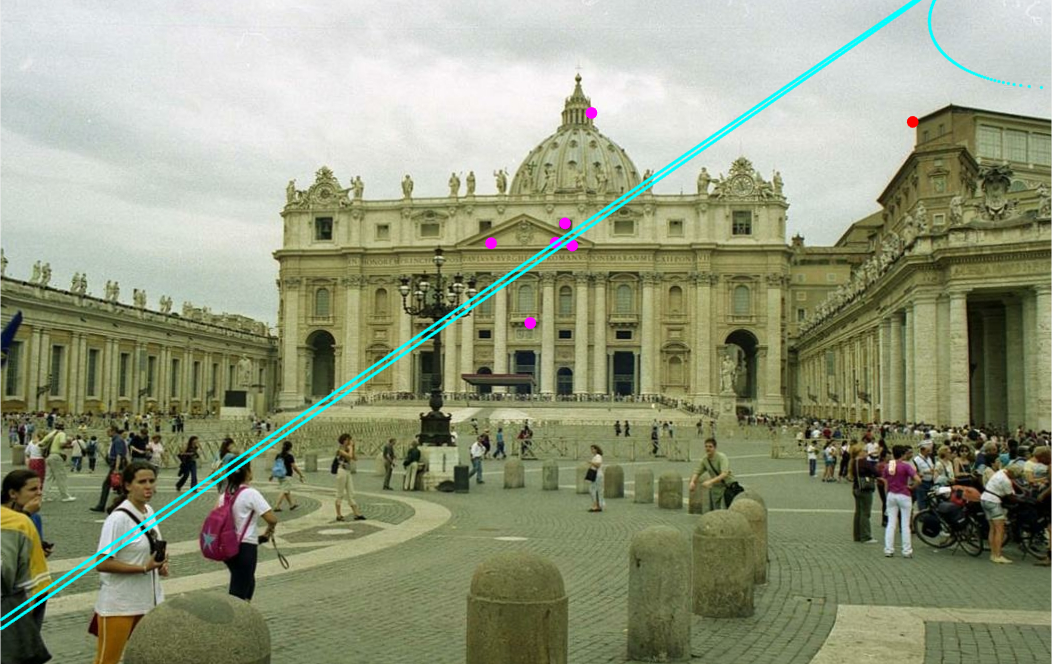} \\
      (c)\includegraphics[height = 0.30\linewidth]{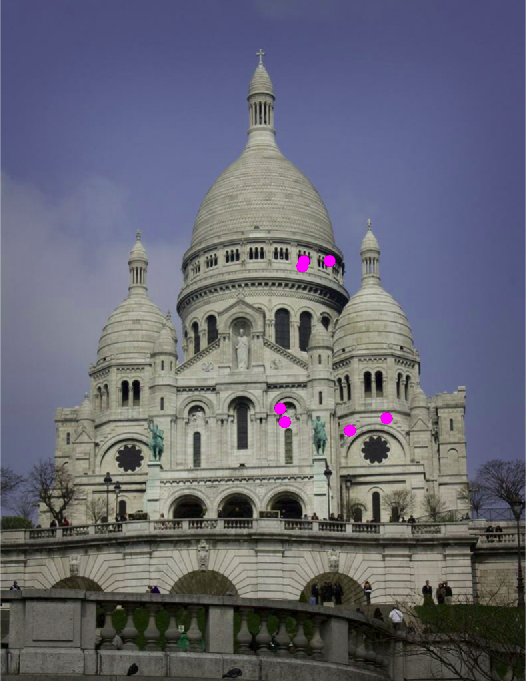}
      \includegraphics[height = 0.30\linewidth]{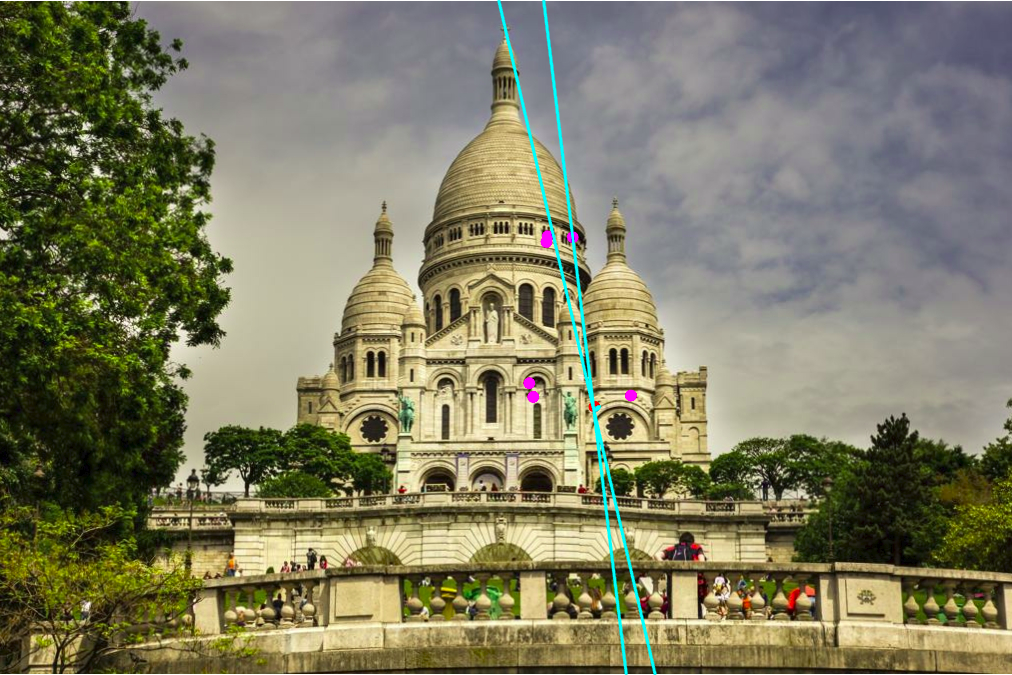}
      \includegraphics[height = 0.30\linewidth]{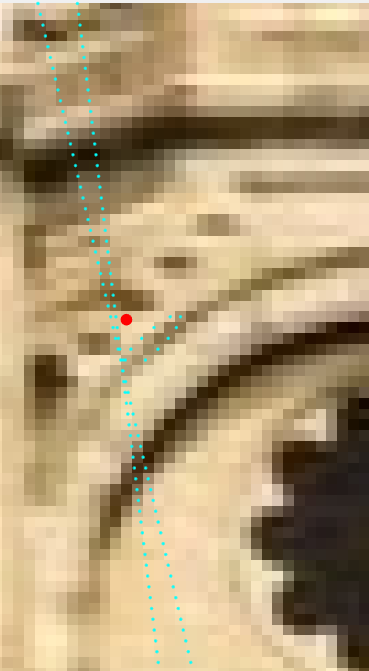}\\ 
      (d)\includegraphics[height = 0.38\linewidth]{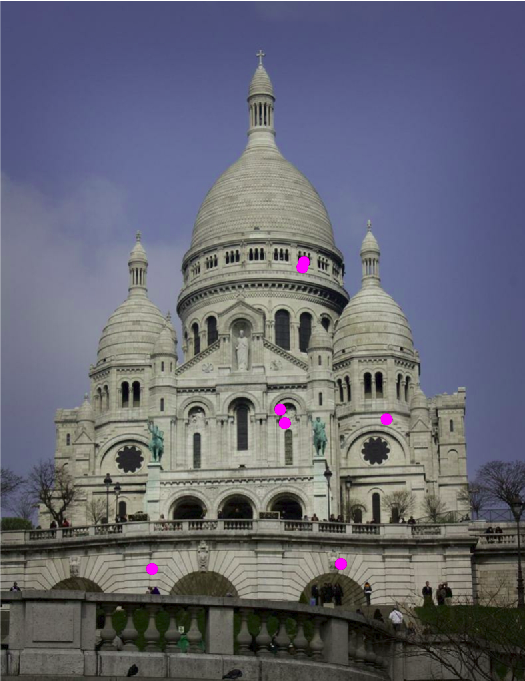}
      \includegraphics[height = 0.38\linewidth]{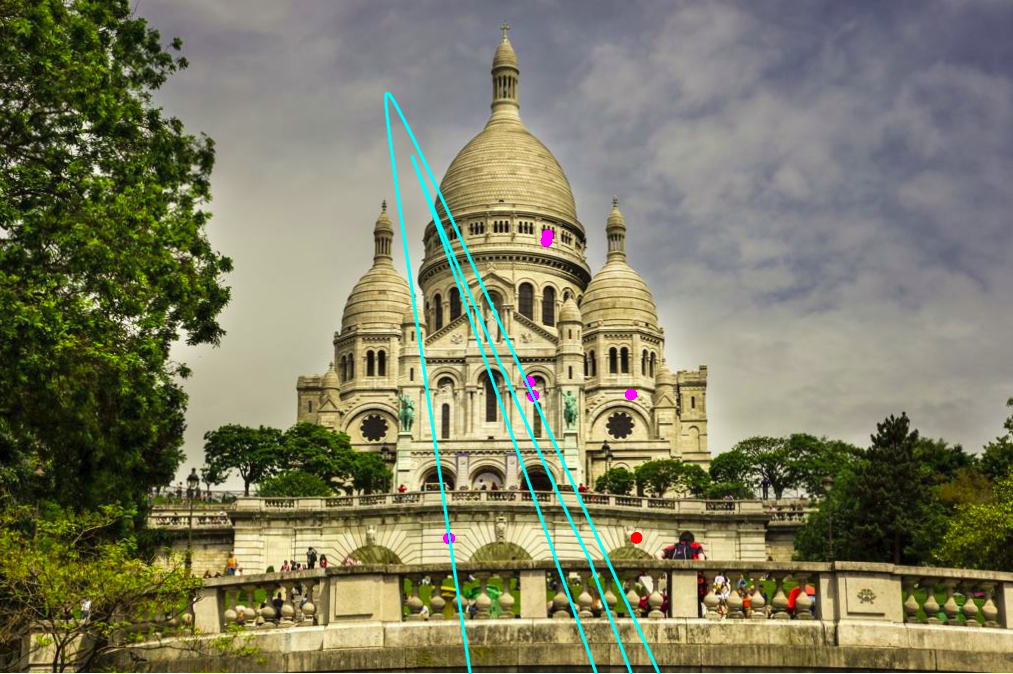}
    \caption{Illustrative results showing how the $6.5$-point curve may help with near-planar world scenes. (a)(c) $7$ near-planar point pair selections (magenta) generate small distances from the target point (red) to the $6.5$-point curve (cyan). (b)(d) $5$ near-planar point pair selections and $2$ off-plane point pairs (magenta) generate large distance from the target point (red) to the $6.5$-point curve (cyan).}
    \label{fig:planar}
\end{figure}

\begin{figure*}[h]
\centering
(a)\includegraphics[width=0.4\linewidth]{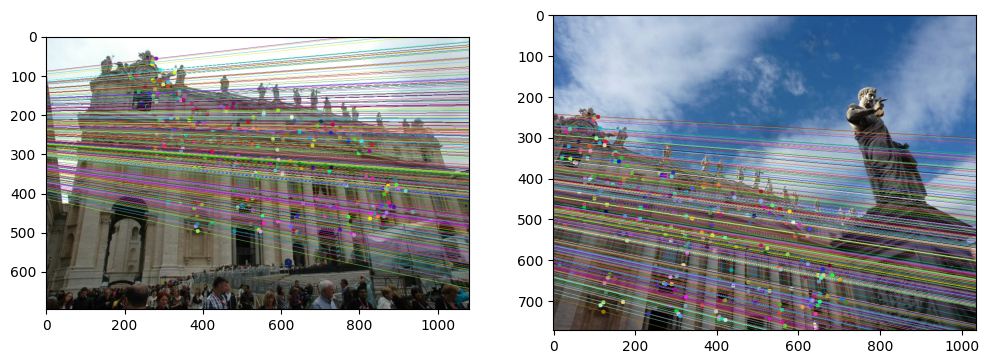} (b)\includegraphics[width=0.4\linewidth]{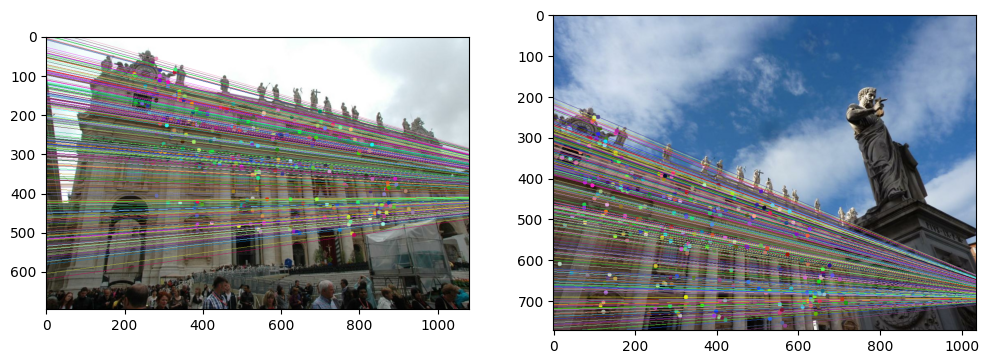}
(c)\includegraphics[width=0.4\linewidth]{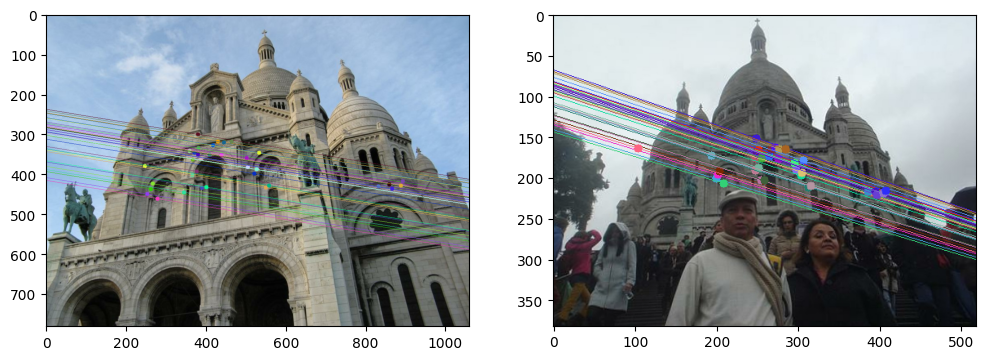} (d)\includegraphics[width=0.4\linewidth]{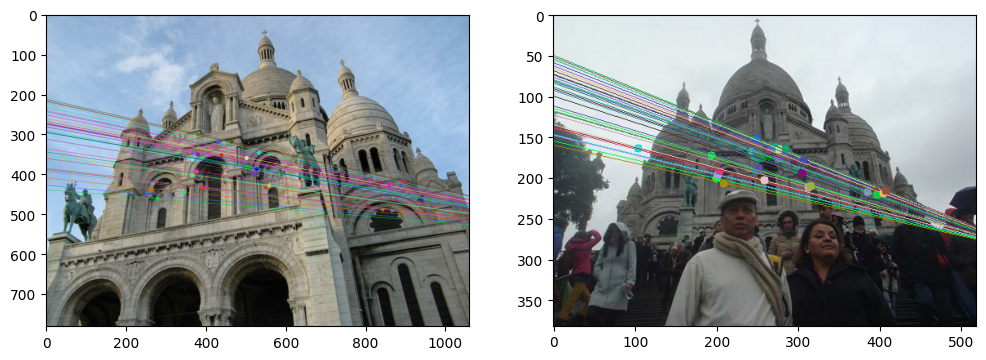}
\caption{(a) and (c): Epipolar geometry estimated by PoseLib~\cite{PoseLib}. (b) and (d): Ground Truth Epipolar geometry. }
\label{fig:epipolar_geometry}
\end{figure*}

\section{Conclusion}
Stability has been largely ignored in multiview geometry.  To the extent it has been considered, it has been conflated with degeneracy and criticality.
 In this paper, we presented a general framework for analyzing instabilities of minimal problems.
We defined condition numbers as well as ill-posed world scenes and image data.  
General methods were introduced and used to analyze the $5$- and $7$-point problems in relative pose estimation. 
Numerical experiments on synthetic and real data  support our theory.  They also suggest that instabilities negatively impact RANSAC in practice.

Opportunities for future work abound.  To name a few, first we would like to apply the framework to analyze other important minimal problems in multiview geometry.  
Second, it is very desirable to make RANSAC ``stability-aware", \eg by biasing its sampling towards well-conditioned image data, perhaps by combining \cite{barath2022learning} with the X.5-point curves.  
Lastly, we propose using related theory \cite{breiding2021condition} to analyze the conditioning of non-minimal estimation.

{\small
\bibliographystyle{ieee}
\bibliography{egbib}
}

\appendices
\newcommand{\mainref}[1]{M-\ref{#1}}

\section{Proof of Proposition~\mainref{prop:quotient-ess}}
Let $\mathcal{V}$ denote the open subset of $\mathcal{C}^{\times 2} \times (\mathbb{R}^3)^{\times 5}$ in (\mainref{eq:W-essential}).

The action (\mainref{eq:ess-action}) is clearly smooth.  

For freeness, we must show that each stabilizer is trivial.
Equivalently, each orbit contains a point with trivial stabilizer. 
Note that each orbit contains a point $v=(P, \bar{P}, \Gamma_1, \ldots, \Gamma_5)$ in $\mathcal{V}$ where $P = (I \,\, 0)$ and $\bar{P}$ has a nonzero fourth column.  
If $g \in G$ stabilizes $v$, then $P  = Pg $ and $\bar{P} = \bar{P} g$ as elements of $\mathcal{C}$.  
Write $g = \begin{pmatrix} \mathbf{R} & \mathbf{T} \\ 0 & \lambda \end{pmatrix}$.  
From $P = Pg$, it follows $\mathbf{R} = I$ and $\mathbf{T} = 0$.  From $\bar{P}  = \bar{P}g$, it follows $\lambda = 1$.  Altogether $g = I$, so the stabilizer is trivial.

Next we show properness.  
By \cite[Ch.~1, Sec.~10]{bourbaki2013general}, this means  $\varphi : \mathcal{V} \times G \rightarrow \mathcal{V} \times \mathcal{V}$, $\left( v, g \right) \mapsto \left( v, g \cdot v \right)$ is a closed map with compact fibers.  
By freeness (previous paragraph), $\varphi$ is injective.  So its fiber are either empty or singletons, hence compact.
For closedness, we will prove the stronger property that $\varphi$ is a homeomorphism onto its image and its image is a closed subset of $\mathcal{V} \times \mathcal{V}$.
Note that the image of $\varphi$ consists of all $(v^{(1)}, v^{(2)}) \in \mathcal{V} \times \mathcal{V}$ with $v^{(i)} = (P^{(i)}, \bar{P}^{(i)}, \Gamma_1^{(i)}, \ldots, \Gamma_5^{(i)})$  where $P^{(i)} = (\mathbf{R}^{(i)} \,\,\, \mathbf{T}^{(i)})$ and $\bar{P}^{(i)} = (\bar{\mathbf{R}}^{(i)} \,\,\, \bar{\mathbf{T}}^{(i)})$ for $i=1,2$ 
such that: 
\begin{enumerate} \setlength\itemsep{0.35em}
\item[\text{i)}] $(P^{(1)}, \bar{P}^{(1)})$ and $(P^{(2)}, \bar{P}^{(2)})$ have the same essential matrix;
\item[\text{ii)}] ${\mathbf{R}}^{(1)} (\bar{\mathbf{R}}^{(1)})^{\top} = {\mathbf{R}}^{(2)} (\bar{\mathbf{R}}^{(2)})^{\top}$;
\item[\text{iii)}] $v^{(1)}$ and $v^{(2)}$ map to the same image point pairs.
\end{enumerate}
Indeed, this set contains $\operatorname{Im}(\varphi)$, and the reverse containment follows from \cite[Res.~9.19]{hartleyzisserman}.
In particular, $\operatorname{Im}(\varphi)$ is a closed subset of $\mathcal{V} \times \mathcal{V}$.
For the homeomorphism claim, let $\psi : \operatorname{Im}(\varphi) \rightarrow \mathcal{V} \times G$ be the left-inverse of $\varphi$, which exists since $\varphi$ is injective.  Then $\psi(v^{(1)}, v^{(2)}) = (v^{(1)}, g)$ for some $g = \begin{pmatrix} \mathbf{R} & \mathbf{T} \\ 0 & \lambda  \end{pmatrix}$.   
We need to show that $g$ is a continuous function of $(v^{(1)}, v^{(2)})$.
From $P^{(1)}  = P^{(2)} g$, we see $\mathbf{R} = \mathbf{R}^{(1)}(\mathbf{R}^{(2)})^{\top}$, which is continuous.
Using $\bar{P}^{(1)}  = \bar{P}^{(2)} g$,  
\begin{equation} \label{eq:stacked-P}
\begin{pmatrix}
P^{(2)} \\
\bar{P}^{(2)}
\end{pmatrix} 
\begin{pmatrix}
\mathbf{T} \\
\lambda
\end{pmatrix} \,\, = \,\, 
\begin{pmatrix}
\mathbf{T}^{(1)} \\
\mathbf{T}^{(2)}
\end{pmatrix}.
\end{equation}
The $6 \times 4$ matrix in \eqref{eq:stacked-P} is left-invertible, because $P^{(2)}$ and $\bar{P}^{(2)}$ have distinct centers.
By \cite[pg.~3]{penrose1955generalized}, it follows that $\mathbf{T}$ and $\lambda$ are continuous functions of $(v^{(1)}, v^{(2)}) \in \operatorname{Im}(\varphi)$, which concludes the verification of properness.

Finally, recall the quotient manifold theorem \cite[Thm.~21.10]{lee2013smooth}.
It implies a unique smooth structure on $\mathcal{W} = \mathcal{V} / G$ such that the quotient map $\mathcal{V} \rightarrow \mathcal{V} / G$ is a smooth submersion.
From  $\dim(\mathcal{V}) = \dim(\mathcal{C}^{\times 2} \times (\mathbb{R}^3)^{\times 5}) = 2 \cdot 6 + 5 \cdot 3 = 27$ and $\dim(G) = 7$,  we get $\dim(\mathcal{W}) = 27-7 = 20$. 
 \hfill $\square$

\section{Proof of Lemma~\mainref{lem:double}}

Let the given map from $\mathcal{U}$ to $\mathcal{W}$ be $\psi$.
Write $\psi = \psi_2 \circ \psi_1$ where $\psi_1: \mathcal{U} \rightarrow \mathcal{V}$, $\mathcal{V}$ is the open subset of $\mathcal{C}^{\times 2} \times (\mathbb{R}^3)^{\times 5}$ in (\mainref{eq:W-essential}), and $\psi : \mathcal{V} \rightarrow \mathcal{V} / G = \mathcal{W}$ is the quotient map.

We first check that $\psi$ is surjective. 
Let $v = (P, \bar{P}, \Gamma_1, \ldots, \Gamma_5) \in \mathcal{V}$ where 
$P = (\mathbf{R}_1 \,\, \mathbf{T}_1)$ and $\bar{P} = (\mathbf{R}_2 \,\, \mathbf{T}_2)$. 
We wish to show $\operatorname{im} \psi_1 \, \cap \, (G \cdot v) \, \neq \,\emptyset$.  
Indeed, $((I \,\, 0 ), (\mathbf{R} \,\, \hat{\mathbf{T}}), \tfrac{1}{\lambda} \Gamma_1, \ldots, \tfrac{1}{\lambda}\Gamma_5)$ is in the intersection, where $\mathbf{R} = \mathbf{R}_2 \mathbf{R}_1^{\top}$, 
$\hat{\mathbf{T}} = \frac{\mathbf{T}_2 - \mathbf{R}_2 \mathbf{R}_1^{\top} \mathbf{T}_1}{\lambda}$ and $\lambda = \| \mathbf{T}_2 - \mathbf{R}_2 \mathbf{R}_1^{\top} \mathbf{T}_1 \|_2$.

Next we check that $\psi$ is $2$-to-$1$ everywhere. 
Assume $u^{(i)} = (\mathbf{R}^{(i)}, \hat{\mathbf{T}}^{(i)}, \Gamma^{(i)}_1, \ldots, \Gamma_5^{(i)})$ for $i=1,2$ satisfy $\psi(u^{(1)}) = \psi(u^{(2)})$.  
Equivalently, $\psi_1(u^{(1)})$ and $\psi_1(u^{(2)})$ are in the same $G$-orbit, \ie there exists $g = \begin{pmatrix} \mathbf{R} & \mathbf{T} \\ 0 & \lambda \end{pmatrix}$ such that 
\begin{equation*}
\begin{cases}
(I \,\, 0) = (I \,\, 0)  g \\[2pt]
(\mathbf{R}^{(1)} \,\, \hat{\mathbf{T}}^{(1)}) = (\mathbf{R}^{(2)} \,\, \hat{\mathbf{T}}^{(2)}) g \\[2pt]
g (\Gamma_i^{(1)};  1) = (\Gamma_i^{(2)};  1)
\end{cases}
\end{equation*}
The only possibilities are $g = I$ and $u^{(1)} = u^{(2)}$ or 
$g = \operatorname{diag}(1,1,1,-1)$ and 
$(\mathbf{R}^{(2)}, \hat{\mathbf{T}}^{(2)}, \Gamma^{(2)}_1, \ldots, \Gamma^{(2)}_5) = (\mathbf{R}^{(1)}, - \hat{\mathbf{T}}^{(1)}, -\Gamma^{(1)}_1, \ldots, -\Gamma^{(1)}_5)$ as desired.

Lastly we check that $\psi$ is a submersion.
Let $u = (\mathbf{R}, \hat{\mathbf{T}}, \Gamma_1, \ldots, \Gamma_5)$ and $v = \psi_1(u)$.
It is easy to see  
\begin{multline} \label{eq:psi1-im-joe}
\operatorname{im} D \psi_1(u) =  0 \times \{{(} [\dot{s}]_{\times} \mathbf{R} \,\,\,\, \dot{\alpha}_1 \mathbf{T}^{\perp}_{1} + \dot{\alpha}_2 \mathbf{T}^{\perp}_{2} {)} : \\ \dot{s} \in \mathbb{R}^3,  \dot{\alpha} \in \mathbb{R}^2  \}  \times (\mathbb{R}^3)^{\times 5} 
\end{multline}
using facts \eqref{eq:tangent-SO}, \eqref{eq:sphere-tangent}, \eqref{eq:proj-space} below.
On the other hand,   by Proposition~\mainref{prop:quotient-ess},
\begin{equation} \label{eq:psi2-tang-joe}
\operatorname{ker} D \psi_2(v) = T(G \cdot v, v).
\end{equation}
A chain rule computation shows
\begin{multline}\label{eq:chain-rule-orbit}
T(G \cdot v, v) = \Big{ \{ } (-([\dot{s}]_{\times} \,\,\, \dot{\mathbf{T}}),  \tfrac{-1}{2} (\mathbf{R} [\dot{s}]_{\times} \,\,\, \mathbf{R}\dot{\mathbf{T}} + \dot{\lambda} \hat{\mathbf{T}})^{\perp},  \ldots, \\ [\dot{s}]_{\times} \Gamma_i + \dot{\mathbf{T}} - \dot{\lambda} \Gamma_i, \ldots ) \, : \, s \in \mathbb{R}^3, \dot{\mathbf{T}} \in \mathbb{R}^3, \dot{\lambda} \in \mathbb{R} \Big{ \} },
\end{multline}
where ``$\perp$" indicates the component orthogonal to $(\mathbf{R} \,\, \hat{\mathbf{T}})$ and we're identifying $T(\mathcal{C}, (\mathbf{R} \,\, \hat{\mathbf{T}}))$ with a subspace of $T(\mathbb{P}(\mathbb{R}^{3 \times 4}), (\mathbf{R} \,\, \hat{\mathbf{T}})) = (\mathbb{R}^{3 \times 4})^{\perp}$ (see Section~\ref{subsec:projsp} below).  
Comparing \eqref{eq:psi1-im-joe}, \eqref{eq:psi2-tang-joe}, \eqref{eq:chain-rule-orbit}, one verifies
\begin{equation*}
\operatorname{im} D \psi_1(u) \, \cap \, \operatorname{ker} D \psi_2(v) = 0.
\end{equation*}
Further, clearly $ D \psi_1(u)$ has rank $\dim(\mathcal{U}) = \dim(\mathcal{W})$. 
As wanted,
$D\psi(u) = D \psi_2(v) \circ D\psi_1(u)$ is surjective.

By \cite[Cor.~4.13]{lee2013smooth}, we have proven that $\psi$ is a smooth double cover of $\mathcal{W}$. \hfill $\square$

\section{Proof of Proposition~\mainref{prop:quotient-smooth}}

Let $\mathcal{V}$ be the open subset of $\mathcal{C}^{\times 2} \times (\mathbb{P}_{\mathbb{R}}^{3})^{\times 7}$ in (\mainref{eq:W-fundamental}).

The action (\mainref{eq:action}) is clearly smooth.  

For freeness, note that each orbit contains a point $v = (P, \bar{P}, \tilde{\Gamma}_1, \ldots, \tilde{\Gamma}_7)$ where $P = (I \,\, 0)$ and $\bar{P} = (\mathbf{p}_1 \,\, \mathbf{p}_2 \,\, \mathbf{p}_3 \,\, \mathbf{p}_4)$ with $\mathbf{p}_4 \neq 0$.  
Let $g \in \operatorname{PGL}(4)$ stabilize $v$.  
From $P  = P g$, 
\begin{equation}\label{eq:nice-g-real-nice}
g \, = \, \begin{pmatrix} 1 & 0 & 0 & 0 \\ 0 & 1 & 0 & 0 \\ 0 & 0 & 1 & 0 \\ g_{41} & g_{42} & g_{43} & g_{44} \end{pmatrix} 
\end{equation}
where $g_{44} \neq 0$. 
From $\bar{P}  = \bar{P} g$, get $\mathbf{p}_i + g_{4i} \mathbf{p}_4 = g_{44} \mathbf{p}_i$ in $\mathbb{R}^3$ for $i = 1, 2, 3$.  
If $g_{44} \neq 1$ then $\mathbf{p}_i = \frac{g_{4i}}{g_{44} - 1} \mathbf{p}_4$ for $i=1,2,3$.  
Then $\operatorname{rank}(\bar{P}) = 1$, contradicting $\operatorname{rank}{\bar{P}} = 3$.  
Thus $g_{44} = 1$, implying $g_{4i} = 0$ for $i = 1, 2, 3$.  Therefore $g = I$, and the stabilizer is trivial.

Next consider properness, or equivalently that  $\varphi : \mathcal{V} \times \operatorname{PGL}(4) \rightarrow \mathcal{V} \times \mathcal{V}$, $\left( v, g \right) \mapsto \left( v, g \cdot v \right)$
is a closed map with compact fibers.
By freeness, $\varphi$ is injective; in particular its fibers are compact.
For closedness, we show that $\varphi$ is a homeomorphism onto its image and $\operatorname{Im}(\varphi)$ is a closed subset of $\mathcal{V} \times \mathcal{V}$.  
By \cite[Thm.~9.10]{hartleyzisserman}, the image consists of all pairs 
$(v^{(1)}, v^{(2)}) \in \mathcal{V} \times \mathcal{V}$ with $v^{(i)} = (P^{(i)}, \bar{P}^{(i)}, \tilde{\Gamma}_1^{(i)}, \ldots, \tilde{\Gamma}_7^{(i)})$ for $i=1,2$ such that:
\begin{enumerate}\setlength\itemsep{0.35em}
\item[\text{i)}] $(P^{(1)}, \bar{P}^{(1)})$ and $(P^{(2)}, \bar{P}^{(2)})$ have the same fundamental matrix;
\item[\text{ii)}] $v^{(1)}$ and $v^{(2)}$ map to the same image point pairs in $(\mathbb{R}^2 \times \mathbb{R}^2)^{\times 7}$.
\end{enumerate}
For the homeomorphism claim, let $\psi : \operatorname{Im}(\varphi) \rightarrow \mathcal{V} \times \operatorname{PGL}(4)$ be the left-inverse of $\varphi$, which exists since $\varphi$ is injective.  Then $\psi(v^{(1)}, v^{(2)}) = (v^{(1)}, g)$ for  $g \in \operatorname{PGL}(4)$.   
We need to show that $g$ is a continuous function of $(v^{(1)}, v^{(2)})$.
Note 
\begin{equation} \label{eq:solve-g}
\begin{cases}
P^{(1)} = P^{(2)} g \\
\bar{P}^{(1)} = \bar{P}^{(2)} g
\end{cases}
\end{equation}
as elements of $\mathcal{C}$.
Regard this as a linear system in $g$, where the invertibility of $g$ is temporarily relaxed. 
Namely, since the equalities in \eqref{eq:solve-g} are up to scale, we require that the $2 \times 2$ minors of 
$( \operatorname{vec}(P^{(1)}) \,\,   \operatorname{vec}(P^{(2)} g)    \big{)}$ and $(\operatorname{vec}(\bar{P}^{(1)}) \,\,   \operatorname{vec}(\bar{P}^{(2)} g)    )$ vanish, where $\operatorname{vec}$ denotes vectorization.  
Let $\mathcal{L}{(v^{(1)}, v^{(2)})} \subseteq \mathbb{P}(\mathbb{R}^{4 \times 4})$ be the system's null space.
Then consider $\mathcal{L}{(v^{(1)}, v^{(2)})} \cap \operatorname{PGL}(4)$.
On one hand, it is a singleton because of the uniqueness of  $\psi$.
On the other hand, it is a Zariski-open subset of $\mathcal{L}{(v^{(1)}, v^{(2)})}$.
The only possibility is that $\mathcal{L}{(v^{(1)}, v^{(2)})}$ is a single projective point, and it lies in $\operatorname{PGL}(4)$.
Therefore by \cite[pg.~3]{penrose1955generalized},  $g$ is a continuous function of $(v^{(1)}, v^{(2)})$.
This establishes properness.

To conclude we apply the quotient manifold theorem, noting 
$\dim(\mathcal{V}) = \dim(\mathcal{C}^{\times 2} \times (\mathbb{P}_{\mathbb{R}}^3)^{\times 7}) = 2 \cdot 11 + 7 \cdot 3 = 43$
and $\dim(\operatorname{PGL}(4)) = 15$.   \hfill $\square$

\section{Proof of Lemma~\mainref{lem:b-coords}}
Let the first map from $\mathcal{U}$ to $\mathcal{W}$ be $\psi$.  
Write $\psi = \psi_2 \circ \psi_1$ where $\psi_1 : \mathcal{U} \rightarrow \mathcal{V}$, $\mathcal{V}$ is the open subset of $\mathcal{C}^{\times 2} \times (\mathbb{P}_{\mathbb{R}}^3)^{\times 7}$ in (\mainref{eq:W-fundamental}), and $\psi_2: \mathcal{V} \rightarrow \mathcal{V}/\operatorname{PGL}(4) = \mathcal{W}$ is the quotient map.

We first check that $\psi$ is injective.  
Assume $\psi(b^{(1)}, \tilde{\Gamma}^{(1)}_1, \ldots, \tilde{\Gamma}^{(1)}_7) = \psi({b}^{(2)}, {{\tilde \Gamma}}^{(2)}_1, \ldots, {\tilde \Gamma}^{(2)}_7)$.  
Then $\psi_1(b^{(1)}, \tilde{\Gamma}^{(1)}_1, \ldots, \tilde{\Gamma}^{(1)}_7)$ and $\psi_1({b}^{(2)}, {\tilde \Gamma}^{(2)}_1, \ldots, {\tilde \Gamma}^{(2)}_7)$ lie in the same $\operatorname{PGL}(4)$-orbit.
In particular,
there is $g \in \operatorname{PGL}(4)$  with 
\begin{equation*}\label{eq:my-g-eqn}
(I \,\, 0) \, = \, (I \,\, 0) \, g   \quad \text{and}  \quad \mathbf{M}(b^{(2)}) \, = \, \mathbf{M}({b}^{(2)}) \, g  \quad  \text{in} \quad \mathbb{P}(\mathbb{R}^{3 \times 4}).
\end{equation*}
From the first equality,  \eqref{eq:nice-g-real-nice} holds. 
In the second equality, the $0$ entries imply $g_{41} = g_{42} = g_{43} = 0$.  
Comparing the $1$ entries, $g_{44} = 1$.  
Thus $g = I \in \operatorname{PGL}(4)$.

Next we check that $\psi$ is a submersion. 
Let $u = (b, \tilde \Gamma_1, \ldots, \tilde \Gamma_7)$ and $v = \psi_1(u)$.  
It is easy to see
\begin{multline} \label{eq:Dpsi1-joe}
\operatorname{im} D \psi_1 (u) = 0 \times \big{\{} \! \begin{scriptsize} \begin{pmatrix} 0 & \dot{b}_1 & \dot{b}_2 & \dot{b}_3 \\
\dot{b}_4 & \dot{b}_5 & \dot{b}_6 & \dot{b}_7 \\
0 & 0 & 0 & 0
\end{pmatrix}^{\!\! \perp} \end{scriptsize} \!\! : \dot{b} \in \mathbb{R}^7 \big{\}} \\ \times T(\mathbb{P}_{\mathbb{R}}^{3}, \tilde{\Gamma}_1) \times \ldots \times T(\mathbb{P}_{\mathbb{R}}^{3}, \tilde{\Gamma}_7),
\end{multline}
where we're identifying $T( \mathcal{C}, \mathbf{M}(b))$ with $T( \mathbb{P}(\mathbb{R}^{3 \times 4}), \mathbf{M}(b)) = (\mathbb{R}^{3 \times 4})^{\perp}$ (similarly to Section~\ref{subsec:projsp}). 
Here and below, ``$\perp$" denotes the component orthogonal to the center of the tangent space.  On the other hand, Proposition~\mainref{prop:quotient-smooth} gives
\begin{equation}\label{eq:tangent-to-orbit}
\operatorname{ker}(D \psi_2(v)) = T(\operatorname{PGL}(4) \cdot v, v).
\end{equation} 
A chain rule computation shows
\begin{multline}\label{eq:Dpsi2-joe}
T(\operatorname{PGL}(4) \cdot v, v) =  \Big{\{} {(} \tfrac{-1}{\sqrt{3}} ( (I \, 0) \dot{g})^{\perp}, \tfrac{-1}{\sqrt{2 + \| b \|_2^2}}(\mathbf{M}(b) \dot{g})^{\perp}, \\ \ldots, \tfrac{1}{ \| \tilde{\Gamma}_i \|_2^2} (\dot{g} \tilde{\Gamma}_i)^{\perp}, \ldots {)} \, : \, \dot{g} \in \mathbb{R}^{16} \Big{\}}.
\end{multline}
Comparing \eqref{eq:Dpsi1-joe}, \eqref{eq:tangent-to-orbit}, \eqref{eq:Dpsi2-joe}, one verifies 
\begin{equation*} \label{eq:zero-intersection}
\operatorname{im}(D \psi_1(u)) \cap \operatorname{ker}(D \psi_2(v)) = 0.
\end{equation*}
Further, clearly  $D \psi_1(u)$ has rank $\dim(\mathcal{U}) = \dim(\mathcal{W})$.  We deduce 
$D\psi(u) = D \psi_2(v) \circ D\psi_1(u)$ is surjective.

By \cite[Thm.~4.14]{lee2013smooth},  $\psi$ is a 
smooth parameterization of an open subset of $\mathcal{W}$.
Further, $\psi(\mathcal{U}) \subseteq \mathcal{W}$ is dense because $\operatorname{PGL}(4) \cdot \psi_1(\mathcal{U}) \subseteq \mathcal{V}$ is dense (details omitted).
Analogous considerations apply to the other maps in the lemma.
Finally,  their images do cover $\mathcal{W}$.  
The main check is that for $(P, \bar{P}) \in \mathcal{C}^{\times 2}$ with distinct centers, we can find $g \in \operatorname{PGL}(4)$ so $P g = (I \, 0)$ and $\bar{P} g$ has $(0 \, 0 \, 0 \, 1)$ as a row and has an entry of $1$ in its first three columns (details omitted).  \hfill $\square$

\section{Proof of Proposition~\mainref{prop:world-lifting}}

Take $x \in \mathcal{X} \setminus \operatorname{disc}(\mathscr{M}, \mathbf{L})$.  
We need to show $x \in \mathcal{X} \setminus \operatorname{ill}(\mathscr{M}, \mathcal{{X}})$.
Let $w \in \Phi^{-1}(x) \in \mathcal{W}$ and $y = \Psi(w)$.  
By the first line of \eqref{eq:epi-general},  $y \in \mathcal{Y} \cap \operatorname{ker} \mathbf{L}(x)$.  
Clearly, $w \in \Phi^{-1}(x) \cap \Psi^{-1}(y)$.  
So the smooth lifting property applies.
From $\Phi \circ h = \operatorname{id}_{\mathcal{X}_0}$ and $h(x) = w$, 
the chain rules gives $D \Phi (w) \circ D h (x) = \operatorname{id}$.
In particular, $D \Phi(w)$ is invertible.
Since $w \in \Phi^{-1}(x)$ was arbitrary, $x \notin \operatorname{ill}(\mathscr{M}, \mathcal{{X}})$ by Definition~\ref{def:ill-X}.

\section{Proof of Proposition~\mainref{prop:formula-E}}

The condition number of $w = (\mathbf{R}, \hat{\mathbf{T}}, \Gamma_1, \ldots, \Gamma_5)$ is 
\begin{equation}\label{eq:my-cond-supp}
\operatorname{cond}(\mathscr{M}, w) = \sigma_{\max}(D \Psi \circ (D \Phi_2 \circ D \Phi_1)^{-1}(w)),
\end{equation}
where $\Phi_1, \Phi_2, \Psi$ are the maps (\mainref{eq:E-Phi1-main}), (\mainref{eq:E-Phi2-main}), (\mainref{eq:E-psi-map}), assuming that the inverse in \eqref{eq:my-cond-supp} exists.

The differentials map between the following spaces:
\begin{small}
\begin{align*}
& D\Phi_1(w) : T(\mathcal{W}, w) \rightarrow T((\mathbb{R}^3 \times \mathbb{R}^3)^{\times 5}, \Phi_1(w)), \\[4pt]
& D\Phi_2( \Phi_1(w) ) : T((\mathbb{R}^3 \times \mathbb{R}^3)^{\times 5}, \Phi_1(w)) \rightarrow T((\mathbb{R}^2 \times \mathbb{R}^2)^{\times 5}, \Phi(w)), \\[4pt]
& D\Psi(w) : T(\mathcal{W}, w)  \rightarrow T(\mathcal{Y}, \Psi(w)).
\end{align*}
\end{small}
\!\!\!The tangent spaces are described in Section~\ref{sec:background-mflds}, using standard facts and Lemma~\mainref{lem:double}.
We must choose ordered bases for these tangent spaces,  to write down matrix formulas for the differentials.

For $T(\mathcal{W}, w)$, fix the ordered basis
\begin{equation}\label{eq:ordered-basis-W-1}
e^{(1)}_1, \, e^{(1)}_2, \, \ldots, \, e^{(5)}_3, \, [e_1]_{\times} \mathbf{R}, \, [e_2]_{\times} \mathbf{R}, \, [e_3]_{\times} \mathbf{R}, \, \hat{\mathbf{T}}_1^{\perp}, \, \hat{\mathbf{T}}_2^{\perp},
\end{equation}
where $e^{(i)}_{\bullet}$ are standard basis vectors in the $i$th copy of $\mathbb{R}^3$.
For $T((\mathbb{R}^3 \times \mathbb{R}^3)^{\times 5}, \Phi_1(w))$, fix the ordered basis 
\begin{small}
\begin{equation*}
e^{(1,1)}_{1}, \, e^{(1,1)}_{2}, \, e^{(1,1)}_{3}, \, e^{(2,1)}_1, \, \ldots, \, e^{(5,1)}_3, \, e^{(1,2)}_1, \, e^{(1,2)}_2, \, \ldots, \, e^{(5,2)}_3,
\end{equation*}
\end{small}
\!\!\!\!\! where $e^{(i,j)}_{\bullet}$ are standard basis vectors in the copy of $\mathbb{R}^3$ corresponding to the $i$th world point and $j$th camera.
For $T((\mathbb{R}^2 \times \mathbb{R}^2)^{\times 5}, \Phi(w))$, fix the ordered orthornormal basis
\begin{equation*}
e^{(1,1)}_{1}, \, e^{(1,1)}_{2}, \, e^{(2,1)}_1, \, \ldots, \, e^{(5,1)}_2, \, e^{(1,2)}_1, \, e^{(1,2)}_2, \, \ldots, \, e^{(5,2)}_2,
\end{equation*}
where $e^{(i,j)}_{\bullet}$ are standard basis vectors in the copy of $\mathbb{R}^2$ corresponding to the $i$th world point and $j$th camera.  
For $T(\mathcal{Y}, \Psi(w))$, fix the ordered non-orthonormal basis 
\eqref{eq:non-orthonormal}.

Now, $D \Phi_1(w)$ is represented by the $30 \times 20$ Jacobian matrix in Fig.~\ref{fig:my-jac1-joe}.  
Next, $D\Phi_2(\Phi_1(w))$ is the $30 \times 20$ matrix
\begin{equation}\label{eqn:jac-2}
\begin{pmatrix}
D \pi(\Gamma_1) & & & & & \\
& \ddots & & & & \\
& & D \pi(\Gamma_5) & & & \\
& & & D \pi(\bar{\Gamma}_1) & & \\
& & & & \ddots & \\
& & & & & D \pi(\bar{\Gamma}_5)
\end{pmatrix},
\end{equation}
where $\pi : \mathbb{R}^3 \dashrightarrow \mathbb{R}^2$ is orthographic projection, explicitly
\begin{equation}\label{eq:Dpi-supp}
D \pi(x,y,z) = \begin{pmatrix} \tfrac{1}{z} & 0 & \tfrac{-x}{z^2} \\[2pt] 0 & \tfrac{1}{z} & \tfrac{-y}{z^2} \end{pmatrix},
\end{equation}
and $\bar{\Gamma}_i := \mathbf{R} \Gamma_i + \hat{\mathbf{T}}$ for $i = 1, \ldots, 5$.  
Last, $D\Psi(w)$ is simply
\begin{equation} \label{eq:trivial}
\begin{pmatrix}
I_{5 \times 5} & 0_{5 \times 15}
\end{pmatrix},
\end{equation}
by the choice of non-orthonormal basis for $T(\mathcal{Y}, \Psi(w))$.
To convert an orthonormal basis, as needed for singular values, we use Lemma~\ref{lem:linear-alg} below.  
Specifically, we left-multiply \eqref{eq:trivial} by $G^{1/2}$, where $G \in \mathbb{R}^{5 \times 5}$ is the Gram matrix in Fig.~\ref{fig:joe-k}.

Putting these expressions for the differentials into \eqref{eq:my-cond-supp}, Proposition~\mainref{prop:formula-E} is finished. \hfill $\square$

\section{Proof of Proposition~\mainref{prop:formula-F}}

The condition number of $w = (b, \Gamma_1, \ldots, \Gamma_7)$ is
\begin{equation}\label{eq:my-cond-supp-2}
\operatorname{cond}(\mathscr{M}, w) = \sigma_{\max}(D \Psi \circ (D \Phi_2 \circ D \Phi_1)^{-1}(w)),
\end{equation}
where $\Phi_1, \Phi_2, \Psi$ are the maps (\mainref{eq:Phi1-7pt}), (\mainref{eq:Phi2-7pt}), (\mainref{eq:fund-output}), assuming the inverse exists.  

The differentials map between the following spaces:
\begin{small}
\begin{align*}
& D\Phi_1(w) : T(\mathcal{W}, w) \rightarrow T((\mathbb{R}^3 \times \mathbb{R}^3)^{\times 7}, \Phi_1(w)), \\[4pt]
& D\Phi_2( \Phi_1(w) ) : T((\mathbb{R}^3 \times \mathbb{R}^3)^{\times 7}, \Phi_1(w)) \rightarrow T((\mathbb{R}^2 \times \mathbb{R}^2)^{\times 7}, \Phi(w)), \\[4pt]
& D\Psi(w) : T(\mathcal{W}, w)  \rightarrow T(\mathcal{Y}, \Psi(w)).
\end{align*}
\end{small}
\!\!\!The tangent spaces are described in Section~\ref{sec:background-mflds}, using well-known facts and Lemma~\mainref{lem:b-coords}.

For $T(\mathcal{W}, w)$, fix the ordered basis 
\begin{equation}\label{eq:myworld-7}
e^{(1)}_{1}, e^{(1)}_2, \ldots, e^{(7)}_3, e_1, \ldots, e_7
\end{equation}
where $e^{(i)}_{\bullet}$ are standard basis vectors in the $i$th copy of $\mathbb{R}^3$ and $e_{\bullet}$ are standard basis vectors in $\mathbb{R}^{7}$.  For $T((\mathbb{R}^3 \times \mathbb{R}^3)^{\times 7}, \Phi(w))$ and $T((\mathbb{R}^2 \times \mathbb{R}^2)^{\times 7}, \Phi(w))$, fix ordered bases analogously to what we chose in the proof of Proposition~\mainref{prop:formula-E}. 
For $T(\mathcal{Y}, \Psi(w))$, fix the non-orthonormal basis \eqref{eq:easy-basis}.

Then, $D\Phi_1(w)$ is represented by the $42 \times 28$ Jacobian matrix in Fig.~\ref{eq:DPhi1-7pt-supp}.  Next, $D\Phi_2(\Phi_1(w))$ is the $42 \times 28$ Jacobian
\begin{equation*}
\begin{pmatrix}
D \pi(\Gamma_1) & & & & & \\
& \ddots & & & & \\
& & D \pi(\Gamma_7) & & & \\
& & & D \pi(\bar{\Gamma}_1) & & \\
& & & & \ddots & \\
& & & & & D \pi(\bar{\Gamma}_7)
\end{pmatrix},
\end{equation*}
where $\bar{\Gamma}_i = \mathbf{M}(b) \Gamma_i$ with $\mathbf{M}(b)$ as in Lemma~\mainref{lem:b-coords} (regarded as being in $\mathbb{R}^{3 \times 4}$) and $D\pi$ is given by \eqref{eq:Dpi-supp}.  
Also, $D\Psi(w)$ is 
\begin{equation} \label{eq:trivial-2}
\begin{pmatrix}
I_{7 \times 7} & 0_{7 \times 21}
\end{pmatrix},
\end{equation}
by choice of non-orthonormal basis for $T(\mathcal{Y}, \Psi(w))$.  To convert this to an orthonormal basis, we left-multiply \eqref{eq:trivial-2} by $G^{1/2}$, where $G \in \mathbb{R}^{7 \times 7}$ is the Gram matrix \eqref{eq:gram-7pt} below. 

This finishes the proof.  Note that similar formulas apply if the world scene is expressed in terms of other charts from Lemma~\mainref{lem:b-coords}.
\hfill $\square$

\section{Proof of Theorem~\mainref{thm:illposed-world}}

Let $w = (\mathbf{R}, \hat{\mathbf{T}}, \Gamma_1, \ldots, \Gamma_5) \in \mathcal{W}$ be a world scene.  
Let $\delta w = (\dot{\Gamma}_1, \ldots, \dot{\Gamma}_5, \dot{s}, \dot{\alpha})$ be an element of the tangent space $T(\mathcal{W}, w)$, expressed in terms of the ordered basis \eqref{eq:ordered-basis-W-1}, where $\dot{\Gamma}_i \in \mathbb{R}^3$ ($i=1, \ldots, 5$), $\dot{s} \in \mathbb{R}^3$ and $\dot{\alpha} \in \mathbb{R}^2$.
The task is to characterize those $w$ for which
\begin{equation}\label{eq:supp-key-eqn-joe}
D\Phi(w) \delta w \, = \, 0
\end{equation}
has a nonzero solution in $\delta w$.
Below we will use $D\Phi(w) = D\Phi_2(\Phi_1(w)) \circ D\Phi_1(w)$ whence \eqref{eq:supp-key-eqn-joe} is equivalent to 
\begin{equation}\label{eq:supp-key-eqn-joe2}
D\Phi_1(w) \delta w \,\, \in \, \operatorname{ker} D \Phi_2(\Phi_1(w)),
\end{equation}
and also the explicit matrix formulas from Proposition~\mainref{prop:formula-E}.

By \eqref{eq:Dpi-supp}, if $z \neq 0$ then $D \pi(x,y,z)$ has rank $2$ and its kernel is spanned by $(x \,\, y \,\, z)^{\top}$.  
Therefore \eqref{eqn:jac-2} implies
$$
\operatorname{ker} D\Phi_2(\Phi_1(w)) = \mathbb{R} \Gamma_1 \oplus \cdots \oplus \mathbb{R} \Gamma_5 \oplus \mathbb{R}(\mathbf{R} {\Gamma}_1 + \hat{\mathbf{T}}) \oplus \cdots 
$$
By the formula for $D\Phi_1(w)$ in Fig.~\ref{fig:my-jac1-joe},  
\eqref{eq:supp-key-eqn-joe2} is equivalent to
\begin{equation} \label{eq:good-stuff}
\begin{cases}
   \dot{\Gamma}_i  = \lambda_i \Gamma_i  \\[2pt]
 \mathbf{R} \dot{\Gamma}_i + [\dot{s}]_{\times} \mathbf{R} \Gamma_i + \hat{\mathbf{T}}_{\ast}^{\perp} = \mu_i (\mathbf{R} \Gamma_i + \hat{\mathbf{T}}) 
  \end{cases} 
\end{equation}
for $i=1, \ldots, 5$, some $\lambda, \mu \in \mathbb{R}^5$ and $\hat{\mathbf{T}}^{\perp}_{\ast} := \dot{\alpha}_1 \hat{\mathbf{T}}^{\perp}_1 + \dot{\alpha}_2 \hat{\mathbf{T}}^{\perp}_2$.
We need to answer: For which $\mathbf{R}, \hat{\mathbf{T}}, \Gamma_i$, does \eqref{eq:good-stuff} admit a solution where $\dot{\Gamma}_i, \dot{s}, \dot{\alpha}$ are not all $0$?

Substituting the top line of \eqref{eq:good-stuff} into the bottom line gives
\begin{equation}\label{eq:my-new}
\lambda_i \mathbf{R} \Gamma_i + [\dot{s}]_{\times} \mathbf{R} \Gamma_i + \hat{\mathbf{T}}^{\perp}_{\ast} = \mu_i \mathbf{R} \Gamma_i + \mu_i \hat{\mathbf{T}},
\end{equation}
for $i = 1, \ldots, 5$.  This eliminates $\dot{\Gamma}_i$.
We now need to characterize when  \eqref{eq:my-new} admits a nonzero solution in $\lambda, \mu, \dot{s}, \dot{\alpha}$.  (Proof of equivalence: Assume \eqref{eq:good-stuff} has a solution in which $\dot{\Gamma}_i$, $\dot{s}$, $\dot{\alpha}$ are not all zero.  Then \eqref{eq:my-new} has a nonzero solution, as 
$\dot{\Gamma}_i \neq 0$ if and only if $\lambda_i \neq 0$ 
since $(\Gamma_i)_3 \neq 0$.  Conversely, assume \eqref{eq:my-new} has a nonzero solution.
Then one of $\lambda_i$, $\dot{s}$, $\hat{\mathbf{T}}^{\perp}_{\ast}$ must be nonzero by 
$(\mathbf{R} \Gamma_i + \hat{\mathbf{T}})_3 \neq 0$, so that \eqref{eq:good-stuff} has a nonzero solution.)

Next, for convenience we replace $\lambda_i-\mu_i$ by $\lambda_i$.  This corresponds to an invertible linear change of variables in \eqref{eq:my-new}, and so preserves  rank. Now the linear system reads
\begin{equation}\label{eq:my-new-2}
\lambda_i \mathbf{R} \Gamma_i + [\dot{s}]_{\times} \mathbf{R} \Gamma_i + \hat{\mathbf{T}}^{\perp}_{\ast} =  \mu_i \hat{\mathbf{T}}.
\end{equation}

Next, we rotate the world scene to assume $\hat{\mathbf{T}}^{\perp}_1 = e_1$, $\hat{\mathbf{T}}^{\perp}_2 = e_2$, $\hat{\mathbf{T}} = e_3$ with no loss of generality.
To be more precise, let us left-multiply \eqref{eq:my-new-2} by $\mathbf{Q} \in \operatorname{SO}(3)$ such that 
$\mathbf{Q} \mathbf{T}^{\perp}_1 = e_1$, $\mathbf{Q} \mathbf{T}^{\perp}_2 = e_2$, and $\mathbf{Q} \hat{\mathbf{T}} = e_3$, and then replace 
$ \mathbf{Q} [\dot{s}]_{\times} \mathbf{Q}^{\top}$ by $[\dot{s}]_{\times}$.  This gives an invertible linear transformation of \eqref{eq:my-new-2} and so preserves rank; we obtain
\begin{equation} \label{eq:my-newest}
\lambda_i \mathbf{Q} \mathbf{R} \Gamma_i + [\dot{s}]_{\times} \mathbf{Q} \mathbf{R} \Gamma_i + \dot{\alpha}_1 e_1 + \dot{\alpha}_2 e_2 =  \mu_i e_3.
\end{equation}

\noindent Write $\bar{\Gamma}_i := \mathbf{Q} \mathbf{R} \Gamma_i$ as shorthand.
The first two coordinates of \eqref{eq:my-newest} are
\begin{equation}\label{eq:my-new-3}
 \lambda_i \! \begin{pmatrix} (\bar{\Gamma}_i)_1 \\[1pt] (\bar{\Gamma}_i)_2 \end{pmatrix} \, + \, \begin{pmatrix} -\dot{s}_3 (\bar{\Gamma}_i)_2 + \dot{s}_2 (\bar{\Gamma}_i)_3  \\[1pt] \dot{s}_3 (\bar{\Gamma}_i)_1 - \dot{s}_1 (\bar{\Gamma}_i)_3  \end{pmatrix} 
   \, + \, \dot{\alpha} \,\, = \,\, 0.
\end{equation}
Clearly \eqref{eq:my-new-3} has a nonzero solution if and only if \eqref{eq:my-newest} does.  
In this way $\mu$ is eliminated.

Next, left-multiplying \eqref{eq:my-new-3} by $\begin{pmatrix} -(\bar{\Gamma}_i)_2 & (\bar{\Gamma}_i)_1 \end{pmatrix}$   eliminates $\lambda$:
\begin{align} \label{eq:amazing-0}
    & \dot{s}_3((\bar{\Gamma}_i)_1^2 + (\bar{\Gamma}_i)_2^2) \, - \, \dot{s}_2 (\bar{\Gamma}_i)_2 (\bar{\Gamma}_i)_3 \, - \, \dot{s}_1 (\bar{\Gamma}_i)_1 (\bar{\Gamma}_i)_3 \, \nonumber \\[1.15pt] 
    &  \hspace{2.5cm} + \dot{\alpha}_1 (\bar{\Gamma}_i)_2 \, - \, \dot{\alpha}_2 (\bar{\Gamma}_i)_1 \, = \, 0.
\end{align}
That is, \eqref{eq:amazing-0} has a nonzero solution if and only if \eqref{eq:my-new-3} does.
(Proof:   Case 1 -- $\begin{pmatrix} (\bar{\Gamma}_i)_1 & (\bar{\Gamma}_i)_2 \end{pmatrix}$ is zero for some $i$.  Then \eqref{eq:my-new-3} has a solution where $\lambda_i = 1$ and all other variables are $0$ ($\dot{s}=0$, $\dot{
\alpha
}=0$, and $\lambda_j=0$ for $j \neq i$).  On the other hand, the $i$th equation in \eqref{eq:amazing-0} is identically $0$. Therefore \eqref{eq:amazing-0} admits a nonzero solution too, being homogeneous and rank-deficient.  Case 2 -- $\begin{pmatrix} (\bar{\Gamma}_i)_1 & (\bar{\Gamma}_i)_2 \end{pmatrix}$ is nonzero for all $i$.  Then $\begin{pmatrix} (\bar{\Gamma}_i)_1 & (\bar{\Gamma}_i)_2 \end{pmatrix}^{\top}$ and $\begin{pmatrix} -(\bar{\Gamma}_i)_2 & (\bar{\Gamma}_i)_1 \end{pmatrix}^{\top}$ form an orthogonal basis for $\mathbb{R}^2$ for all $i$. 
We recognize \eqref{eq:amazing-0} as \eqref{eq:my-new-3} in the $\begin{pmatrix} (-\bar{\Gamma}_i)_2 & (\bar{\Gamma}_i)_1 \end{pmatrix}^{\top}$ component.  
Thus if \eqref{eq:amazing-0} admits a nonzero solution, then \eqref{eq:my-new-3} does too, as there would be a unique $\lambda_i$ ensuring \eqref{eq:my-new-3} holds in the $\begin{pmatrix} (\bar{\Gamma}_i)_1 & (\bar{\Gamma}_i)_2 \end{pmatrix}^{\top}$ component.   Conversely if \eqref{eq:my-new-3} has a nonzero solution, then $\dot{s}$ and $\dot{\alpha}$ cannot both be $0$, so \eqref{eq:amazing-0} has a nonzero solution.)

At this point, we have reduced the $20 \times 20$ linear system \eqref{eq:supp-key-eqn-joe} to the $5 \times 5$ linear system \eqref{eq:amazing-0}. To finish, we interpret \eqref{eq:amazing-0} geometrically. Rewrite it as
\begin{equation}\label{eq:world-Gammai}
\begin{pmatrix}
\Gamma_i \\
1
\end{pmatrix}^{\! \top} \! Q(\dot{s}, \dot{\alpha}) \begin{pmatrix}
\Gamma_i \\
1
\end{pmatrix} = 0
\end{equation}
for $i=1, \ldots, 5$, where 
\begin{equation*} \label{eq:my-Q}
Q(\dot{s}, \dot{\alpha}) := \frac{1}{2} \begin{pmatrix} \mathbf{Q} \mathbf{R} & 0 \\
0 & 1
\end{pmatrix}^{\!\!\top} \! \! \begin{pmatrix}
2\dot{s}_3 & 0 & -\dot{s}_1 & -\dot{\alpha}_2 \\
0 & 2\dot{s}_3 & -\dot{s}_2 & \dot{\alpha}_1 \\
-\dot{s}_1 & -\dot{s}_2 & 0 & 0 \\
-\dot{\alpha}_2 & \dot{\alpha}_1 & 0 & 0
\end{pmatrix} \!\! \begin{pmatrix} \mathbf{Q} \mathbf{R} & 0 \\
0 & 1
\end{pmatrix}\!.
\end{equation*}
Clearly, \eqref{eq:world-Gammai} holds for $(\dot{s}, \dot{\alpha}) \neq 0$  means that the world points $\Gamma_i$ all lie on the quadric $\mathcal{Q}$ in $\mathbb{R}^3$ defined  by ${Q}(\dot{s}, \dot{\alpha})$.
By inspection, the following line in $\mathbb{P}_{\mathbb{R}}^3$ must lie on $\mathcal{Q}$:
\begin{equation}
{\huge{\{} }\begin{pmatrix} \mathbf{Q} \mathbf{R} & 0 \\
0 & 1
\end{pmatrix}^{\!\! \top} \begin{pmatrix} 0 \\ 0 \\ a \\ b \end{pmatrix} : a, b \in \mathbb{R} \,  {\huge{\}}},
\end{equation}
In $\mathbb{R}^3$, this equals the baseline $\operatorname{span}(\mathbf{R}^{\top} \hat{\mathbf{T}})$.
Futhermore, if $\dot{s}_3 \neq 0$, then $\mathcal{Q} \cap \mathcal{N}$ must be a circle (possibly with radius $0$), where $\mathcal{N}$ is any affine plane perpendicular to the baseline.  On the other hand, if $\dot{s} = 0$, then $\mathcal{Q} \cap \mathcal{N}$ must be a line (with at most one exception for which $\mathcal{Q} \cap \mathcal{N}$ is a plane). 
Conversely, all such quadrics containing the baseline  with circular or linear perpendicular cross-sections must come from a matrix of the form $Q(\dot{s},\dot{\alpha})$ for some $(\dot{s}, \dot{\alpha}) \neq 0$.
Altogether, Theorem~\mainref{thm:illposed-world} follows. \hfill $\square$

\section{Proof of Theorem~\mainref{thm:F-ill-posed-world}}

Let $w = (b, \Gamma_1, \ldots, \Gamma_7) \in \mathcal{W}$.
Let $\delta w = (\dot{b}, \dot{\Gamma}_1, \ldots, \dot{\Gamma}_7)$  represent an element of $T(\mathcal{W}, w)$, expressed in terms of the ordered basis \eqref{eq:myworld-7} where $\dot{b} \in \mathbb{R}^7$ and $\dot{\Gamma}_i \in \mathbb{R}^3$. 
We need to characterize world scenes such that 
\begin{equation} \label{eq:membership-2}
D \Phi_1(w) \delta w \, \in \, \operatorname{ker} D \Phi_2(\Phi_1(w))
\end{equation}
for some nonzero $\delta w$, where the Jacobians $D \Phi_1$ and $D \Phi_2$ are as in Proposition~\mainref{prop:formula-F}.

Like in the proof of Theorem~\mainref{thm:illposed-world}, 
$$
\operatorname{ker} D\Phi_2 (\Phi_1(w)) = \mathbb{R} \Gamma_1 \oplus \cdots \oplus \mathbb{R} \Gamma_7 \oplus \mathbb{R} (\mathbf{M}(b) (\Gamma_i; 1)) \oplus \cdots 
$$
By the form of $D \Phi_1(w)$ in Fig.~\ref{eq:DPhi1-7pt-supp}, \eqref{eq:membership-2} is equivalent to
\begin{equation} \label{eq:good-stuff3}
\begin{cases}
   \dot{\Gamma}_i  = \lambda_i \Gamma_i  \\[2pt]
 D \mathbf{M}(b)(\dot{b}) (\Gamma_i; 1)  + \mathbf{M}(b) (\dot{\Gamma}_i; 0) = \mu_i (\mathbf{M}(b)(\Gamma_i; 1)) 
  \end{cases} 
\end{equation}
for $i=1, \ldots, 7$, some $\lambda, \mu \in \mathbb{R}^7$ and 
\begin{equation*}
D \mathbf{M}(b)(\dot{b}) = \begin{pmatrix} 0 & \dot{b}_1 & \dot{b}_2 & \dot{b}_3 \\ \dot{b}_4 & \dot{b}_5 & \dot{b}_6 & \dot{b}_7 \\ 0 & 0 & 0 & 0 \end{pmatrix}.
\end{equation*}
Equating third coordinates in the bottom line of \eqref{eq:good-stuff3}, it must be that $\mu = 0$ as $(\mathbf{M}(b)(\Gamma_i; 1))_3 \neq 0$ for $i=1, \ldots, 7$.
Substituting the top line of \eqref{eq:good-stuff3} into the bottom line gives
\begin{equation} \label{eq:this-amazing}
\begin{pmatrix}
0 & \dot{b}_1 & \dot{b}_2 & \dot{b}_3 \\
\dot{b}_4 & \dot{b}_5 & \dot{b}_6 & \dot{b}_7
\end{pmatrix} \begin{pmatrix} \Gamma_i  \\ 1 \end{pmatrix} + \lambda_i \begin{pmatrix} 1 & b_1 & b_2 \\ b_4 & b_5 & b_ 6 \end{pmatrix} {\Gamma}_i = 0
\end{equation}
for $i = 1, \ldots, 7$.  
The task is to characterize for which $b$ and $\Gamma_i$ does \eqref{eq:this-amazing} has a nonzero solution in $\dot{b}, \lambda$ (the justification that \eqref{eq:this-amazing} is equivalent is to \eqref{eq:good-stuff3} is like in the proof of Theorem~\mainref{thm:F-ill-posed-world}).

Next, we eliminate $\lambda$, by multiplying \eqref{eq:this-amazing} on the left by 
\begin{equation*} \label{eq:bX-perp}
    \begin{pmatrix}
    -b_4(\Gamma_i)_1 - b_5(\Gamma_i)_2 - b_6(\Gamma_i)_3 \\
    (\Gamma_i)_1 + b_1(\Gamma_i)_2 + b_2(\Gamma_i)_3
    \end{pmatrix}^{\!\! \top}
\end{equation*}
which kills $\begin{pmatrix} 1 & b_1 & b_2 \\ b_4 & b_5 & b_6 \end{pmatrix} \Gamma_i$.  The resulting linear system in $\dot{b}$ is
\begin{equation} \label{eq:i-like-this-1}
    \begin{pmatrix}
    -b_4(\Gamma_i)_1(\Gamma_i)_2 - b_5(\Gamma_i)_2^2 - b_6(\Gamma_i)_2(\Gamma_i)_3 \\[3pt]
    -b_4(\Gamma_i)_1(\Gamma_i)_3 - b_5(\Gamma_i)_2(\Gamma_i)_3 - b_6(\Gamma_i)_3^2 \\[3pt]
    -b_4(\Gamma_i)_1 - b_5(\Gamma_i)_2 - b_6(\Gamma_i)_3 \\[3pt]
    (\Gamma_i)_1^2 + b_1(\Gamma_i)_1 (\Gamma_i)_2 + b_2(\Gamma_i)_1 (\Gamma_i)_3 \\[3pt]
    (\Gamma_i)_1(\Gamma_i)_2 + b_1(\Gamma_i)_2^2 + b_2(\Gamma_i)_2(\Gamma_i)_3 \\[3pt]
    (\Gamma_i)_1(\Gamma_i)_3 + b_1(\Gamma_i)_2(\Gamma_i)_3 + b_2(\Gamma_i)_3^2 \\[3pt]
    (\Gamma_i)_1 + b_1(\Gamma_i)_2 + b_2(\Gamma_i)_3
    \end{pmatrix}^{\!\! \top}  \dot{b} \,\, = \,\, 0 
\end{equation}
for $i = 1, \ldots, 7$.  Then \eqref{eq:i-like-this-1} has a nonzero solution if and only if \eqref{eq:this-amazing} does (the proof of this is similar as for Theorem~\mainref{thm:F-ill-posed-world}).

Now, let us interpret \eqref{eq:i-like-this-1} geometrically.  Rewrite it \nolinebreak as
\begin{equation} \label{eq:they-pass-thru}
\begin{pmatrix} 
\Gamma_i \\
1
\end{pmatrix}^{\!\! \top} 
Q(\dot{b}) \begin{pmatrix} 
\Gamma_i \\
1
\end{pmatrix} = 0 
\end{equation}
for $i = 1, \ldots, 7$, where 
$Q(\dot{b}) := \dot{b}_1 Q_1 + \ldots + \dot{b}_7 Q_7$ and
\begin{footnotesize}
\begin{align}
&Q_1 = \begin{pmatrix} 0 & -b_4 & 0 & 0 \\ -b_4 & -2 b_5 & -b_6 & 0 \\ 0 & -b_6 & 0 & 0 \\ 0 & 0 & 0 & 0  \end{pmatrix}, \hspace{1em} 
 Q_2 = \begin{pmatrix} 0 & 0 & -b_4 &  0 \\ 0 &  0 &  -b_5 & 0 \\  -b_4 & -b_5 &  -2b_6 &  0 \\ 0 & 0 &  0 & 0 \end{pmatrix} \nonumber \\[4pt]
&Q_3 = \begin{pmatrix} 0 & 0 & 0 & -b_4 \\ 0 &  0 &  0 & -b_5 \\ 0 &  0 &  0 & -b_6 \\ -b_4 & -b_5 & -b_6 & 0 \end{pmatrix}, \hspace{1em} 
 Q_4 = \begin{pmatrix} 2 & b_1 & b_2 & 0 \\ b_1 & 0 & 0 & 0 \\ b_2 & 0 & 0 & 0 \\ 0 & 0 & 0 & 0 \end{pmatrix} \nonumber \\[4pt]
&Q_5 = \begin{pmatrix} 
0 & 1 & 0 & 0 \\ 1 & 2b_1 & b_2 & 0 \\ 0 & b_2 & 0 & 0 \\ 0 & 0 & 0 & 0
\end{pmatrix},
\hspace{1em} Q_6 = \begin{pmatrix}
0 & 0 & 1 & 0 \\ 0 &  0 & b_1 &  0 \\ 1 & b_1 & 2b_2 & 0 \\ 0 & 0 & 0 & 0
\end{pmatrix} \nonumber \\[4pt]
& Q_7 = \begin{pmatrix}
0 & 0 & 0 & 1 \\ 0 & 0 & 0 & b1 \\ 0 & 0 &  0 & b2 \\ 1 & b1 & b2 & 0
\end{pmatrix}. \nonumber
\end{align}
\end{footnotesize}
\!\!\!\!\!\! It remains to prove that $Q_1, \ldots, Q_7$ form a basis for the subspace of $\operatorname{Sym}^2(\mathbb{R}^4)$ corresponding to all quadrics $\mathcal{Q}$ containing the baseline.  
Clearly this will imply Theorem~\mainref{thm:F-ill-posed-world}, because \eqref{eq:they-pass-thru} means that $\mathcal{Q}$ passes through $\Gamma_1, \ldots, \Gamma_7$.

Note that the subspace  of $\operatorname{Sym}^2(\mathbb{R}^4)$ in question has dimension $7$, since a quadric containing a given line imposes $3$ linearly independent conditions.
Further, by direct calculation (Cramer's rule) the baseline of the scene is 
\begin{equation}\label{eq:my-baseline-7}
\operatorname{span} \begin{pmatrix} 
b_2b_5 - b_1b_6 & -b_2b_4 + b_6 & b_1b_4 - b_5 
\end{pmatrix}^{\!\top} \,\, \subseteq \,\,\, \mathbb{R}^3.
\end{equation}
It's easy to check the quadrics corresponding to $Q_i$ contain \eqref{eq:my-baseline-7}.  
So we are done if $Q_1, \ldots, Q_7$ are linearly independent.

Assume $Q(\dot{b}) = 0$.  Comparing top-left entries, $\dot{b}_4 =0$.
Comparing bottom rows and rightmost columns,
\begin{equation}\label{eq:quick-eqn}
\begin{pmatrix} 1 & b_1 & b_2 \\ 
b_4 & b_5 & b_6
\end{pmatrix} \begin{pmatrix}
\dot{b}_7 \\[1pt] -\dot{b}_3
\end{pmatrix} = 0.
\end{equation}
It implies $\dot{b}_3 = \dot{b}_7 = 0$, because $\mathbf{M}(b)(1:2,1:3)$ must be rank-$2$ if $\mathbf{M}(b)$ is rank-$3$.
Next define 
\begin{equation}\label{eq:my-Qtilde}
\tilde{Q} := \dot{b}_1 Q_1 + \dot{b}_5 Q_5 = -\dot{b}_2 Q_2 - \dot{b}_6 Q_6.
\end{equation}
Given the supports of $Q_i$, $\tilde{Q} = \lambda (e_2 e_3^{\top} + e_3 e_2^{\top})$ for some $\lambda \in \mathbb{R}$.  
Further, \eqref{eq:my-Qtilde} can be reshaped to
\begin{equation}\label{eq:my-dotb15}
\begin{pmatrix} \dot{b}_5 & -\dot{b}_1 \end{pmatrix} \begin{pmatrix} 1 & 2b_1 & b_2 \\ b_4 & 2b_5 & b_6 \end{pmatrix} = \begin{pmatrix} 0 & 0 & \lambda \end{pmatrix}
\end{equation}
and
\begin{equation}\label{eq:my-dotb26}
\begin{pmatrix} \dot{b}_6 & -\dot{b}_2 \end{pmatrix} \begin{pmatrix} 1 & b_1 & 2b_2 \\ b_4 & b_5 & 2b_6 \end{pmatrix} = \begin{pmatrix} 0 & \lambda & 0 \end{pmatrix}.
\end{equation}
If $\lambda \neq 0$, we deduce that $\begin{pmatrix}1 & b_1 & b_2 \\ b_4 & b_5 & b_6 \end{pmatrix}$ has row-space $\operatorname{span}(e_2, e_3)$, which is absurd; hence $\lambda = 0$. 
Then similarly to \eqref{eq:quick-eqn}, \eqref{eq:my-dotb15} implies $\dot{b}_1 = \dot{b}_5 = 0$ and \eqref{eq:my-dotb26} implies $\dot{b}_2 = \dot{b}_6 = 0$.  We have shown that $Q(\dot{b}) = 0$ implies $\dot{b} = 0$ as wanted. 

The proof of Theorem~\mainref{thm:illposed-world} is complete.  We remark that a  similar proof shows the analog of Theorem~\mainref{thm:F-ill-posed-world}, when the world scene lives in a different chart of Lemma~\mainref{lem:b-coords}.  \hfill $\square$

.

\section{Proof of Theorem~\mainref{thm:image-E}}

Given the sketch in the main paper, it remains only to verify the property from Proposition~\mainref{prop:world-lifting}.  

Let $x = ((\gamma_1, \bar{\gamma}_1), \ldots, (\gamma_5, \bar{\gamma}_5)) \in \mathcal{X} \setminus \operatorname{disc}(\mathscr{M}, \mathbf{L})$, $E \in \mathcal{Y} \cap \operatorname{ker} \mathbf{L}(x)$, and $w = ((I \,\, 0), (\mathbf{R} \,\, \hat{\mathbf{T}}), \Gamma_1, \ldots, \Gamma_5) \in \Psi^{-1}(E) \cap \Phi^{-1}(x)$.  
In terms of $E$, there are four possibilities for $(\mathbf{R}, \hat{\mathbf{T}}) \in \operatorname{SO}(3) \times  \mathbb{S}^2$, see \cite[Res.~9.19]{hartleyzisserman}.   (Only two are distinct up to the action of $G$, see the comment after Lemma~\mainref{lem:double}.) 
Further, the explicit formulas for $(\mathbf{R}, \hat{\mathbf{T}})$ in \cite[Res.~9.19]{hartleyzisserman} are smooth functions of $E$'s SVD factors.  
Shrinking $\mathcal{X}_0$ and $\mathcal{Y}_0$ if necessary, $E$'s SVD factors can be chosen as smooth functions of the essential matrix, even though $E$ has repeated singular values.  Indeed, it is possible because 
\begin{equation*}
\operatorname{SO}(3) \times \operatorname{SO}(3) \rightarrow \mathcal{Y}, \,\,\,\, (\mathbf{R}_1, \mathbf{R}_2) \mapsto \mathbf{R}_1 \operatorname{diag}(1,1,0) \mathbf{R}_2^{\top}
\end{equation*}
is an $\operatorname{SO}(2) \times (\mathbb{Z}/ 2 \mathbb{Z})^{\times 2}$-smooth bundle map, see  \cite{tron2017space}.  
It follows that $\mathbf{R}$ and $\hat{\mathbf{T}}$ locally extend to smooth functions of $E$, thus to smooth functions of the image data $x$ through $f$ from Lemma~\mainref{lem:local-disc}.  Precisely, there exists a smooth function $h_1 : \mathcal{X}_0 \rightarrow \operatorname{SO}(3) \times \mathbb{S}^2$ such that $h_1(x) = (\mathbf{R}, \hat{\mathbf{T}})$ and $\big{(} (\mathbf{R}', \hat{\mathbf{T}}') \mapsto [\hat{\mathbf{T}}']_{\times} \mathbf{R}' \big{)} \circ h_1 = f$.

As for the world points, because $\Phi(w) = x$, they satisfy
\begin{equation}\label{eq:for-worldpoints}
\begin{cases}
(I \,\, 0) (\Gamma_i; \, 1) \,= \, ( \gamma_i; \, 1 ) \\[0.3em]
(\mathbf{R} \,\, \hat{\mathbf{T}}) (\Gamma_i; \, 1) \, = \, ( \bar{\gamma}_i; \, 1 ).
\end{cases}
\end{equation}
Further $\Gamma_i$ are uniquely determined by \eqref{eq:for-worldpoints}, \textbf{provided} the backprojections
\begin{equation}\label{eq:world-lines}
(I \,\, 0)^{-1} (\gamma_i; \, 1) \subseteq \mathbb{P}_{\mathbb{R}}^3 \,\,\,\,\,\, \text{ and } \,\,\,\,\, (\mathbf{R} \,\, \hat{\mathbf{T}})^{-1} (\bar{\gamma}_i; \, 1) \subseteq \mathbb{P}_{\mathbb{R}}^3
\end{equation}
are distinct lines.  
In this case, $\Gamma_i$ locally extend to smooth functions of the image data $x$.  
Indeed, the world points are determined by \eqref{eq:for-worldpoints}, where $(\gamma_i, \bar{\gamma_i})$ are replaced by $(\gamma_i', \bar{\gamma}_i')$ varying in $\mathcal{X}_0$ and $(\mathbf{R} \,\, \hat{\mathbf{T}})$ is replaced by $h_1((\gamma_1', \bar{\gamma}_1'), \ldots, (\gamma_5', \bar{\gamma}_5'))$.
More precisely, after possibly shrinking $\mathcal{X}_0$ again, there is a smooth map $h_2 : \mathcal{X}_0 \rightarrow (\mathbb{R}^3)^{\times 5}$ such that $h(x') := ((I \,\, 0), h_1(x'), h_2(x')) \in \mathcal{W}$ for $x' \in \mathcal{X}_0$ satisfies the requirements in the smooth lifting property.

To finish, we just need to argue that \eqref{eq:world-lines} are distinct lines.
For a contradiction, suppose not.  
Then both lines are the baseline.  
So $\gamma_i$ and $\bar{\gamma_i}$ are the epipoles, \ie 
$(\gamma_i; 1) = \mathbf{R}^{\top} \hat{\mathbf{T}}$  and  $(\bar{\gamma}_i; 1) = \hat{\mathbf{T}}$  in $\mathbb{P}_{\mathbb{R}}^2$.  
But the corresponding epipolar constraint is now orthogonal to  $T(\mathcal{Y}, E)$.  This is because
\begin{equation*}
\hat{\mathbf{T}}^{\top} \big{(} [\hat{\mathbf{T}}^{\perp}_i]_{\times} \mathbf{R} \big{)} \mathbf{R}^{\top}  \hat{\mathbf{T}}  \,\, = \,\, \hat{\mathbf{T}}^{\top} [\hat{\mathbf{T}}^{\perp}_i]_{\times} \hat{\mathbf{T}} \,\, = \,\, 0
\end{equation*}
for $i=1,2$, and 
\begin{equation*}
\hat{\mathbf{T}}^{\top} \big{(} [\hat{\mathbf{T}}]_{\times} [\dot{s}]_{\times} \mathbf{R} \big{)} \mathbf{R}^{\top} \hat{\mathbf{T}} \,\, = \,\,  \big{(} \hat{\mathbf{T}}^{\top} [\hat{\mathbf{T}}]_{\times} \big{)} [\dot{s}]_{\times} \mathbf{R} \mathbf{R}^{\top} \hat{\mathbf{T}} \,\, = \,\, 0
\end{equation*}
for all $\dot{s} \in \mathbb{R}^3$.
It implies $\mathcal{Y} \cap \operatorname{ker} \mathbf{L}(x)$ in $\mathbb{P}(\mathbb{R}^{3 \times 3})$ is not transverse at $E$, contradicting $ x \notin \operatorname{disc}(\mathscr{M}, \mathbf{L})$.  
\hfill $\square$

\section{Proof of Theorem~\mainref{thm:F-image}}
 
We only need to verify the smooth lifting property.

Let $x = ((\gamma_1, \bar{\gamma}_1), \ldots, (\gamma_7, \bar{\gamma}_7)) \in \mathcal{X} \setminus \operatorname{disc}(\mathscr{M}, \mathbf{L})$, $F \in \mathcal{Y} \cap \operatorname{ker} \mathbf{L}(x)$, and $w = ((I \,\, 0), \bar{P}, \tilde{\Gamma}_1, \ldots, \tilde{\Gamma}_7) \in \Psi^{-1}(F) \cap \Phi^{-1}(x)$.  
In terms of $F$, there is only one possibility for $\bar{P}$ up to the action of $\operatorname{PGL}(4)$, see \cite[Res.~9.14]{hartleyzisserman}.  
The explicit formula for $\bar{P}$ in \cite[Res.~9.14]{hartleyzisserman} is a smooth function of $F$, since the epipole in $\mathbb{P}_{\mathbb{R}}^2$ is a smooth function of $F \in \mathcal{Y}$.
So $\bar{P}$ extends to a smooth function of $F$, thus to a smooth function of the image data $x$ as wanted.

The argument for the world points $\tilde{\Gamma}_i$ is similar to the one for Theorem~\mainref{thm:image-E}.  
Again we need to check  
$(I \,\, 0)^{-1} (\gamma_i; \, 1)$ and  $\bar{P}^{-1} (\bar{\gamma}_i; \, 1)$  are distinct lines.
Supposing not, both are the baseline.  
Then $(\gamma_i; 1)$ and  $(\bar{\gamma}_i; 1)$ are the epipoles. 
Their outer product is orthogonal to $T(\mathcal{Y}, F)$ by Lemma~\ref{lem:fund-tangent-char} below.  
Thus $\mathcal{Y} \cap \operatorname{ker} \mathbf{L}(x) \subseteq \mathbb{P}(\mathbb{R}^{3 \times 3})$ is not transverse at $F$, contradicting $x \notin \operatorname{disc}(\mathscr{M}, \mathbf{L})$.  \hfill $\square$

\section{Proof of Proposition~\mainref{prop:crit-ill}}

Consider the uncalibrated case.  Let $w'$ be represented $(P, \bar{P}, \tilde{\Gamma}_1, \ldots, \tilde{\Gamma}_N) \in \mathcal{C}^{\times 2} \times (\mathbb{P}_{\mathbb{R}}^3)^{\times N}$ where $N \geq 7$.  
As explained in the main paper, it suffices to show that there exists a common quadric $\mathcal{Q}$ satisfying the conditions of Theorem~\mainref{thm:F-ill-posed-world} for each minimal subscene $w$ of $w'$.  

Let ${Q} \in S^2(\mathbb{R}^4)$ be a nonzero $4 \times 4$ real symmetric matrix corresponding to the quadric surface $\mathcal{Q}$ in $\mathbb{P}_{\mathbb{R}}^3$.
For different choices of $w$, the second bullet in Theorem~\mainref{thm:F-ill-posed-world} is the same.
The set of ${Q}$'s satisfying the condition is a codimension-$3$ linear subspace of $S^2(\mathbb{R}^4)$.
Parameterize it as
\begin{equation*}
\{{Q}(\alpha) = \alpha_1 {Q}_1 + \ldots + \alpha_7 {Q}_7 : \alpha \in \mathbb{R}^7 \}
\end{equation*}
where ${Q}_i$ form a fixed basis.

For a minimal subscene $w$, the first bullet in Theorem~\mainref{thm:illposed-world} reads:
\begin{equation}\label{eq:one-w}
\tilde{\Gamma}_i^{\top}  {Q}(\alpha) \, \tilde{\Gamma}_i  \, = \,0  \,\,\,\, \text{ for each } \tilde{\Gamma}_i \text{ in } w,
\end{equation}
By assumption, \eqref{eq:one-w} has a nonzero solution in $\alpha$.
Varying $w$, it means that if we consider the $N \times 7$ linear system:
\begin{equation}\label{eq:all-w}
{M}  \alpha \, =  \, 0, \quad \quad
\text{where } M_{ij} := \tilde{\Gamma}_i^{\top} {Q}_j \tilde{\Gamma}_i,
\end{equation}
then each $7 \times 7$ subsystem has a nonzero solution.
We deduce that ${M} \in \mathbb{R}^{N \times 7}$ has rank at most $6$, so \eqref{eq:all-w} has a nonzero solution.
The corresponding quadric surface does the job. 

The calibrated case in Proposition~\mainref{prop:crit-ill} is nearly same. 
The differences are that second and third bullets in Theorem~\mainref{thm:illposed-world} define a codimension-$5$ subspace of $S^2(\mathbb{R}^4)$, and \eqref{eq:all-w} is now an $N \times 5$ linear system of rank at most $4$. 
\hfill $\square$

\section{Background on Manifolds}\label{sec:background-mflds}

We list the tangent spaces and (where relevant) Riemannian metrics for the manifolds used in this paper.  
Recall that in our framework for the condition number, the input and output spaces need to be Riemannian manifolds, while for the world scene space only the smooth structure is used.

\subsection{Image data spaces}

\noindent Consider $\mathcal{X} = (\mathbb{R}^2 \times \mathbb{R}^2)^{\times n}$ where $n=5$ or $7$.  Then $\mathcal{X} \cong \mathbb{R}^{4n}$.  The tangent spaces are all canonically isomorphic to $\mathbb{R}^{4n}$.  The inner products on the tangent spaces are all the Euclidean inner product.

\subsection{World scene space for $5$-point problem}

\noindent Consider $\mathcal{W}$ from Proposition~\mainref{prop:quotient-ess}.
By Lemma~\mainref{lem:double}, an open dense subset of $\operatorname{SO}({3}) \times \, \mathbb{S}^2 \times (\mathbb{R}^{3})^{\times 5}$ is a smooth double cover of $\mathcal{W}$.  
It induces isomorphisms on tangent spaces:
\begin{align} \label{eq:tangent-worldE}
T(\mathcal{W}, ((I \,\, 0), (\mathbf{R} \,\, \hat{\mathbf{T}} ), & \Gamma_1, \ldots, \Gamma_5))  \cong  \nonumber \\ &  T(\operatorname{SO}(3), \mathbf{R}) \times T(\mathbb{S}^2, \hat{\mathbf{T}}) \times (\mathbb{R}^{3})^{\times 5}.
\end{align}
 The tangent spaces to $\operatorname{SO}(3)$ are 
\begin{equation}\label{eq:tangent-SO}
T(\operatorname{SO}(3), \mathbf{R}) = \{ [\dot{s}]_{\times} \mathbf{R} : \dot{s} \in \mathbb{R}^3 \},
\end{equation}
where $[\dot{s}]_{\times} := \begin{pmatrix} 0 & -\dot{s}_3 & \dot{s}_2 \\ \dot{s}_3 & 0 & -\dot{s}_1 \\ -\dot{s}_2 & \dot{s}_1 & 0 \end{pmatrix}$ for $\dot{s} = \begin{pmatrix} \dot{s}_1 \\ \dot{s}_2 \\ \dot{s}_3 \end{pmatrix}$.  
A basis for $T(\operatorname{SO}(3), \mathbf{R})$ is
    $[e_1]_{\times} \R , \,\, [e_2]_{\times} \R , \,\,  [e_3]_{\times} \R,$
where $e_1, e_2, e_3$ is the standard basis on $\mathbb{R}^3$.
The tangent spaces to $\mathbb{S}^2$ are
\begin{equation}\label{eq:sphere-tangent}
    T(\mathbb{S}^2, \hat{\mathbf{T}}) = \{ \dot{\mathbf{T}} \in \mathbb{R}^3 : \langle \dot{\mathbf{T}}, \hat{\mathbf{T}}  \rangle = 0\}.
\end{equation}
We let $\hat{\mathbf{T}}^{\perp}_1, \, \hat{\mathbf{T}}^{\perp}_2$
denote an orthonormal basis of $T(\mathbb{S}^2, \hat{\mathbf{T}})$.

\subsection{World scene space for $7$-point problem}

\noindent Consider $\mathcal{W}$ from Proposition~\mainref{prop:quotient-smooth}.
Lemma~\mainref{lem:b-coords} describes an atlas for $\mathcal{W}$.  
The parameterizations there induce isomorphisms on tangent spaces:
\begin{equation}
T(\mathcal{W}, ((I \,\, 0), \mathbf{M}(b), \tilde{\Gamma}_1, \ldots, \tilde{\Gamma}_7))  \cong 
\mathbb{R}^7 \times (\mathbb{R}^{3})^{\times 7}.
\end{equation}

\subsection{Projective space}\label{subsec:projsp}

\noindent Consider $\mathbb{P}(\mathbb{R}^{3 \times 3})$, the projective space of real $3 \times 3$ matrices, in which essential and fundamental matrices live.
The map:
\begin{equation*} \label{eq:Riem-map}
    \mathbb{S}^{8} = \{M \in \mathbb{R}^{3 \times 3} : \| M \|_F = 1 \} \rightarrow \mathbb{P}(\mathbb{R}^{3 \times 3}), \,\, M \mapsto M,
\end{equation*}
sending a unit-Frobenius norm matrix to its projective point 
expresses $\mathbb{P}(\mathbb{R}^{3 \times 3})$ as a quotient of $\mathbb{S}^8$ by  $\mathbb{Z}/2\mathbb{Z}$. 
Here $\mathbb{Z}/2\mathbb{Z}$ acts by sending $M$ to $-M$, which is an isometry. 
So 
 $\mathbb{P}(\mathbb{R}^{3 \times 3})$ inherits a Riemannian structure such that the quotient map is a local isometry \cite[Exam.~2.34 and Prop.~2.32]{lee2006riemannian}. Therefore,
 \begin{align} \label{eq:proj-space}
T(\mathbb{P}(\mathbb{R}^{3 \times 3}), M)   &\cong  T(\mathbb{S}^8, M/\|M\|_F) \nonumber \\ &= \{\dot{M} \in \mathbb{R}^{3 \times 3} : \langle \dot{M}, M \rangle = 0\},
\end{align}
with inner product the restriction of the Frobenius inner product.  An analogous Riemannian structure exists on other projective spaces, such as $\mathbb{P}(\mathbb{R}^{3 \times 4})$ where cameras live.

\subsection{Output space for $5$-point problem}

\noindent Consider $\mathcal{Y} = \mathcal{E} \subseteq \mathbb{P}(\mathbb{R}^{3 \times 3})$, the manifold of real essential matrices from (\mainref{eq:essential-ideal}).  
It is known that $\mathcal{E}$ is a compact Riemannian manifold of dimension $5$. 
Its Riemannian metric is inherited from $\mathbb{P}(\mathbb{R}^{3 \times 3})$, \ie the inner products on tangent spaces to $\mathcal{E}$ are restrictions of the Frobenius inner product on \eqref{eq:proj-space}. 
Its tangent spaces are characterized by the next lemma.

\begin{lemma} \label{lem:E-submer}
The parameterization
\begin{equation*}
\operatorname{SO}(3) \times \, \mathbb{S}^2 \rightarrow \mathcal{E}, \,\,\, (\R,\hat{\mathbf{T}}) \mapsto [\hat{\mathbf{T}}]_{\times}\R 
\end{equation*}
is submersion, \ie its differential has rank $5$ everywhere.  
\end{lemma}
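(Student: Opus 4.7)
\noindent The plan is to compute $Df(\mathbf{R},\hat{\mathbf{T}})$ explicitly and show that its kernel is trivial.  Both the source $\operatorname{SO}(3)\times\mathbb{S}^2$ and the target $\mathcal{E}\subseteq\mathbb{P}(\mathbb{R}^{3\times 3})$ have dimension $5$, so triviality of the kernel is equivalent to surjectivity at every point, which gives the submersion property.

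I will first parameterize tangent vectors at $(\mathbf{R},\hat{\mathbf{T}})$ in the form $([\dot{s}]_{\times}\mathbf{R},\dot{\mathbf{T}})$ with $\dot{s}\in\mathbb{R}^3$ and $\dot{\mathbf{T}}\in\mathbb{R}^3$ subject to $\langle \dot{\mathbf{T}},\hat{\mathbf{T}}\rangle = 0$, using \eqref{eq:tangent-SO} and \eqref{eq:sphere-tangent}.  Recall from Section~\ref{subsec:projsp} that the tangent space to $\mathbb{P}(\mathbb{R}^{3\times 3})$ at $[M]$ is identified with $M^{\perp}$, so after composing with the projective quotient the kernel condition reads
\begin{equation*}
\bigl([\dot{\mathbf{T}}]_{\times} + [\hat{\mathbf{T}}]_{\times}[\dot{s}]_{\times}\bigr)\mathbf{R} \,=\, \lambda \, [\hat{\mathbf{T}}]_{\times}\mathbf{R}
\end{equation*}
for some $\lambda\in\mathbb{R}$.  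Right-multiplying by $\mathbf{R}^{\top}$ and applying the standard identity $[\mathbf{a}]_{\times}[\mathbf{b}]_{\times} = \mathbf{b}\mathbf{a}^{\top} - \langle \mathbf{a},\mathbf{b}\rangle I$ rewrites this as
\begin{equation*}
[\dot{\mathbf{T}} - \lambda\hat{\mathbf{T}}]_{\times} \,=\, \langle \hat{\mathbf{T}},\dot{s}\rangle I - \dot{s}\hat{\mathbf{T}}^{\top}.
\end{equation*}

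The key algebraic step is to exploit the skew-symmetry of the left-hand side.  Taking traces gives $2\langle \hat{\mathbf{T}},\dot{s}\rangle = 0$, so the equation reduces to $[\dot{\mathbf{T}}-\lambda\hat{\mathbf{T}}]_{\times} = -\dot{s}\hat{\mathbf{T}}^{\top}$.  Since a nonzero real skew-symmetric $3\times 3$ matrix has rank exactly $2$ while $\dot{s}\hat{\mathbf{T}}^{\top}$ has rank at most $1$, both sides must vanish; this forces $\dot{s}=0$ and $\dot{\mathbf{T}}=\lambda\hat{\mathbf{T}}$.  Combined with $\dot{\mathbf{T}}\perp\hat{\mathbf{T}}$, this yields $\lambda=0$ and hence $\dot{\mathbf{T}}=0$.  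The kernel is therefore trivial, and by the dimension count $Df$ has rank $5$ everywhere.

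The main obstacle, such as there is one, is simply the careful bookkeeping when passing to the projective quotient and the handling of the skew-symmetric matrix identities; there is no deeper geometric difficulty.  A fully self-contained alternative endgame, avoiding the rank statement for skew-symmetric matrices, is to square both sides of $\dot{s}\hat{\mathbf{T}}^{\top} = -\hat{\mathbf{T}}\dot{s}^{\top}$ (which follows from skew-symmetry) and use $\langle \hat{\mathbf{T}},\dot{s}\rangle = 0$ to obtain $\|\dot{s}\|^2\,\hat{\mathbf{T}}\hat{\mathbf{T}}^{\top}=0$, hence $\dot{s}=0$ directly.
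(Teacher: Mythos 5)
Your proof is correct and takes essentially the same route as the paper: compute $Df$, exploit cross-product identities, and use the fact that a nonzero $3\times 3$ skew-symmetric matrix has rank exactly $2$. The micro-steps differ: the paper first isolates $\dot{\mathbf{T}}$ by left-multiplying by $\hat{\mathbf{T}}^{\top}$ (which annihilates $[\hat{\mathbf{T}}]_{\times}$ and directly gives $\dot{\mathbf{T}}=0$) and only then applies a rank argument to $[\hat{\mathbf{T}}]_{\times}[\dot{s}]_{\times}=0$, whereas you merge everything into the single equation $[\dot{\mathbf{T}}-\lambda\hat{\mathbf{T}}]_{\times}=-\dot{s}\hat{\mathbf{T}}^{\top}$ and kill both unknowns with one rank comparison. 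One thing your write-up does \emph{more} carefully than the paper is carry the scalar $\lambda$ from the projective quotient: the paper simply sets the lifted differential to zero, which is legitimate but silently uses the fact that $[\dot{\mathbf{T}}]_{\times}\mathbf{R}+[\hat{\mathbf{T}}]_{\times}[\dot{s}]_{\times}\mathbf{R}$ is already Frobenius-orthogonal to $[\hat{\mathbf{T}}]_{\times}\mathbf{R}$ whenever $\dot{\mathbf{T}}\perp\hat{\mathbf{T}}$; you instead eliminate $\lambda$ at the end via the constraint $\langle\dot{\mathbf{T}},\hat{\mathbf{T}}\rangle=0$, which is cleaner to read. Your alternative ``squaring'' endgame also checks out: from $\dot{s}\hat{\mathbf{T}}^{\top}=-\hat{\mathbf{T}}\dot{s}^{\top}$, multiplying $\hat{\mathbf{T}}\dot{s}^{\top}$ against each side and using $\langle\hat{\mathbf{T}},\dot{s}\rangle=0$ gives $\|\dot{s}\|^{2}\hat{\mathbf{T}}\hat{\mathbf{T}}^{\top}=0$, hence $\dot{s}=0$.
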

\begin{proof}
At $(\mathbf{R}, \hat{\mathbf{T}})$, the differential sends $([\dot{s}]_{\times} \mathbf{R}, \dot{\mathbf{T}})$ to 
\begin{equation*}\label{eq:ess-diff}
[\dot{\mathbf{T}}]_{\times} \mathbf{R} + [\hat{\mathbf{T}}]_{\times} [\dot{s}]_{\times} \mathbf{R}.
\end{equation*}
Assume this equals zero.  
Left-multiplying by $\hat{\mathbf{T}}$ and using \eqref{eq:sphere-tangent}, it follows $\dot{\mathbf{T}} = 0$.  Right-multiplying by $\mathbf{R}^{\top}$, we get
\begin{equation*}
[\hat{\mathbf{T}}]_{\times} [\dot{s}]_{\times} = 0.
\end{equation*}
Since $[\hat{\mathbf{T}}]_{\times}$ has rank $2$, $[\dot{s}]_{\times}$ has rank at most $1$.  
It follows that $\dot{s}=0$, and so 
the differential is injective.
\end{proof}

\medskip

It follows that the tangent spaces to $\mathcal{E}$ are 
\begin{multline}
T(\mathcal{E}, [\hat{\mathbf{T}}]_{\times} \mathbf{R}) = \{ [\dot{\mathbf{T}}]_{\times} \mathbf{R} + [\hat{\mathbf{T}}]_{\times} [\dot{s}]_{\times} \mathbf{R} : \\ \dot{\mathbf{T}} \in \mathbb{R}^3, \langle \dot{\mathbf{T}}, \hat{\mathbf{T}} \rangle = 0, \, \dot{s} \in \mathbb{R}^3  \}.
\end{multline}
A non-orthonormal basis for $T(\mathcal{E}, [\hat{\mathbf{T}}]_{\times} \mathbf{R})$ is 
\begin{equation} \label{eq:non-orthonormal}
[\hat{\mathbf{T}}]_{\times} [e_1]_{\times} \mathbf{R}, \,
[\hat{\mathbf{T}}]_{\times} [e_2]_{\times} \mathbf{R}, \,
[\hat{\mathbf{T}}]_{\times} [e_3]_{\times} \mathbf{R},
[\hat{\mathbf{T}}^{\perp}_1]_{\times} \mathbf{R}, \,
[\hat{\mathbf{T}}^{\perp}_2]_{\times} \mathbf{R}.
\end{equation}
Orthonormalizing this gives an orthonormal basis.  
Specifically, we pass between bases using a linear algebra fact:
\begin{lemma}\label{lem:linear-alg}
Let ${X} \in \mathbb{R}^{n \times d}$ have linearly independent columns, and ${z} = {X} \alpha$  for $\alpha \in \mathbb{R}^d$. 
Define ${G} = {X}^{\top} {X} \in \mathbb{R}^{d \times d}$ as the Gram matrix.
Then ${Y} = {X} {G}^{-1/2} \in \mathbb{R}^{n \times d}$ has orthonormal columns which form a basis for the column space of ${X}$, and ${z} = {Y} \beta$ for $\beta = {G}^{1/2} \alpha \in \mathbb{R}^d$.
\end{lemma}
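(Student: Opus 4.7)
The plan is to verify the three claims (orthonormality of the columns of $Y$, agreement of column spaces, and the coordinate change formula) via direct computation, after first justifying that $G^{1/2}$ and $G^{-1/2}$ are well-defined symmetric matrices.

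First I would observe that since $X \in \mathbb{R}^{n \times d}$ has linearly independent columns, the Gram matrix $G = X^{\top} X$ is symmetric positive definite. Hence by the spectral theorem applied to $G$, there exist unique symmetric positive definite matrices $G^{1/2}$ and $G^{-1/2} = (G^{1/2})^{-1}$ with $(G^{1/2})^2 = G$ and $G^{-1/2} G^{1/2} = I$. This will be invoked repeatedly, and it is the only non-elementary ingredient.

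Next, I would compute $Y^{\top} Y$ directly, using symmetry of $G^{-1/2}$:
\begin{equation*}
Y^{\top} Y \,=\, (X G^{-1/2})^{\top} (X G^{-1/2}) \,=\, G^{-1/2} (X^{\top} X) G^{-1/2} \,=\, G^{-1/2} G \, G^{-1/2} \,=\, I,
\end{equation*}
which establishes that the columns of $Y$ are orthonormal. For the span, since $G^{-1/2}$ is invertible, right-multiplication by $G^{-1/2}$ is a bijection on $\mathbb{R}^d$, so $\operatorname{col}(Y) = \operatorname{col}(X G^{-1/2}) = \operatorname{col}(X)$. Because the columns are orthonormal (hence linearly independent) and number $d$, they form a basis.

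Finally, for the coordinate change, I would just substitute:
\begin{equation*}
Y \beta \,=\, (X G^{-1/2})(G^{1/2} \alpha) \,=\, X (G^{-1/2} G^{1/2}) \alpha \,=\, X \alpha \,=\, z.
\end{equation*}
There is no real obstacle; the proof is routine linear algebra. The only point requiring a sentence of care is the existence and uniqueness of $G^{1/2}$ as a symmetric positive definite square root, which is a standard consequence of the spectral theorem.
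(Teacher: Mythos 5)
Your proof is correct, but it takes a different computational route than the paper. The paper's argument proceeds via the thin singular value decomposition $X = U \Sigma V^{\top}$: from this it reads off $G = V \Sigma^2 V^{\top}$, hence $G^{-1/2} = V \Sigma^{-1} V^{\top}$, and then concludes $Y = X G^{-1/2} = U V^{\top}$, which manifestly has orthonormal columns spanning $\operatorname{col}(X) = \operatorname{col}(U)$. You instead bypass the SVD of $X$ entirely, invoking only the spectral theorem for the positive definite matrix $G$ to obtain the symmetric square root, and then verify $Y^{\top} Y = G^{-1/2} G \, G^{-1/2} = I$ by a one-line computation. Both are sound. Your route is slightly more economical (no auxiliary factorization of $X$, just of $G$) and makes the orthonormality check completely mechanical. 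The paper's route has the minor advantage of identifying $Y$ explicitly as $U V^{\top}$, \ie the orthogonal polar factor of $X$, which gives some geometric insight, at the cost of introducing the SVD factors. The coordinate-change step $Y\beta = X G^{-1/2} G^{1/2} \alpha = z$ is identical in spirit in both and is the part the paper dismisses as ``immediate.''
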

\begin{proof}
Let ${X} = U \Sigma V^{\top}$ be a thin SVD.  
Then ${G}= V \Sigma^2 V^{\top}$,  $({X}^{\top} {X})^{-1/2} = V \Sigma^{-1} V^{\top}$ and ${Y} = U V^{\top}$.  
Clearly $Y$ has orthonormal columns which form a basis for the column space of ${X}$.  
The statement involving $\beta$ is also immediate.
\end{proof}

\medskip

Consequently if $\alpha \in \mathbb{R}^5$ represents an element of 
$T(\mathcal{E}, [\hat{\mathbf{T}}]_{\times} \mathbf{R})$ 
with respect to \eqref{eq:non-orthonormal}, then 
$\beta = {G}^{1/2} \alpha \in \mathbb{R}^{5}$
represents the same element with respect to an orthonormal basis.  Here $G$ is the Gram matrix associated to \eqref{eq:non-orthonormal}, explicitly shown in Fig.~\ref{fig:joe-k}.

\begin{figure*}[h] 
\normalsize
\begin{equation*}\label{eqn:jac-1}
D\Phi_{1}(w) \,\, = \,\,
  \begin{pmatrix}
  & &  & \rvline & & & & & \\
  & &  & \rvline & & & & & \\
  &  \bigeye_{15 \times 15} & &  \rvline & \hspace{8em} \bigzero_{15 \times 5} & & & 
  & \\ & & &  \rvline & & & & & \\[0.7em]
\hline 
& & & \rvline &&& \\[-0.5em]
\R & & & \rvline & 
   \begin{matrix} [{e}_{1}]_{\times} \R \Gamma_{1} & [{ e}_{2}]_{\times} \R\Gamma_{1} & [{ e}_{3}]_{\times}\R \Gamma_{1} \end{matrix} & \hat{\mathbf{T}}_1^{\perp} & \hat{\mathbf{T}}_2^{\perp} \\[1em]
    & \ddots & & \rvline 
  &\vdots & \vdots  & \vdots \\[1em]
  & & \R & \rvline &
 \begin{matrix} [{e}_{1}]_{\times} \R\Gamma_{5} & [{ e}_{2}]_{\times} \R\Gamma_{5} & [{ e}_{3}]_{\times} \R\Gamma_{5} \end{matrix} & \hat{\mathbf{T}}_1^{\perp} & \hat{\mathbf{T}}_2^{\perp} 
\end{pmatrix}
\end{equation*} 
\vspace*{4pt}
\caption{Jacobian matrix for the differential of (\mainref{eq:E-Phi1-main}) at $w = (\mathbf{R}, \hat{\mathbf{T}}, \Gamma_1, \ldots, \Gamma_5)$.  This is used to compute the condition number of the $5$-point problem.}
\label{fig:my-jac1-joe}
\end{figure*}

\begin{figure*}[h]
\begin{equation*} \label{eq:my-crazy-gram}
\begin{small}
G = \begin{pmatrix}
\hat{\mathbf{T}}_1^2 + 1 & \hat{\mathbf{T}}_1 \hat{\mathbf{T}}_2 & \hat{\mathbf{T}}_1 \hat{\mathbf{T}}_3 & \hat{\mathbf{T}}_3 (\hat{\mathbf{T}}^{\perp}_1)_2 - \hat{\mathbf{T}}_2 (\hat{\mathbf{T}}^{\perp}_1)_3 & \hat{\mathbf{T}}_3 (\hat{\mathbf{T}}^{\perp}_2)_2 - \hat{\mathbf{T}}_2 (\hat{\mathbf{T}}^{\perp}_2)_3 \\[2pt] 
\hat{\mathbf{T}}_1 \hat{\mathbf{T}}_2 & \hat{\mathbf{T}}_2^2+1 & \hat{\mathbf{T}}_2 \hat{\mathbf{T}}_3 & -\hat{\mathbf{T}}_3 (\hat{\mathbf{T}}^{\perp}_1)_1+\hat{\mathbf{T}}_1 (\hat{\mathbf{T}}^{\perp}_1)_3 & -\hat{\mathbf{T}}_3 (\hat{\mathbf{T}}^{\perp}_2)_1+\hat{\mathbf{T}}_1 (\hat{\mathbf{T}}^{\perp}_2)_3 \\[2pt] 
\hat{\mathbf{T}}_1 \hat{\mathbf{T}}_3 & \hat{\mathbf{T}}_2 \hat{\mathbf{T}}_3 & \hat{\mathbf{T}}_3^2 + 1 & \hat{\mathbf{T}}_2 (\hat{\mathbf{T}}^{\perp}_1)_1-\hat{\mathbf{T}}_1 (\hat{\mathbf{T}}^{\perp}_1)_2 & \hat{\mathbf{T}}_2 (\hat{\mathbf{T}}^{\perp}_2)_1-\hat{\mathbf{T}}_1 (\hat{\mathbf{T}}^{\perp}_2)_2 \\[2pt] 
\hat{\mathbf{T}}_3 (\hat{\mathbf{T}}^{\perp}_1)_2-\hat{\mathbf{T}}_2 (\hat{\mathbf{T}}^{\perp}_1)_3 & -\hat{\mathbf{T}}_3 (\hat{\mathbf{T}}^{\perp}_1)_1+\hat{\mathbf{T}}_1 (\hat{\mathbf{T}}^{\perp}_1)_3 & \hat{\mathbf{T}}_2 (\hat{\mathbf{T}}^{\perp}_1)_1-\hat{\mathbf{T}}_1 (\hat{\mathbf{T}}^{\perp}_1)_2 & 2 & 0 \\[2pt] 
\hat{\mathbf{T}}_3 (\hat{\mathbf{T}}^{\perp}_2)_2-\hat{\mathbf{T}}_2 (\hat{\mathbf{T}}^{\perp}_2)_3 & -\hat{\mathbf{T}}_3 (\hat{\mathbf{T}}^{\perp}_2)_1+\hat{\mathbf{T}}_1 (\hat{\mathbf{T}}^{\perp}_2)_3 & \hat{\mathbf{T}}_2 (\hat{\mathbf{T}}^{\perp}_2)_1-\hat{\mathbf{T}}_1 (\hat{\mathbf{T}}^{\perp}_2)_2 & 0 & 2
\end{pmatrix}
\end{small}
\end{equation*}
\caption{Gram matrix for basis \eqref{eq:non-orthonormal} of the tangent space to essential matrices.  This is used to compute the condition number for the $5$-point problem.}
\label{fig:joe-k}
\end{figure*}

\begin{figure*}[h] 
\normalsize 
\begin{equation*}
\label{eq:large-displaye}
D\Phi_{1}(w)  = 
  \begin{pmatrix}
  & &  & \rvline & & & & & \\
  & &  & \rvline & & & & & \\
  &  \bigeye_{21 \times 21} & &  \rvline & \hspace{8em} \bigzero_{21 \times 7} & & & 
   \\ & & &  \rvline & & &  \\[0.7em]
\hline 
& & & \rvline &&& \\[-0.5em]
\mathbf{M}(b)(:,1:3) & & & \rvline & \frac{\partial \mathbf{M}(b)}{\partial b_1} \begin{pmatrix} \Gamma_1 \\ 1 \end{pmatrix} & \cdots & \frac{\partial \mathbf{M}(b)}{\partial b_7} \begin{pmatrix} \Gamma_1 \\ 1  \end{pmatrix} \\[1em]
    & \ddots & & \rvline 
  &\vdots & & \vdots \\[1em]
  & & \mathbf{M}(b)(:,1:3) & \rvline & \frac{\partial \mathbf{M}(b)}{\partial b_1} \begin{pmatrix} \Gamma_7 \\ 1  \end{pmatrix}  & \cdots & \frac{\partial \mathbf{M}(b)}{\partial b_7} \begin{pmatrix} \Gamma_7 \\ 1  \end{pmatrix} \\[1em]
\end{pmatrix}
\end{equation*} 
\caption{{Jacobian matrix for the differential of (\mainref{eq:Phi1-7pt}) at $w = (b, \Gamma_1, \ldots, \Gamma_7)$.  This is used to compute the condition number for the $7$-point problem.  Here $\mathbf{M}(b)$ is like in Lemma~\mainref{lem:b-coords}, regarded as in $\mathbb{R}^{3 \times 4}$, and $\mathbf{M}(b)(:,1:3)$ is the left $3 \times 3$ submatrix.  The relevant Gram matrix is given in \eqref{eq:gram-7pt}.}}
\label{eq:DPhi1-7pt-supp}
\vspace*{4pt}
\end{figure*}

\subsection{Output space for $7$-point problem}

\noindent Consider $\mathcal{Y} = \mathcal{F} \subseteq \mathbb{P}(\mathbb{R}^{3 \times 3})$, the manifold of real fundamental matrices.
It is known that $\mathcal{F}$ is a non-compact Riemannian manifold of dimension $7$. 
Its Riemannian metric is inherited from $\mathbb{P}(\mathbb{R}^{3 \times 3})$. 
Its tangent spaces are as follows. 

\begin{lemma}\label{lem:fund-tangent-char}
Let $F \in \mathcal{F} \subseteq \mathbb{P}(\mathbb{R}^{3 \times 3})$.
Using the identification \eqref{eq:proj-space}, the tangent space to $\mathcal{F}$ is 
\begin{equation}\label{eq:F-tangspace}
T(\mathcal{F}, F) = \{\dot{M} \in \mathbb{R}^{3 \times 3}: \langle \dot{M}, F \rangle = 0,  \, \mathbf{e}'^{\top} \dot{M} \mathbf{e}  = 0 \},
\end{equation}
where $\mathbf{e}$, $\mathbf{e}'$ are the right and left epipoles of $F$ respectively.
\end{lemma}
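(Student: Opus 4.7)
The plan is to realize $\mathcal{F}$ as the smooth locus of the hypersurface $\{\det = 0\}$ inside $\mathbb{P}(\mathbb{R}^{3\times 3})$ and then differentiate this one defining equation. First, recall from Section~\ref{subsec:projsp} that $T(\mathbb{P}(\mathbb{R}^{3\times 3}), F)$ is identified with the Frobenius-orthogonal complement of $F$; so the constraint $\langle \dot{M},F\rangle=0$ in \eqref{eq:F-tangspace} is simply inherited from the ambient projective tangent space, and it only remains to extract the additional constraint coming from $\det = 0$.

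The key algebraic observation is the classical cofactor identity
\begin{equation*}
D\!\det(F)(\dot{M}) \,=\, \operatorname{tr}\!\bigl(\operatorname{adj}(F)\,\dot{M}\bigr),
\end{equation*}
combined with the fact that for a rank-$2$ matrix $F \in \mathbb{R}^{3\times 3}$ we have $F\,\operatorname{adj}(F) = \operatorname{adj}(F)\,F = 0$. This forces the columns of $\operatorname{adj}(F)$ to lie in $\ker F = \mathbb{R}\mathbf{e}$ and the rows to lie in $\mathbb{R}\mathbf{e}'^{\top}$, so that $\operatorname{adj}(F) = c\,\mathbf{e}\mathbf{e}'^{\top}$ for some scalar $c$. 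Because $F$ has rank exactly $2$ rather than $\leq 1$, some $2\times 2$ minor of $F$ is nonzero and hence $c \neq 0$. Consequently
\begin{equation*}
D\!\det(F)(\dot{M}) \,=\, c\,\operatorname{tr}\!\bigl(\mathbf{e}\mathbf{e}'^{\top}\dot{M}\bigr) \,=\, c\,\mathbf{e}'^{\top}\dot{M}\,\mathbf{e},
\end{equation*}
and the vanishing of this linear form is precisely the second constraint in \eqref{eq:F-tangspace}.

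Finally, I would justify that these two linear forms cut out $T(\mathcal{F},F)$, not merely contain it, by a dimension count. The two forms $\dot{M}\mapsto \operatorname{tr}(F^{\top}\dot{M})$ and $\dot{M}\mapsto \operatorname{tr}(\mathbf{e}\mathbf{e}'^{\top}\dot{M})$ on $\mathbb{R}^{3\times 3}$ are linearly independent, since $F$ has rank $2$ while $\mathbf{e}\mathbf{e}'^{\top}$ has rank $1$. Hence their common kernel has dimension $9-2=7=\dim\mathcal{F}$. This same independence shows that $\det$ is a submersion at $F$ within $\mathbb{P}(\mathbb{R}^{3\times 3})$, reconfirming that $\mathcal{F}$ is smooth at $F$ and that the inclusion of \eqref{eq:F-tangspace} into $T(\mathcal{F},F)$ must be an equality.

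The main obstacle is not conceptual but bookkeeping: keeping the identification $T(\mathbb{P}(\mathbb{R}^{3\times 3}), F)\cong F^{\perp}$ from Section~\ref{subsec:projsp} consistent on both sides of \eqref{eq:F-tangspace}, and being careful that the adjugate is rank one (rather than zero) exactly because we work on the open rank-$2$ stratum of $\{\det = 0\}$. No other subtlety arises.
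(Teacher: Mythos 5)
Your proof is correct, and it reaches the same endpoint as the paper: the gradient of $\det$ at a rank-$2$ matrix $F$ is a nonzero scalar multiple of $\mathbf{e}'\mathbf{e}^{\top}$, so the extra linear constraint cutting out $T(\mathcal{F},F)$ inside $F^{\perp}$ is $\mathbf{e}'^{\top}\dot{M}\mathbf{e}=0$. The route you take is a genuine shortcut compared to the paper's: you invoke the Jacobi/adjugate identity $D\!\det(F)(\dot{M})=\operatorname{tr}(\operatorname{adj}(F)\dot{M})$ together with $F\operatorname{adj}(F)=\operatorname{adj}(F)F=0$ to read off $\operatorname{adj}(F)=c\,\mathbf{e}\mathbf{e}'^{\top}$ with $c\neq 0$ in one stroke, whereas the paper re-derives exactly this fact by hand, computing $\partial\det/\partial f_{ij}$ entry by entry via Laplace expansion and then invoking Cramer's rule row by row and column by column to identify $\nabla\det(F)$ as proportional to $\mathbf{e}'\mathbf{e}^{\top}$. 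Your version is shorter and isolates the structural reason (the adjugate of a rank-$2$ $3\times3$ matrix is rank $1$ with the epipoles as its range and corange). You also add an explicit dimension count showing the two linear forms $\operatorname{tr}(F^{\top}\cdot)$ and $\operatorname{tr}(\mathbf{e}\mathbf{e}'^{\top}\cdot)$ are independent and hence cut out a $7$-dimensional subspace equal to $T(\mathcal{F},F)$; the paper leaves this containment-is-equality step implicit (it is of course automatic once one knows $\mathcal{F}$ is a smooth hypersurface in $\mathbb{P}(\mathbb{R}^{3\times3})$ of dimension $7$ and $\nabla\det(F)\neq 0$). Both arguments are sound; yours is slightly cleaner and more self-contained.
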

\begin{proof}
The determinant is the defining equation for $\mathcal{F}$. 
So, the tangent space $T(\mathcal{F}, F)$ is the subspace of \eqref{eq:proj-space} orthogonal to the gradient of the determinant. 
By Laplace expansion, 
\begin{align*}
& \frac{\partial \det(F) }{\partial f_{11}} = \det\begin{pmatrix} f_{22} & 
 f_{23} \\ f_{32} & f_{33} \end{pmatrix}, \\[5pt]
 & \frac{\partial \det(F) }{\partial f_{12}} = - \det\begin{pmatrix} f_{21} & 
 f_{23} \\ f_{31} & f_{33} \end{pmatrix}, \\[5pt] 
 & \frac{\partial \det(F) }{\partial f_{13}} = \det\begin{pmatrix} f_{21} & 
 f_{22} \\ f_{31} & f_{32} \end{pmatrix}. 
\end{align*}
By Cramer's rule, this vector spans the right kernel of  
\begin{equation} \label{eq:F-remove}
\begin{pmatrix}
f_{21} & f_{22} & f_{23} \\
f_{31} & f_{32} & f_{33}
\end{pmatrix}
\end{equation}
if \eqref{eq:F-remove} has rank $2$, or else it is $0$.  Thus the first row of $\nabla\det(F)$ is proportional to $\mathbf{e}$.  Similarly, the second and third rows of $\nabla\det(F)$ are proportional to $\mathbf{e}$, because $\mathbf{e}$ spans the right kernel of $F$
Likewise, each column of $\nabla\det(F)$ is proportional to the left epipole $\mathbf{e}'$ of $F$.
Moreover, $\nabla\det(F)$ does not vanish because $F$ has rank $2$.
It follows that the gradient is a nonzero scalar multiple of $\mathbf{e}' \mathbf{e}^{\top}$.
The lemma is clear.
\end{proof}

\medskip

A particular non-orthonormal basis of the tangent space appears in our condition number formula.  
Namely, one can build fundamental matrices using the parameterization $\mathbf{M}(b)$ in Lemma~\mainref{lem:b-coords} for world scenes and the formula after (\mainref{eq:fund-output}) for $F$.  
Doing so parameterizes $\mathcal{F}$ as
\begin{equation} \label{eq:Fb-param}
F(b) = \begin{pmatrix} b_4 & b_5 & b_6 \\ 
-1 & -b_1 & -b_2 \\ 
-b_3 b_4+b_7 & -b_3 b_5+b_1b_7 & -b_3b_6+b_2b_7
\end{pmatrix}.
\end{equation}
The relevant basis for $T(\mathcal{F}, F)$ is then
\begin{equation} \label{eq:easy-basis}
DF(b)(e_1), \ldots, DF(b)(e_7).
\end{equation}
Note that $DF$ may be computed as $DF_3 \circ DF_2 \circ DF_1$, where $F_1 : \mathbb{R}^7 \dashrightarrow \mathbb{R}^{3 \times 3}$ is the same as \eqref{eq:Fb-param} but with output regarded as in Euclidean space, $F_2 : \mathbb{R}^{3 \times 3} \dashrightarrow \mathbb{S}^8$ sends $M$ to $M / \| M \|_F$, and $F_3 : \mathbb{S}^8 \rightarrow \mathbb{P}(\mathbb{R}^{3 \times 3})$ is the natural quotient map.
One checks that $DF_1(b)$ is represented by 
\begin{small}
\begin{equation} \label{eq:my-jay}
J = \begin{pmatrix}
0 & 0 & 0 & 1 & 0 & 0 & 0 \\
0 & 0 & 0 & 0 & 1 & 0 & 0 \\ 
0 & 0 & 0 & 0 & 0 & 1 & 0 \\
0 & 0 & 0 & 0 & 0 & 0 & 0 \\ 
-1 & 0 & 0 & 0 & 0 & 0 & 0 \\ 
0 & -1 & 0 & 0 & 0 & 0 & 0 \\ 
0 & 0 & -b_4 & -b_3 & 0 & 0 & 1 \\ 
b_7 & 0 &  -b_5 &  0 &  -b_3 &  0 &  b_1 \\ 
0 & b_7 & -b_6 &  0 &  0 &  -b_3 &  b_2
\end{pmatrix},
\end{equation}
\end{small}
\!\!$DF_2(F_1(b))$ sends $\dot{M} \mapsto \frac{1}{\|F_1(b)\|_F} \dot{M} - \frac{\langle \dot{M}, F_1(b) \rangle}{\|F_1(b)\|^3_F} F_1(b)$, and $DF_3$ is the identity up to the identification \eqref{eq:proj-space}.

We can orthonormalize  \eqref{eq:easy-basis} using Lemma~\ref{lem:linear-alg}.  
The associated Gram matrix works out to:
\begin{equation}\label{eq:gram-7pt}
G \, = \, \frac{1}{\|f\|_2^2} J^{\top} J - \frac{1}{\| f \|_2^4} (J^{\top} f) (J^{\top} f)^{\top}   \in \mathbb{R}^{7 \times 7},
\end{equation}
with $J$ as in \eqref{eq:my-jay} and $f = \operatorname{vec}(F_1(b))$.

\section{Background on Pl\"ucker coordinates}

In algebraic geometry, it is well-known that the set of codimension-$d$ linear subspaces of $\mathbb{P}^n_{\mathbb{R}}$  form an irreducible smooth projective variety, called the  \textbf{Grassmannian}:
\begin{equation*}
\operatorname{Gr}(\mathbb{P}^{n-d}_{\mathbb{R}}, \mathbb{P}^{n}_{\mathbb{R}}) = \left\{ L \subseteq \mathbb{P}_{\mathbb{R}}^n : \dim(L) = n-d \right\}\!.
\end{equation*}
In Theorem~\mainref{thm:hurwitz} and Section~\mainref{sec:symbolic-curve}, we use a classic coordinate system (more precisely, embedding) for the Grassmannian.

To explain it, let $L \in \operatorname{Gr}(\mathbb{P}^{n-d}_{\mathbb{R}}, \mathbb{P}^{n}_{\mathbb{R}})$.
Write $L$ as the kernel of a full-rank matrix $M \in \mathbb{R}^{d \times (n+1)}$. 
The \textbf{primal Pl\"ucker coordinates} of $L$ are defined as the maximal minors of $M$:
\begin{equation*} \label{eq:primal-plucker-def}
p(L) \, = \, \left( p_{\mathcal{I}}(L) := \det(M(:,\mathcal{I})) \, : \, \mathcal{I} \in \tbinom{[n+1]}{d} \right)
\end{equation*}
It turns out that $p(L)$ is a well-defined point in $\mathbb{P}_{\mathbb{R}}^{\binom{n+1}{d}-1}$, independent of the choice of $M$.
Alternatively we can write $L$ as the row span of a full-rank matrix $N \in \mathbb{R}^{(n-d+1) \times (n+1)}$.  The \textbf{dual Pl\"ucker coordinates} of $L$ are the maximal minors of $N$:
\begin{equation*} \label{eq:dual-plucker}
q(L) \, = \, \left( q_{\mathcal{J}}(L) := \det(N(:,\mathcal{J})) \, : \, \mathcal{J} \in \tbinom{[n+1]}{n+1-d} \right).
\end{equation*}
Again this is a well-defined point $\mathbb{P}_{\mathbb{R}}^{\binom{n+1}{d}-1}$, independent of the choice of $N$.  
Actually, the primal and dual coordinates are equal up to permutation and sign flips. 
 Specifically, 
\begin{small}
\begin{align*}
&\left( p_{\mathcal{I}}(L) : \mathcal{I} \in \tbinom{[n+1]}{d} \right) 
= 
\left( (-1)^{n+|\mathcal{I}|}q_{[n+1] \setminus \mathcal{I}}(L) : \mathcal{I} \in \tbinom{[n+1]}{d}\right)
\end{align*}
\end{small}
\!\!where $|\mathcal{I}| := \sum_{i \in \mathcal{I}} i$. 

In Theorem~\mainref{thm:hurwitz}, the polynomial $\mathbf{P}_{\mathcal{Y}}$ is in $\binom{n+1}{d}$ variables.  
The variables are the maximal minors of $\mathbf{L}(x)$.  The variables are primal Pl\"ucker coordinates of the subspace  $\operatorname{ker} \mathbf{L}(x) \subseteq \mathbb{P}^n_{\mathbb{R}}$ when $\operatorname{rank} \mathbf{L}(x) = d$.  Otherwise, when $\operatorname{rank} \mathbf{L}(x) < d$, the variables are identically zero.  

\section{Details of Figure~\mainref{fig:surface}}

In this section, we provide detailed information about the 
examples shown in Figure~\mainref{fig:surface}. These examples correspond to two cases that satisfy the conditions outlined in Theorem~\mainref{thm:illposed-world}, which characterizes ill-posed world scenes for the $5$-point problem. The first case involves a ruled quadric surface whose intersection with planes perpendicular to the baseline are circles; the second involves a ruled quadric surface intersecting such planes along straight lines. The ``circles" case is illustrated in Figure~\mainref{fig:surface}(a) and (b), while the ``lines" case is illustrated in Figure~\mainref{fig:surface}(c) and (d). The corresponding world points, relative camera poses, and the quadric surface are shown in the MATLAB code below.  We verified in a computer algebra package that the Jacobian of the forward map indeed drops rank from $20$ to $19$ for both of these world scenes; and moreover, the corresponding instance of the $5$-point problem to solve for the essential matrix has a real double root.

\begin{lstlisting}[caption={The details of the ``circles" example in Fig.~\mainref{fig:surface}.}, label={lst:mycode}]
% This is a simple MATLAB example
Gamma1 = [0.378708974148031; 0.200913848191894; 0.873720791235097];
Gamma2 = [0.93044778587466; 0.277575518534703; 0.853099056736192];
Gamma3 = [0.0841289369354994; 0.35332833995602; 0.882313524352031];
Gamma4 = [0.463900203859958; 0.372419972054743; 0.985487184100939];

X5 = 0.5;
Y5 = 0.3;
Z5 = 641450078126244931881602647087753^(1/2)/24541708048417116 - 1629676066696589/24541708048417116;
Gamma5 = [X5;Y5;Z5]

R = [0.111055681307022, 0.542619911087834, -0.83260450859965;...
-0.137592133483409, 0.838113683992865, 0.527857800460874;...
0.984243384764621, 0.0559382230227225, 0.167737517425718];
T = [0.919751547243792; -0.376825581554874; 0.109816084561784]

coeff = [-0.300349680028666, 0.909079843996704, -0.340584616737265,...
0.758476963496853, +1.70002109656252, -3.77717182902557,...
-1.36529273548917, 3.07789643444671, +0.237908403128233, 0];

[X,Y,Z] = meshgrid(-scale:stepSize:scale,-scale:stepSize:scale,-scale:stepSize:scale);
%%Quadric Surface is the 0-level set of the following expression
V = coeff(1) .* X.^2 + coeff(2) .* Y.^2 + coeff(3) .* Z.^2 + coeff(4) .* X.*Y + coeff(5) .* X.*Z...
+ coeff(6) .* Y.*Z + coeff(7) .* X + coeff(8) .* Y + coeff(9) .* Z + coeff(10);
\end{lstlisting}

\begin{lstlisting}[caption={The details of the ``lines" examples in Fig.~\mainref{fig:surface}.}, label={lst:mycode2}]
% This is a simple MATLAB example
Gamma1 = [ 1.91877114; -1.84484369; 48.90957218];
Gamma2 = [ 4.03401915; -3.62525296; 16.73753803];
Gamma3 = [-3.60723653; 3.07391289; 10.52738283];
Gamma4 = [-4.84178757; 4.29437234; 14.68963401];
Gamma5 = [ 3.00632673; -2.67025726; 14.89110311];

R = [1, 0, 0;...
0, 1, 0;...
0, 0, 1];
T = [0; 0; 1]

[X,Y,Z] = meshgrid(-scale:stepSize:scale,-scale:stepSize:scale,-scale:stepSize:scale);
%Quadric Surface is the 0-level set of the following expression
V = Y .* Z + X .* Z + X + 3 .* Y;
\end{lstlisting}

\end{document}